%% file: mtl_ncp.tex
\documentclass{article} 
\usepackage{iclr2027_conference,times}

\input{my_commands.tex}
\usepackage{url}
\usepackage{booktabs}
\usepackage{multirow}
\usepackage{xcolor}
\usepackage{placeins} 
\usepackage{algorithm}
\usepackage{algpseudocode}
\usepackage{wrapfig}

\newcommand{\method}{MTL-CMO}

\title{Multi-Task Learning of Conditional Mean\\ Operators: applications to dynamical systems and uncertainty quantification}

\author{Sami Chemlal$^*$ \\
CMAP, Ecole Polytechnique \\
\texttt{samychemlal@yahoo.fr}
\And
Thibaut Germain$^*$ \\
CMAP, Ecole Polytechnique \\
\texttt{thibaut.germain@polytechnique.edu}
\AND
Rémi Flamary \\
CMAP, Ecole Polytechnique \\
\texttt{remi.flamary@polytechnique.edu}
\And
Vladimir R. Kostic \\
University of Novi Sad \\
\texttt{vkostic@dmi.uns.ac.rs}
\AND
Karim Lounici \\
CMAP, Ecole Polytechnique \\
\texttt{karim.lounici@polytechnique.edu}
}

\DeclareMathOperator{\cmo}{\msE_{Y \mid X}}
\DeclareMathOperator{\cmor}{\msE^r_{Y \mid X}}
\DeclareMathOperator{\cmot}{\msE^\theta_{Y \mid X}}
\DeclareMathOperator{\cmok}{\msE^{(k)}_{Y \mid X}}

\DeclareMathOperator{\cmon}{\msE^{\text{new}}_{Y \mid X}}
\DeclareMathOperator{\dmo}{\msD_{Y \mid X}}

\DeclareMathOperator{\dmot}{\msD^{\theta}_{Y \mid X}}
\DeclareMathOperator{\dmok}{\msD^{(k)}_{Y \mid X}}
\DeclareMathOperator{\dmokr}{\msD^{(k),r_k}_{Y \mid X}}
\DeclareMathOperator{\dmokt}{\msD^{(k),\theta}_{Y \mid X}}
\DeclareMathOperator{\odmokt}{\tilde{\msD}^{(k),\theta}_{Y \mid X}}
\DeclareMathOperator{\dmokth}{\msD^{(k),\widehat{\theta}}_{Y \mid X}}
\DeclareMathOperator{\dmon}{\msD^{\text{new}}_{Y \mid X}}
\DeclareMathOperator{\px}{\mu}
\DeclareMathOperator{\py}{\nu}
\DeclareMathOperator{\pxy}{\rho}
\DeclareMathOperator{\lx}{L_\mu^2(\mcX)}
\DeclareMathOperator{\lxk}{L_{\mu_k}^2(\mcX)}
\DeclareMathOperator{\ly}{L_\nu^2(\mcY)}
\DeclareMathOperator{\lyk}{L_{\nu_k}^2(\mcY)}
\DeclareMathOperator{\rd}{\mathrm{d}}

\iclrfinalcopy 
\begin{document}

\maketitle
\def\thefootnote{*}\footnotetext{Authors with equal contributions.}

\begin{abstract}
    Estimating conditional statistics and learning representations of a population of conditional distributions are central problems in many data-driven applications, including uncertainty quantification and dynamical systems analysis. Conditional mean operators (CMOs), a class of linear operators between function spaces, resolve these objectives by providing access to a broad class of conditional statistics. However, existing methods typically estimate each CMO independently or constrain it to prespecified function spaces, thereby preventing the exploitation of shared structure across related distributions. \\
    In this work, we posit that related CMOs share finite-dimensional input and output function spaces, and are specialized for each task with a linear operator mapping these spaces. Based on this hypothesis, we introduce MTL-CMO, a multi-task framework that jointly learns shared function spaces and task-specific operators across multiple datasets. We further introduce T-CMO, a transfer learning method that reuses the shared spaces to estimate, in closed form, the operator of a new conditional distribution. We establish statistical guarantees quantifying the benefits of jointly learning the shared function spaces. Our experiments demonstrate that learning shared function spaces improves uncertainty quantification across a broad range of conditional distributions and, when applied to Langevin and plasma dynamics, yields compact representations of complex dynamics that retain physically meaningful information and enable parameter identification.

\end{abstract}

\input{sections/introduction}
\input{sections/background}
\input{sections/method}
\input{sections/numerical_experiment}
\input{sections/conclusion}

\section*{Acknowledgments}
This project received funding from the European Union’s Horizon Europe research and innovation program under grant agreement 101120237 (ELIAS), Fondation de l’Ecole Polytechnique, Hi! PARIS, the French National Research Agency (ANR) through France 2030 program (ANR-23-IACL-0005 and ANR-25-PEIA-0005), NextGenerationEU and Provincial Secretariat for Higher Education and Scientific Research of Vojvodina (Research Grant 003870560 2025 09418 003 000 000 001 04 004).

\bibliography{biblio}
\bibliographystyle{iclr2027_conference}

\appendix
\input{sections/appendix}

\end{document}

%% file: my_commands.tex
\usepackage{mathtools}
\usepackage{amsfonts}
\usepackage{amssymb}
\usepackage{amsthm}
\usepackage{bbm}
\usepackage{bm}
\usepackage{hyperref}
\usepackage{cleveref}

\newcommand{\kl}[1]{{\color{blue}{#1}}}

\providecommand{\Phic}{\Phi^{(k)}_{\theta,c}}
\providecommand{\Psic}{\Psi^{(k)}_{\theta,c}}
\providecommand{\Lk}{L^2(\mu_k\otimes\nu_k)}

\DeclareMathOperator{\spn}{span}

\DeclareMathOperator{\Id}{Id}

\DeclareMathOperator{\Tr}{Tr} 
\DeclareMathOperator{\Diag}{Diag}

\DeclarePairedDelimiterX{\innerp}[2]{\langle}{\rangle}{#1,#2}
\DeclarePairedDelimiterX{\abs}[1]{\lvert}{\rvert}{\ifblank{#1}{\:\cdot\:}{#1}}
\DeclarePairedDelimiterX{\basenorm}[1]{\lVert}{\rVert}{\ifblank{#1}{\:\cdot\:}{#1}}
\makeatletter
\newcommand{\norm}{\@ifstar{\@norm}{\@anorm}}
\newcommand{\@norm}[2][2]{\basenorm{#2}_{#1}}
\newcommand{\@anorm}[2][2]{\basenorm*{#2}_{#1}}
\makeatother

\newcommand{\bs}[1]{\boldsymbol{#1}}

\def\mcA{\mathcal{A}}
\def\mcB{\mathcal{B}}

\def\mcE{\mathcal{E}}
\def\mcF{\mathcal{F}}
\def\mcG{\mathcal{G}}
\def\mcH{\mathcal{H}}

\def\mcJ{\mathcal{J}}

\def\mcL{\mathcal{L}}

\def\mcN{\mathcal{N}}
\def\mcO{\mathcal{O}}

\def\mcR{\mathcal{R}}

\def\mcU{\mathcal{U}}

\def\mcW{\mathcal{W}}
\def\mcX{\mathcal{X}}
\def\mcY{\mathcal{Y}}
\def\mcZ{\mathcal{Z}}

\def\mb#1{\mathbf{#1}}
\def\mbA{\mathbf{A}}
\def\mbB{\mathbf{B}}
\def\mbC{\mathbf{C}}
\def\mbD{\mathbf{D}}
\def\mbE{\mathbf{E}}
\def\mbF{\mathbf{F}}
\def\mbG{\mathbf{G}}
\def\mbH{\mathbf{H}}
\def\mbI{\mathbf{I}}

\def\mbM{\mathbf{M}}
\def\mbN{\mathbf{N}}

\def\mbQ{\mathbf{Q}}
\def\mbR{\mathbf{R}}

\def\mbT{\mathbf{T}}
\def\mbU{\mathbf{U}}
\def\mbV{\mathbf{V}}
\def\mbW{\mathbf{W}}
\def\mbX{\mathbf{X}}
\def\mbY{\mathbf{Y}}
\def\mbZ{\mathbf{Z}}

\def\mba{\mathbf{a}}
\def\mbb{\mathbf{b}}
\def\mbc{\mathbf{c}}

\def\mbm{\mathbf{m}}

\def\mbr{\mathbf{r}}

\def\mbu{\mathbf{u}}
\def\mbv{\mathbf{v}}
\def\mbw{\mathbf{w}}
\def\mbx{\mathbf{x}}
\def\mby{\mathbf{y}}
\def\mbz{\mathbf{z}}

\def\msD{\mathsf{D}}
\def\msE{\mathsf{E}}

\def\msS{\mathsf{S}}

\def\mdE{\mathbb{E}}

\def\mdP{\mathbb{P}}

\def\mdR{\mathbb{R}}
\def\mdS{\mathbb{S}}

\newtheorem{thm}{Theorem}
\crefname{thm}{theorem}{theorems}
\Crefname{thm}{Theorem}{Theorems}
\newtheorem{prp}{Proposition}
\crefname{prp}{proposition}{propositions}
\Crefname{prp}{Proposition}{Propositions}
\newtheorem{rmk}{Remark}
\crefname{rmk}{remark}{remarks}
\Crefname{rmk}{Remark}{Remarks}

\crefname{dfn}{definition}{definitions}
\Crefname{dfn}{Definition}{Definitions}

\crefname{crr}{corollary}{corollaries}
\Crefname{crr}{Corollary}{Corollaries}
\newtheorem{lmm}{Lemma}
\crefname{lmm}{lemma}{lemmas}
\Crefname{lmm}{Lemma}{Lemmas}

\crefname{examp}{example}{examples}
\Crefname{examp}{Example}{Examples}
\newtheorem{assump}{Assumption}
\crefname{assump}{assumption}{assumptions}
\Crefname{assump}{Assumption}{Assumptions}

%% file: sections/introduction.tex
\section{Introduction}

Characterizing the distribution of a target variable $Y$ given a conditioning variable $X$ is key to many statistical learning problem. Beyond solely predicting the conditional expectation, access to the conditional distribution allows the computation of a broad range of conditional statistics, including moments, quantiles, or confidence regions, typically required for uncertainty quantification~\citep{ghanem2017handbook}. Similarly, for stochastic dynamical systems, the conditional distributions of future states given the current state characterizes the dynamic and provides access to physically meaningful quantities like transition probabilities, relaxation times, or invariant measures~\citep{pavliotis2014stochastic}. \\
The conditional mean operator (CMO) provides a natural operator-theoretic representation of a conditional distribution by mapping an observable $f$ of $Y$ to its conditional expectation:
\begin{equation*}
\textstyle
\left[\cmo f\right]\!(x)
= \mdE\left[f(Y)\mid X=x\right].
\end{equation*}
By varying $f$, a single CMO provides access to a broad range of conditional statistics, making it a versatile representation of conditional distributions. For dynamical systems, CMOs are particularly informative: their spectral decompositions reveal coherent dynamical modes associated with characteristic timescales and frequencies. Existing approaches to CMO estimation primarily rely on kernel methods with either fixed kernels~\citep{song2009hilbert,muandet2017kernel} or learned feature representations~\citep{shimizu2024neural,kostic2024ncp}. However, accurately estimating these operators generally requires large datasets, making their estimation challenging when only limited observations are available~\citep{hertel2026verifiable}.\\
However, in many applications, data are collected from a population of related conditional distributions arising, for instance, from different physical parameters, experimental conditions, or subjects. Although the conditional distributions vary across the population, their associated CMOs may share common input and output function spaces, while differing through the linear transformations acting between these spaces. This  structure naturally motivates a multi-task approach~\citep{caruana1997multitask}: the shared function spaces can be learned jointly across tasks, while retaining a task-specific operator for each conditional distribution. Sharing information in this manner can improve the estimation of individual CMOs~\citep{maurer2016benefit}. Moreover, the learned function spaces can be transferred to a new task, for which only the corresponding low-dimensional operator must be estimated, thereby improving data efficiency~\citep{tripuraneni2020theory}. Nevertheless, existing CMO estimators predominantly treat conditional distributions independently, while existing multi-task approaches to conditional modeling are generally designed for specific applications and do not provide a unified operator-learning framework applicable to both uncertainty quantification and dynamical systems.

\noindent
\textbf{Contributions.} In this work, \textbf{(1)} we introduce MTL-CMO, a multi-task framework that jointly learns a population of CMOs. Each task is represented by a task-specific linear operator acting between shared, learnable input/output function spaces. \textbf{(2)} We subsequently introduce T-CMO, a transfer-learning estimator that reuses the learned function spaces to estimate the CMO of a new task in closed-form. \textbf{(3)} We establish statistical guarantees quantifying the benefits of jointly learning the shared input/output function spaces for CMO estimation. \textbf{(4)} We evaluate MTL-CMO and T-CMO on uncertainty-quantification and dynamical-systems benchmarks in both multi-task and transfer-learning settings. Across these experiments, our methods are competitive or outperform state-of-the-art baselines, including methods specifically designed for the corresponding applications.

\noindent
\textbf{Paper organization.}~\Cref{sec:background} introduces the necessary background. \Cref{sec:method} presents the MTL-CMO and T-CMO frameworks, establishes their statistical guarantees, and positions them with respect to related work. \Cref{sec:experiments} presents the numerical experiments, before we conclude the paper.

%% file: sections/background.tex
\section{Background on cmo: definition, estimation and applications}
\label{sec:background}
\noindent
\textbf{Conditional Mean Operator (CMO)~\citep{fukumizu2004dimensionality}.}
Consider a pair of random variables $(X,Y)$ on the product space $\mcX \times \mcY$ with joint distribution $\rho$ and marginals $\mu$ and $\nu$ respectively. We assume that $\rho$ is absolutely continuous w.r.t. the product of marginals, i.e. $\rho \ll \mu \otimes \nu$, with density function square integrable, i.e. $\mathrm{d}\pxy/\mathrm{d}(\px \otimes\py) \in L^2_{\px \otimes \py}(\mcX \times \mcY)$. Then, the conditional mean operator $\cmo: \ly \to \lx$ is well defined and verifies for any $f \in \ly$ and $x \in \mcX$: 
\begin{equation}
    [\cmo f](x) = \mdE[f(Y)\mid X=x] = \int_\mcY f(y)p(x,y)\py(dy),
\end{equation}
where $\lx$ and $\ly$ are separable Hilbert space. 
Since $\cmo$ is an Hilbert-Schmidt operator, and $\|\cmo\|_{\mathrm{op}} = 1$ with $\cmo\mb1_\mcY = \mb1_\mcX$, it admits a singular value decomposition (SVD) such that:
\begin{equation}
    \textstyle
    \cmo = \sum_{i \geq 0} \sigma_i u_i \otimes v_i, 
    \qquad
    (\sigma_0,u_0,v_0) = (1,\mb1_\mcX, \mb1_\mcY)
\end{equation}
where $\{u_i\}_{i \geq 0}$ and $\{v_i\}_{i \geq 0}$ are singular functions forming orthonormal basis, $\{\sigma_i\}_{i\geq 0}$ the decreasing sequence of positive singular values, and $\mb1_\mcX$ an indicator function. Hence, the conditional mean operator can be approximated arbitrary well, and by the Eckart-Young-Mirsky Theorem the best rank-$r$ approximation is the truncation $\textstyle \cmor = \sum_{i =0}^{r-1} \sigma_i u_i \otimes v_i$ with approximation error $\textstyle \|\cmo - \cmor \|_{\mathrm{op}} = \sigma_r$.
It follows that for any $f \in \ly$ and $x \in \mcX$, the conditional expectation can be approximated with: 
\begin{equation}
\label{eq:cs_approx}
\textstyle
    \left[\cmo f\right]\!(x) \approx 
\left[\cmor f\right]\!(x)
=
\mdE_\nu[f(Y)]
+
\sum_{i=1}^{r-1}
\sigma_i u_i(x)\,
\mdE_\nu[v_i(Y)f(Y)],
\end{equation}

\noindent
\textbf{Operator estimation with Neural Conditional Probability.}
The Neural Conditional Probability framework (NCP) learns a finite-rank approximation of a conditional mean operator from paired observations~\citep{kostic2024ncp}. It first introduce the deflated operator $\dmo = \cmo - \mb1_\mcX \otimes \mb1_\mcY$ to remove the trivial singular component. For a latent dimension $r$, NCP parametrizes with neural networks the approximation of $\cmo$ through its truncated SVD
\begin{equation}
    \textstyle
    \cmot = \mb1_\mcX \otimes \mb1_\mcY + \dmot 
    = \mb1_\mcX \otimes \mb1_\mcY + \sum_{i =1}^{r} \sigma_i^\theta u_i^\theta \otimes v_i^\theta,
\end{equation}
with nontrivial singular values $\textstyle \sigma^\theta \in [0,1)^{r}$, and singular functions $u^\theta(x) \triangleq [u_1^\theta(x),\ldots,u_r^\theta(x)]^\top$ and $v^\theta(y) \triangleq [v_1^\theta(y),\ldots,v_r^\theta(y)]^\top$.
Denoting $\textstyle q_\theta(x,y) = \sum_{i =1}^{r} \sigma_i^\theta u_i^\theta(x)v_i^\theta(y)$, the deflated operator is estimated by minimizing a regularized problem with a data-fitting term measuring the discrepancy between the estimated and true deflated operators 
\begin{equation}
    \begin{aligned}
    \mcL(\theta) & = \|\dmot - \dmo\|_{\mathrm{HS}}^2
        - \|\dmo\|_{\mathrm{HS}}^2 \\
    & = \mdE_{\mu \otimes \nu}[q^2_\theta(X,Y)]-2(\mdE_{\rho}[q_\theta(X,Y)]-\mdE_{\mu \otimes \nu}[q_\theta(X,Y)]),   
    \end{aligned}
\end{equation}
and a regularization encouraging the functions to be centered and orthonormal in their $L^2$ spaces.

\noindent
\textbf{Application: conditional uncertainty quantification.} 
Beyond conditional expectations, the CMO provides access to conditional probabilities. Indeed, for any measurable set $B\subseteq\mcY$, the indicator function $\mb1_B \in \ly$, and the conditional probability can be evaluated with
\begin{equation}
    \textstyle
    \mdP(Y \in B \mid X=x) \approx \widehat{\mdP}(Y \in B \mid X=x) = \left[\widehat{\cmot} \mb1_B\right]\!(x).
\end{equation}
Consequently, an estimated CMO can be used to characterize the uncertainty of $Y$ conditionally on $X{=}x$, for instance through a $(1-\alpha)$ conditional confidence region $C_\alpha(x)\subseteq\mcY$ satisfying
$
\textstyle
\left[\cmo \mb1_{C_\alpha(x)}\right]\!(x)
=
\mathbb P(Y\in C_\alpha(x)\mid X=x)
\geq 1-\alpha.
$
For scalar $Y$, taking $f_t(y)=\mathbf 1_{\{y\leq t\}}$ yields
$\widehat F(t\mid x)=\widehat{\mathbb P}(Y\leq t\mid X=x)$ with conditional quantiles verifying
$Q_\alpha(x)=\inf\{t \mid \widehat F(t\mid x)\geq\alpha\}$. 

\noindent
\textbf{Application: representation of dynamical systems.}
Consider a dynamical system that is a time-homogeneous Markov process $(X_t)_{t\geq0}$ with invariant measure $\pi$. For any lag $t>0$, the transition from $X_0$ to $X_t$ is characterized by the transition operator $[A_t f](x) = \mdE[f(X_t) \mid X_0 = x]$ with $f \in L^2_\pi(\mcX)$. Hence, each $T_t$ is precisely the CMO of the pair $(X_0,X_t)$ and describes how observables of the future state depend on the initial state. Under suitable assumptions on the process, the family $\{T_t\}_{t>0}$ forms a semi-group admitting an infinitesimal generator $G = \lim_{t\to0^+}(A_tf-f)/t$ with a discrete spectrum such that for any $t>0$
\begin{equation}
    \textstyle
    [A_t f](x) = [\exp(tG)f](x) = \sum_{j \geq 0} e^{t\lambda_j}\langle f,g_j\rangle f_j(x),
\end{equation}
where $\lambda_j$ and $(f_j,g_j)$ denote the eigenvalues and corresponding right and left eigenfunctions of $G$. In particular, the transition operators $\{A_t\}_{t>0}$ share the spectral modes of the infinitesimal generator, while their eigenvalues evolve with the lag as $e^{t\lambda_j}$. Estimating these CMOs from sampled trajectories consequently provides access to the spectral structure of the underlying generator~\citep{koopman1931hamiltonian,singer1976representation,schmid2010dynamic}. The generator representation is central to data-driven dynamical systems analysis, as its eigenfunctions and eigenvalues encode characteristic spatial structures and temporal scales of the dynamics. Since errors in the estimated transition operators propagate to the generator, accurate CMO estimation is essential for reliably recovering the underlying dynamics. 
For interested readers, we provide further details on operator-theoretic representation of dynamical systems and their estimation in Appendix~\ref{app:op_ds}. 

%% file: sections/method.tex
\section{Multi-task and Transfer learning for cmo}
\label{sec:method}

\subsection{Multi-task learning of conditional mean operators} 
\label{sec:mtl-cmo}

\noindent
\textbf{MTL strategy \& assumptions.} 
We propose a multi-task approach to estimate CMOs from observations of related joint distributions. The approach learns input and output function spaces shared across tasks, while each task retains its own operator representation within those spaces. \\
Formally, we consider $K$ pairs of random variables $\textstyle (X^{(k)},Y^{(k)})\sim\rho_k$, with marginals $\mu_k$ and $\nu_k$. We denote by $\textstyle \cmok:  L^2_{\nu_k}(\mcY) \to L^2_{\mu_k}(\mcX)$ the associated conditional mean operator and $\dmok$ its corresponding deflated operator.
We make the following assumptions: \\
\textbf{(A1) Hilbert-Schmidt operator.}~For any $k \in [K]$, $\dmok$ is a Hilbert-Schmidt operator and its singular spectrum is concentrated on a small number $r_k$ of dominant components, so that it is well approximated by its rank-$r_k$ SVD truncation 
\begin{equation}
\textstyle \dmokr
=\sum_{i=1}^{r_k}
\sigma_i^{(k)}u_i^{(k)}\otimes v_i^{(k)}.
\end{equation}
\textbf{(A2) Shared functional spaces.}
We assume that the dominant singular functions of the different tasks can be represented within shared finite-dimensional functional spaces. More precisely,
there exist two $d$-dimensional functional spaces, with
$d\geq\max_k r_k$, such that for any $k \in [K]$ and $i \leq r_k$ 
\begin{equation}
u_i^{(k)}
\in
\operatorname{span}\{\phi_{1},\ldots,\phi_{d}\},
\qquad
v_i^{(k)}
\in
\operatorname{span}\{\psi_{1},\ldots,\psi_{d}\}.
\label{eq:shared_space_assumption}
\end{equation}
Thus, although each task may have a different effective rank and distinct singular functions, its dominant dependence structure is represented within the same pair of functional spaces. This shared representation provides the inductive bias that enables information to be shared across tasks.

\noindent
\textbf{Multi-task low-rank operator parametrization.}~We represent the shared functional spaces with two dictionaries of neural-networks
\begin{equation}
    \Phi_\theta(x) \triangleq \{\phi_1^{\theta}(x), \ldots,\phi_d^{\theta}(x)\}
    \qquad 
    \Psi_\theta(y) \triangleq \{\psi_1^{\theta}(y), \ldots,\psi_d^{\theta}(y)\}.
\end{equation} 
Although these dictionaries are shared across tasks, the orthogonality
constraints associated with the singular decomposition are task-dependent: for each task $k$, the left/right singular functions of the delfated operator $\dmok$ are orthonormal in $\lxk$ and $\lyk$ respectively. Consequently,
a direct parametrization of each operator through its singular value decomposition,
as in the NCP framework (see \Cref{sec:background}), would require enforcing
different orthogonality constraints for every task, making the optimization
cumbersome. 
To circumvent these task-specific constraints, we parametrize each
deflated operator through an unconstrained low-rank factorization:
\begin{equation}
    \dmokt = \msS_{\Phi_\theta}^{(k)} \mbA^{(k)}\mbB^{(k)\top}\msS_{\Psi_\theta}^{(k)*}
\end{equation}
where $(\mbA^{(k)},\mbB^{(k)}) \in (\mdR^{d \times r_k})^2$.
Here, $\textstyle \msS_{\Phi_\theta}^{(k)}:\mdR^d\to\lxk$ and
$\textstyle \msS_{\Psi_\theta}^{(k)}:\mdR^d\to\lyk$ denote the synthesis maps associated
with the shared neural dictionaries; they map any $\alpha \in \mdR^d$ to the functions 
\begin{equation}
    \label{eq:synthesis_map}
    \textstyle
    [\msS_{\Phi_\theta}^{(k)}\alpha](x)
    \triangleq
    \sum_{j=1}^d\alpha_j\phi_j^\theta(x),
    \qquad
    [\msS_{\Psi_\theta}^{(k)}\alpha](y)
    \triangleq
    \sum_{j=1}^d\alpha_j\psi_j^\theta(y).
\end{equation}
Importantly, despite the fact that synthesis maps are identical across tasks, their adjoints depend on the task-specific inner products. In particular, the adjoint of
$\msS_{\Psi_\theta}^{(k)}$ is given by
\begin{equation}
    \textstyle
    \msS_{\Psi_\theta}^{(k)*}f
    =
    \left(
        \innerp{\psi_j^\theta}{f}_{\nu_k}
    \right)_{j\in[d]},
    \qquad f\in\lyk.
\end{equation}

\noindent
\textbf{Retrieving task-specific SVD from the low-rank parametrization.}
For each task $k$, the singular value decomposition of the deflated operator can
be recovered \emph{post hoc} by centering and whitening the shared dictionaries with their
task-specific Gram matrices that can be estimated from data
\begin{equation}
    \mbG_{\Phi_{\theta\!,c}}^{(k)}
    \triangleq
    \mdE_{\mu_k}\!\left[
        \Phi_{\theta\!,c}^{(k)}(X)\Phi_{\theta\!,c}^{(k)}(X)^\top
    \right],
    \qquad
    \mbG_{\Psi_{\theta\!,c}}^{(k)}
    \triangleq
    \mdE_{\nu_k}\!\left[
        \Psi_{\theta\!,c}^{(k)}(Y)\Psi_{\theta\!,c}^{(k)}(Y)^\top
    \right],
\end{equation}
with $\Phi^{(k)}_{\theta\!,c}(x)= \Phi_{\theta} - \mdE_{\mu_k}[ \Phi_{\theta}(X)]$ and $\Psi^{(k)}_{\theta\!,c}(y)= \Psi_{\theta} - \mdE_{\nu_k}[ \Psi_{\theta}(Y)]$. Consider the matrix SVD
\begin{equation}
    \mbG_{\Phi_{\theta\!,c}}^{(k)1/2}
    \mbA^{(k)}\mbB^{(k)\top}
    \mbG_{\Psi_{\theta\!,c}}^{(k)1/2}
    =
    \mbU^{(k)}\Diag(\bs\sigma^{(k)})\mbV^{(k)\top}.
\end{equation}
Denoting $\mbu_i^{(k)}$ and $\mbv_i^{(k)}$ the $i^{th}$ columns of $\mbU^{(k)}$ and $\mbV^{(k)}$,
the singular functions are given by
\begin{equation}
    \widetilde{u}_i^{(k)}
    =
    \msS_{\Phi_{\theta\!,c}}^{(k)}
    \mbG_{\Phi_{\theta\!,c}}^{(k)-1/2}\mbu_i^{(k)},
    \qquad
    \widetilde{v}_i^{(k)}
    =
    \msS_{\Psi_{\theta\!,c}}^{(k)}
    \mbG_{\Psi_{\theta\!,c}}^{(k)-1/2}\mbv_i^{(k)},
\end{equation}
where 
$\mbG_{\Phi_{\theta\!,c}}^{(k)-1/2}$ and
$\mbG_{\Psi_{\theta\!,c}}^{(k)-1/2}$
are square root Moore--Penrose inverses.
By construction, the functions $\textstyle \{\widetilde{u}_i^{(k)}\}_{i \in [r_k]}$ and $\textstyle \{\widetilde{v}_i^{(k)}\}_{i \in [r_k]}$ are  orthonormal in their respective space $\lxk$ and $\lyk$,
and therefore yield the task-specific deflated operator with singular decomposition
\begin{equation}
    \textstyle
    \odmokt
    =
    \sum_{i=1}^{r_k}
    \sigma_i^{(k)}
    \widetilde u_i^{(k)}
    \otimes
    \widetilde v_i^{(k)}.
\end{equation}
Thus, the proposed factorization avoids imposing orthogonality constraints
during training while retaining access to an orthonormal, task-specific
spectral representation after optimization.

\noindent
\textbf{Multi-task learning protocol (MTL-CMO).}
At the population level, we jointly estimate the $K$ deflated operators by solving
\begin{equation}
    \min_{\theta, \{\mbA^{(k)},\mbB^{(k)}\}_{k=1}^K} \sum_{k=1}^K \underbrace{\|\dmokt - \dmok\|^2_{\mathrm{HS}} - \|\dmok\|^2_{\mathrm{HS}}}_{\mcL^{(k)}} + \lambda \underbrace{(\|\mbA^{(k)}\|^2_\mathrm{F} + \|\mbB^{(k)}\|^2_\mathrm{F})}_{\mcR^{(k)}}.
\end{equation}
The data-fitting term $\textstyle \mcL^{(k)}$ measures the discrepancy between the estimated deflated operator and its population counterpart, following the NCP framework (see \Cref{sec:background}). However, unlike NCP, the regularization $\textstyle \mcR^{(k)}$ does not include orthogonality penalizations, it only imposes a ridge regularization on the matrices $(\mbA^{(k)},\mbB^{(k)})$ to resolve the scaling ambiguity of the factorization.\\
From an empirical perspective, given a set of observations $\textstyle \{(x^{(k)}_i,y^{(k)}_i)\}_{i \in [n_k]}$, we denote $\Phi_\theta(\mbX^{(k)}) = \{\Phi_\theta(x^{(k)}_i)\}_{i\in[n_k]}$ and $\Psi_\theta(\mbY^{(k)}) = \{\Psi_\theta(y^{(k)}_i)\}_{i\in[n_k]}$ the feature matrices. Considering, the matrices 
\begin{equation}
    \mbQ^{(k)} = \Phi_\theta(\mbX^{(k)})\mbA^{(k)}\mbB^{(k)\top}\Psi_\theta(\mbY^{(k)})^\top
    \quad 
    \text{and}
    \quad
    \mbH^{(k)} = \mbI_{n_k} - 1/n_k \mb1\mb1^\top,
\end{equation} 
the data-fitting loss $\mcL^{(k)}$  admits an unbiased empirical estimator defined by 
\begin{equation}
    \widehat{\mcL^{(k)}} \triangleq \frac{\|\mbQ^{(k)}\|^2_\mathrm{F} - \|\Diag(\mbQ^{(k)})\|^2_\mathrm{2}}{n_k(n_k-1)} - \frac{2}{n_k-1} \Tr(\mbH^{(k)}\mbQ^{(k)}).
\end{equation}
The multi-task training procedure and the pseudo-code can be found in Appendix~\ref{app:method_app}.

\noindent
\textbf{Related work.}
Standard multi-task representation learning  methods often share an input feature map across task-specific predictors~\citep{caruana1997multitask,maurer2016benefit}. MTL-CMO extends this principle to operators by learning input and output function spaces, whose corresponding synthesis maps~\Cref{eq:synthesis_map} act in spaces defined by each task's marginals. This differs from conditional mean embedding, which typically uses a prescribed reproducing kernel Hilbert space~\citep{song2009hilbert,muandet2017kernel}, and from NCP, which estimates function spaces but for only a single CMO~\citep{kostic2024ncp}. A detailed related work is provided in Appendix~\ref{app:related_work}.

\subsection{Transfer learning of conditional mean operators in closed form (T-CMO)}
\label{sec:T-CMO}
We now suppose that the shared functional space have been learned and are now frozen: $\Phi_{\widehat{\theta}}$ and $\Psi_{\widehat{\theta}}$. 
We aim to estimate the CMO $\cmon$ associated to a new pair of random variable $(X^{\text{new}},Y^{\text{new}})$.  Rather than directly enforcing the low-rank structure, we parametrize the deflated operator with a matrix $\mbM \in \mdR^{d \times d}$ such that: 
\begin{equation}
    \dmon = \msS_{\Phi_{\widehat{\theta}\!,c}}^{\text{new}} \mbM\msS_{\Psi_{\widehat{\theta}\!,c}}^{\text{new}*}.
\end{equation}
Denoting $\textstyle \mbC_{\Phi_{\widehat{\theta}\!,c}\Psi_{\widehat{\theta}\!,c}}^{\text{new}} = \mdE_{\rho_{\text{new}}}[\Phi_{\widehat{\theta}\!,c}(X^{\text{new}})\Psi_{\widehat{\theta}\!,c}(Y^{\text{new}})^\top]$ the cross-covariance matrix, the loss
\begin{equation}
    \mcL^{\text{new}} = \Tr\left(\mbG_{\Psi_{\widehat{\theta}\!,c}}^{\text{new}}
    \mbM^\top
    \mbG_{\Phi_{\widehat{\theta}\!,c}}^{\text{new}}
    \mbM\right) - 2 \Tr\left(\mbC_{\Phi_{\widehat{\theta}\!,c}\Psi_{\widehat{\theta}\!,c}}^{\text{new}\top}\mbM\right),
\end{equation}
yields a quadratic problem in $\mbM$ with closed-form solution
\begin{equation}
    \mbM^* = 
    \mbG_{\Phi_{\widehat{\theta}\!,c}}^{\text{new}^{-1}}
    \mbC_{\Phi_{\widehat{\theta}\!,c}\Psi_{\widehat{\theta}\!,c}}^{\text{new}}
    \mbG_{\Psi_{\widehat{\theta}\!,c}}^{\text{new}^{-1}}.
\end{equation}
In a post-hoc phase, the SVD of the deflated operator can be retrieved following the procedure presented in the previous section and then truncated to retrieve its low-rank modeling.

\subsection{Statistical guarantees}

We now quantify the benefit of jointly learning the shared spaces through a statistical learning rate. We provide below a simplified result whose complete statement and proof is available in Appendix~\ref{sec:theory}, alongside additional results, including a guarantee for transfer to a new task.

\begin{thm}[Informal, see \Cref{thm:rate}]
Assume $K$ tasks with $n$ observation each, dictionary classes $(\Phi_\Theta,\Psi_\Theta)$ with $d$ bounded neural functions, and uniformly bounded task-specific matrices $\mbA^{(k)}$ and $\mbB^{(k)}$.
With high probability, the trained model $\textstyle \widehat h{=}(\widehat\theta,\{\widehat{\mbA}^{(k)},\widehat{\mbB}^{(k)}\}_{k \in [K]})$ satisfies
\begin{equation}
    \frac{1}{K} \sum_{k=1}^{K} \| \dmokth - \dmok\|^2_{\mathrm{HS}} -\mcA_K \leq \tilde{\mcO}\!\left( \frac{\sqrt{d} \big(\mathrm{Comp}(\Phi_\Theta)+\mathrm{Comp}(\Psi_\Theta)\big)}{\sqrt{nK}}+\frac{d}{\sqrt{n}}\right)
\end{equation}
where $\mcA_K$ is the irreducible risk, $\mathrm{Comp}(\cdot)$ is the Gaussian complexity of a dictionary class and $\tilde{\mcO}$ hides the logarithmic term, and a multiplicative factor depending on the boundedness constants.
\end{thm}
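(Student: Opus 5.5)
The plan is a standard excess-risk decomposition for multi-task representation learning, in the spirit of \citet{maurer2016benefit} and \citet{tripuraneni2020theory}, applied to the NCP loss of each task. Let $h=(\theta,\{\mbA^{(k)},\mbB^{(k)}\}_k)$ and let $\mcL(h)=\frac1K\sum_k \mcL^{(k)}(h)$ be the population objective. By definition, $\mcL^{(k)}=\|\dmokt-\dmok\|_{\mathrm{HS}}^2-\|\dmok\|_{\mathrm{HS}}^2$, so the quantity to control is $\mcL(\widehat h)-\inf_h \mcL(h)$, with $\mcA_K$ the averaged irreducible term at the best hypothesis in the class. Writing $h^\star$ for a (near-)minimizer, I would split
\[
\mcL(\widehat h)-\mcL(h^\star)\le 2\sup_h\big|\widehat{\mcL}(h)-\mcL(h)\big| ,
\]
since $\widehat h$ minimizes the (regularized) empirical objective. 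The ridge term is handled by restricting the matrices to a bounded ball, with radius equal to the boundedness constants. The whole task then reduces to a uniform deviation bound for the average over tasks of the U-statistic estimators $\widehat{\mcL^{(k)}}$.

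First, I would handle the U-statistic structure. Each $\widehat{\mcL^{(k)}}$ is a second-order U-statistic in the kernel $q_k(x,y)=\Phi_\theta(x)^\top\mbA^{(k)}\mbB^{(k)\top}\Psi_\theta(y)$, plus an empirical mean of $q_k(x_i,y_i)$ and the centering terms. By boundedness of the features (say $\|\Phi_\theta\|,\|\Psi_\theta\|\le \sqrt d\, c$) and of the matrices, $|q_k|\lesssim d$. Hoeffding's decomposition, or the trick of averaging over disjoint pairs, reduces the U-process to i.i.d.\ empirical processes up to constants. McDiarmid's inequality over all $nK$ samples then gives concentration of the supremum around its mean, with a deviation term $d\sqrt{\log(1/\delta)/(nK)}$.

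Second, I would bound the expected supremum by the Gaussian complexity of the composite class $\{(x_{ki},y_{ki})\mapsto q_k(x_{ki},y_{ki})\}$ on $nK$ points. The quadratic term $q_k^2$ is Lipschitz on the bounded range, so contraction applies with constant of order $d$. The core step is a chain-rule argument for Gaussian complexities, as in \citet{tripuraneni2020theory}, to separate the shared part from the task-specific part. The output $q_k$ is bilinear in $(\Phi_\theta,\Psi_\theta)$, and I would linearize as
\[
q_k-q'_k=(\Phi_\theta-\Phi_{\theta'})^\top \mbA\mbB^\top\Psi_\theta+\Phi_{\theta'}^\top\mbA\mbB^\top(\Psi_\theta-\Psi_{\theta'})+\dots
\]
This yields a term $\sqrt d\,(\mathrm{Comp}(\Phi_\Theta)+\mathrm{Comp}(\Psi_\Theta))/\sqrt{nK}$ for the shared dictionaries, since they are learned from all $nK$ points, the $\sqrt d$ coming from the $d$ coordinates. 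It also yields a term for the task-specific linear maps $\mbA^{(k)}\mbB^{(k)\top}$ on fixed features, which is a bounded bilinear class in dimension $d\times d$. Its complexity per task is of order $d/\sqrt n$, via the Frobenius ball, and averaging over $K$ independent tasks does not improve it, which gives the $d/\sqrt n$ term. Logs from the chain rule and from covering are absorbed in $\tilde{\mcO}$.

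The main obstacle is the chain-rule step for the bilinear composition with two shared dictionaries and task-dependent centering: the empirical centering $\mbH^{(k)}$ couples samples within a task. I would try to absorb the centering into the deviation term by noting that it contributes an $O(d/n)$ correction, and apply the Tripuraneni et al.\ decomposition separately to $\Phi$ and $\Psi$, each time freezing the other dictionary and using boundedness as the Lipschitz constant. Combining the two steps and dividing out constants gives the displayed rate.
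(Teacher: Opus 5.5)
Your skeleton matches the paper's proof. It has the same four ingredients: reduction to a uniform deviation, McDiarmid over all $nK$ samples, Hoeffding's permutation representation turning each within-task U-statistic into $n/2$ i.i.d.\ pair-blocks, and a Gaussian chain rule in which the bilinear heads cost a Frobenius-ball term of order $d/\sqrt n$ that does not improve with $K$. Two choices differ harmlessly. The paper uses the log-free chain rule of Maurer et al.\ (\cref{lmm:maurer}), and it keeps $\varphi(t)=t^2+2t$ inside the head class, bounding its Lipschitz constant and $Q$ by viewing $p_{\mbW}^2$ as linear in $\mbW\otimes\mbW$ (\cref{lmm:heads}). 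A Tripuraneni-type rule after a Gaussian contraction would also work, at the price of a $\log(nK)$ factor.

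The gap is in the step you call the main obstacle: it is not an obstacle, and your workaround does not hold as stated. Write $\mbW=\mbA^{(k)}\mbB^{(k)\top}$ and $\bar\Phi,\bar\Psi$ for the within-task feature means. Replacing empirical by population centering changes the loss by a multiple of $(\bar\Phi-\mdE_{\mu_k}\Phi_\theta)^\top\mbW(\bar\Psi-\mdE_{\nu_k}\Psi_\theta)$. This is $O(d/n)$ in expectation for a \emph{fixed} $\theta$. What enters your bound, however, is its supremum over $\theta$. That supremum involves the uniform deviations of the per-task dictionary means, hence the complexity of the dictionaries on $n$ rather than $nK$ samples. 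It is not $O(d/n)$ in general, and absorbing it would add an unpooled term that the theorem does not contain. Moreover, the population-centered features $\Phi_\theta-\mdE_{\mu_k}\Phi_\theta$ are task-dependent, so they no longer form a shared representation for the chain rule.

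The paper needs none of this. Since $\frac{2}{n-1}\Tr(\mbH\mbQ)=\frac2n\sum_i q_{ii}-\frac{2}{n(n-1)}\sum_{i\neq j}q_{ij}$, the loss is exactly \eqref{eq:loss_expansion}. That is an unbiased U-statistic with kernel $\varphi(q_{ij})$ on crossed pairs of \emph{uncentered} features, plus a diagonal mean. The linear cross term therefore goes through the same pair-block reduction and chain rule as the quadratic one.

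Three further points need repair.
\begin{enumerate}
\item Applying the chain rule ``separately to $\Phi$ and $\Psi$, each time freezing the other dictionary'' does not handle the joint supremum over $\theta$, since the two dictionaries share $\theta\in\Theta$ (e.g.\ through a common encoder). The paper instead concatenates them into a single representation. For the off-diagonal part this is $h_\theta(\zeta)=(\Phi_\theta(x),\Psi_\theta(y'),\Phi_\theta(x'),\Psi_\theta(y))\in\mdR^{4d}$ on pair-blocks $\zeta=((x,y),(x',y'))$; for the diagonal part it is $(\Phi_\theta(x),\Psi_\theta(y))\in\mdR^{2d}$. The Gaussian average of the concatenation is bounded by the sum of the blockwise averages, which yields $\mathfrak G_{n/2,K}(\mcF_\Phi)+\mathfrak G_{n/2,K}(\mcF_\Psi)$. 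Your displayed linearization of $q_k-q_k'$ is precisely the Lipschitz bound of the head on this concatenated input, so the fix is immediate.
\item The composite class on the $nK$ diagonal points $(x_{ki},y_{ki})$ that you write down only covers the diagonal mean. The U-part is evaluated at crossed pairs $(x_i,y_j)$, which is why the representation must act on pair-blocks.
\item The basic inequality you start from holds for the constrained ERM over $\mcW_\rho$, which is the estimator the paper analyzes (up to an optimization slack). It does not hold for the ridge-penalized minimizer.
\end{enumerate}
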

This result extends the statistical benefit of multi-task representation learning~\citep{maurer2016benefit} to operator estimation. The key intuition is that all tasks contribute to the learning of the shared function spaces: the cost of learning the neural dictionaries is therefore spread over all $nK$ observations. Once the spaces are learned, each operator still requires its own matrices, giving a task-specific term that depends on $d/\sqrt{n}$ alone.

%% file: sections/numerical_experiment.tex
\section{Experimental results}
\label{sec:experiments}

\subsection{Uncertainty quantification: Conditional distributions estimation}
\noindent
\paragraph{Experiment objective.} We evaluate MTL-CMO, and subsequently T-CMO, performances for uncertainty quantification on population of related synthetic conditional distributions by comparing estimated conditional cumulative distribution functions (CCDFs) to the ground truths.

\noindent
\textbf{Datasets.} 
We consider four families of conditional distributions, where the conditioning variable modifies the conditional expectation. Once conditioned, \textbf{CD1} and \textbf{CD2} are bimodal Gaussian
mixtures with task-dependent mixture weights, amplitudes and scales. \textbf{CD3} are
Student-$t$ distributions with task-dependent location, scale and
degrees of freedom. \textbf{CD4} are
skew-normal distributions with task-dependent location, scale and skewness.
While \textbf{CD1 }has an univariate conditioning others have a multivariate conditioning.
Thus, \textbf{CD1} to \textbf{CD4} progressively introduce complex conditioning,
heavy tails, and asymmetric conditional distributions. Families are detailed in Appendix~\ref{app:uq_families}.

\begin{table}[t]
\centering
\caption{ Mean 1-Wasserstein distance to the ground-truth conditional distribution averaged over 100 tasks. Scores in $<\!\text{mean}\!>\pm <\!\text{std}\!>$ over 10 training seeds. Lower is better. \textbf{First} and \underline{second}.}
\label{tab:cd_w1}
\resizebox{\linewidth}{!}{
\begin{tabular}{clcccc}
\toprule
& Method
& \textbf{CD1} & \textbf{CD2} & \textbf{CD3} & \textbf{CD4} \\
\midrule

\multirow{4}{*}{\rotatebox[origin=c]{90}{Multi-task}}
& MTL-CMO \textbf{(ours)}
& \textbf{0.088 $\pm$ 0.004}
& \textbf{0.102 $\pm$ 0.002}
& \textbf{0.106 $\pm$ 0.003}
& 0.070 $\pm$ 0.001 \\

& Pooled-NCP~\citep{kostic2024ncp}
& \underline{0.091 $\pm$ 0.002}
& 0.120 $\pm$ 0.009
& 0.139 $\pm$ 0.008
& 0.077 $\pm$ 0.008 \\

& MTL-MDN~\citep{bishop1994mixture}
& 0.110 $\pm$ 0.001
& 0.138 $\pm$ 0.005
& 0.216 $\pm$ 0.009
& 0.107 $\pm$ 0.005 \\

& DeepJMQR~\citep{rodrigues2020beyond}
& \underline{0.091 $\pm$ 0.002}
& \underline{0.114 $\pm$ 0.005}
& 0.125 $\pm$ 0.008
& \underline{0.066 $\pm$ 0.004} \\

\midrule

\multirow{8}{*}{\rotatebox[origin=c]{90}{Single-task}}
& NCP~\citep{kostic2024ncp}
& 0.096 $\pm$ 0.001
& 0.167 $\pm$ 0.037
& 0.185 $\pm$ 0.001
& 0.150 $\pm$ 0.002 \\

& CFM~\citep{lipman2023flow}
& 0.112 $\pm$ 0.001
& 0.182 $\pm$ 0.021
& 0.170 $\pm$ 0.006
& 0.150 $\pm$ 0.005 \\

& MDN~\citep{bishop1994mixture}
& 0.115 $\pm$ 0.002
& 0.259 $\pm$ 0.007
& 0.265 $\pm$ 0.008
& 0.218 $\pm$ 0.003 \\

& NSF~\citep{durkan2019neural}
& 0.109 $\pm$ 0.002
& 0.118 $\pm$ 0.013
& 0.129 $\pm$ 0.012
& 0.075 $\pm$ 0.006 \\

& Engression~\citep{shen2025engression}
& 0.143 $\pm$ 0.001
& 0.147 $\pm$ 0.001
& 0.129 $\pm$ 0.002
& 0.074 $\pm$ 0.001 \\

& Nadaraya--Watson~\citep{nadaraya1964estimating}
& 0.096 $\pm$ 0.001
& 0.152 $\pm$ 0.003
& \underline{0.109 $\pm$ 0.002}
& \textbf{0.064 $\pm$ 0.001} \\

& NGBoost~\citep{duan2020ngboost}
& 0.228 $\pm$ 0.003
& 0.284 $\pm$ 0.005
& 0.278 $\pm$ 0.004
& 0.242 $\pm$ 0.006 \\

& FlexCode~\citep{izbicki2017converting}
& 0.245 $\pm$ 0.001
& 0.225 $\pm$ 0.000
& 0.321 $\pm$ 0.003
& 0.327 $\pm$ 0.003 \\

\bottomrule
\end{tabular}
}
\end{table}
\begin{figure}[t]
    \centering
    \includegraphics[width=1\linewidth]{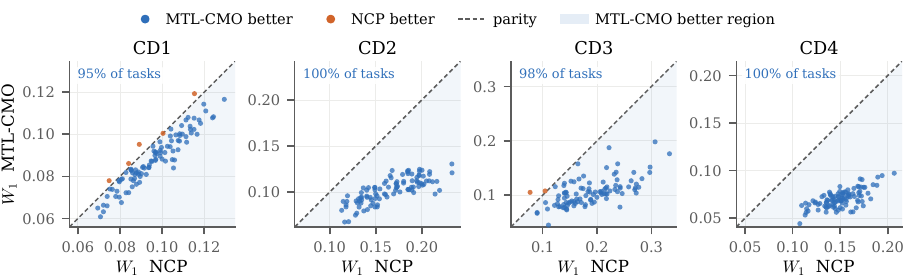}\vspace{-5mm}
        \caption{Comparison between MTL-CMO and the single-task NCP for representing CDFs on four conditional distribution families. Performances measured with the 1-Wasserstein to the ground truth.}
        \label{fig:mtl_ncp_vs_ncp}
\end{figure}
\noindent
\textbf{Multi-task learning evaluation.} 
We sample $K{=}100$ conditional distributions and generate
$n{=}400$ observations for each task and family. We compare MTL-CMO against $3$ multi-task 
and 8 single-task alternatives detailed in Appendix~\ref{app:uq_baselines}.
The experiment is
repeated over $10$ seeds and Appendix~\ref{app:uq_architecture} provides implementation details. Performance is measured with the 1-Wasserstein distance between the estimated and the ground-truth CCDFs averaged over the population and conditioning inputs. \\
Results are reported in \Cref{tab:cd_w1}. MTL-CMO outperforms all competing methods on CD1--CD3 and remains competitive on CD4. Relative to its single-task counterpart, NCP, joint representation learning reduces the Wasserstein error by $8\%$, $39\%$, $43\%$, and $53\%$ on CD1--CD4, respectively. This behavior is further illustrated in \Cref{fig:mtl_ncp_vs_ncp}, which compares the performance of MTL-CMO and to NCP across the four families for different number of training sampels per task. The improvement becomes substantially larger for the more challenging families, which involve multivariate conditioning, heavy tails, or asymmetric distributions. These results highlight the benefit of sharing information across related tasks through shared input/output function spaces, particularly when each task has not enough observations for accurate estimation.

\begin{figure}[t]
    \centering
    \includegraphics[width=\linewidth]{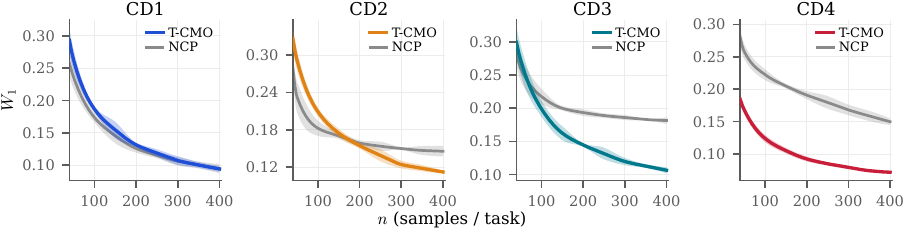}\vspace{-6mm}
    \caption{
    Transfer learning evaluation on unseen conditional distributions. Comparison in $1$-Wasserstein between T-CMO and the single-task NCP depending on the number of observations.
    }
    \label{fig:CD_transferability}
\end{figure}

\noindent
\textbf{Transfer learning evaluation.}
We next freeze the input and output function spaces learned during multi-task training. For each family and dataset sizes between $50$ and $400$, we sample observations from $100$ previously unseen conditional distributions. We estimate their conditional CDFs using the closed-form T-CMO estimator and compare it with NCP models trained independently from scratch for each task and sample size.\\
Results are shown in \Cref{fig:CD_transferability}. On the simpler CD1 family, T-CMO and NCP achieve comparable performance. As the conditional families become more complex (CD2--CD4), T-CMO increasingly outperforms NCP as additional target observations become available. The reduced performance of T-CMO in the very low-data regime for some families is consistent with the greater numerical sensitivity of the empirical Gram-matrix inversion when few samples are available. Overall, these results show that function spaces learned from a diverse population of source tasks can be effectively reused to estimate the conditional distributions of previously unseen tasks~\citep{tripuraneni2020theory}.
Implementation details and additional results are provided in Appendix~\ref{app:uq_transfer}.

\subsection{Müller-Brown Langevin dynamics}

\begin{table}[!tbp]
\centering
\caption{
Reconstruction error between estimated
and ground truth operators. Hilbert-Schmidt error
in $<\!\text{mean}\!>\pm <\!\text{std}\!>$ 
over the $150$ unseen systems. Lower is better. \textbf{First} and \underline{second}.
}
\label{tab:l2d_hs}
\resizebox{\linewidth}{!}{
\begin{tabular}{lcccccc}
\toprule
& \multicolumn{5}{c}{\textbf{Trajectory length} $N$} \\
\cmidrule(lr){2-6}
Method
& \textbf{5\,000} & \textbf{10\,000}
& \textbf{20\,000} & \textbf{40\,000} & \textbf{80\,000} \\
\midrule
T-CMO \textbf{(ours)}
& $\mathbf{0.261 \pm 0.148}$
& $\mathbf{0.190 \pm 0.108}$ & $\mathbf{0.140 \pm 0.075}$
& $\mathbf{0.099 \pm 0.055}$ & $\mathbf{0.075 \pm 0.039}$ \\
Single-task NCP \citep{kostic2024ncp}
& $0.297 \pm 0.138$
& $0.224 \pm 0.098$ & $0.176 \pm 0.068$
& $0.138 \pm 0.047$ & $0.115 \pm 0.031$ \\
Pooled NCP \citep{kostic2024ncp}
& $0.306 \pm 0.143$
& $0.219 \pm 0.102$ & $0.163 \pm 0.078$
& $0.118 \pm 0.058$ & $\underline{0.087 \pm 0.041}$ \\
RFF/EDMD~\citep{li2017extended}
& $0.295 \pm 0.140$
& $0.233 \pm 0.123$ & $0.177 \pm 0.069$
& $0.135 \pm 0.045$ & $0.109 \pm 0.030$ \\
RFF/Laplace~\citep{kostic2025laplace}
& $0.296 \pm 0.141$
& $0.221 \pm 0.100$ & $0.172 \pm 0.072$
& $0.132 \pm 0.051$ & $0.106 \pm 0.034$ \\
MetaKoopman~\citep{iwata2021meta}
& $0.288 \pm 0.142$
& $0.220 \pm 0.100$ & $0.173 \pm 0.070$
& $0.136 \pm 0.048$ & $0.115 \pm 0.034$ \\
VAMPNet~\citep{mardt2018vampnets}
& $\underline{0.273 \pm 0.145}$
& $\underline{0.205 \pm 0.104}$ & $\underline{0.154 \pm 0.071}$
& $\underline{0.110 \pm 0.051}$ & $0.088 \pm 0.034$ \\
\bottomrule
\end{tabular}
}
\end{table}

\noindent
\textbf{Müller-Brown dynamical family.}
The Müller-Brown family is a standard benchmark for slow dynamic of metastable systems and transition-path problems~\citep{muller1979location,zhang2022solving}.
It consists in overdamped Langevin dynamics with two-dimensional energy landscapes comprising three wells separated by energy barriers. Typical trajectories spend long periods within metastable basins and only rarely transition between them. Consequently, the dominant nontrivial eigenfunctions of the Infinitesimal Generator (IG, see \Cref{sec:background}) characterize the principal metastable states, while the corresponding eigenvalues encode the transition timescales.
We construct a family of such systems by varying the orientation and depth of one potential well. These variations alter the metastable regions and transition timescales, leading to task-specific spectral decompositions of the corresponding infinitesimal generators.

\noindent
\textbf{Experimental protocol.} The experiment comprises three stages: (i) learning the shared functional space with MTL-CMO from $750$ systems, (ii) finetuning the hyperparameters of the IG estimator with $50$ dynamics, and (iii) estimating the IG of $150$ unseen dynamics by transferring the function space with T-CMO. Each system provides a trajectory of $80{,}000$ observations. Transfer performance is evaluated from truncated trajectories from $2k$ to $80k$ observations. Since Langevin dynamics are time-reversible, the IGs are self-adjoint and we consider a single function dictionary, i.e. $\Phi_\theta{=}\Psi_\theta$. The IGs are estimated with the Laplace estimator~\citep{kostic2025laplace}. Stage (ii) shows that the first three non-trivial components capture $95\%$ of the spectral energy, motivating our focus on rank-$3$ spectral recovery. For transfer learning, we compare T-CMO/Laplace against: single-task NCP/Laplace, pooled NCP/Laplace, RFF/Laplace, RFF/EDMD, MetaKoopman, and VampNet. Implementation details are provided in Appendix~\ref{app:langevin_2d}.

\begin{figure}[t]
    \centering
    \includegraphics[width=1\linewidth]{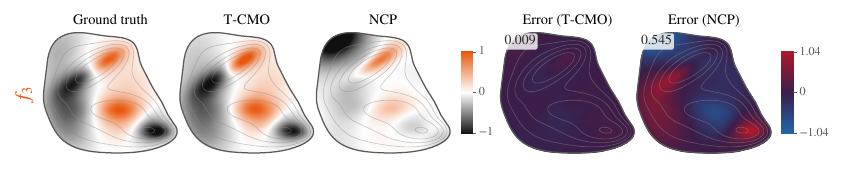}\vspace{-5mm}
    \caption{
    Recovery of $3^{rd}$ eigenfunction of an unseen Müller-Brown system with T-CMO/Laplace and its single-task counterpart NCP/Laplace. Global error in cosine distance.
    }
    \label{fig:mb_psi3_fields}
\end{figure}

\begin{figure}[t]
    \centering
    \includegraphics[width=1\linewidth]{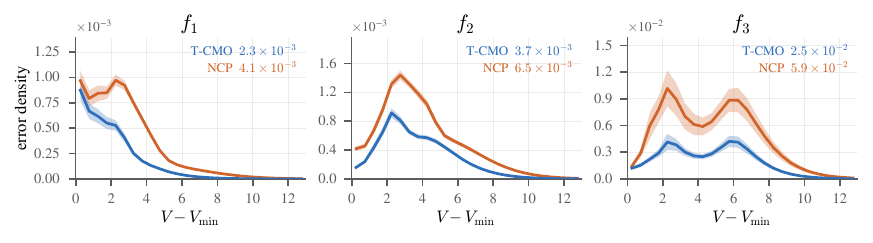}\vspace{-8mm}
    \caption{
    Reconstruction error across eigenfunctions and potential values between T-CMO/Laplace and NCP/Laplace. Error averaged over the $150$ unseen systems. Global error in cosine distance.
    }
    \label{fig:mb_eigenfunction_errors}
\end{figure}

\noindent
\textbf{Results.} 
The multi-task results reported in Appendix~\ref{app:mb_operator_learning} show that MTL-CMO improves operator reconstruction over its single-task counterpart, NCP.
In transfer learning, T-CMO/Laplace achieves the lowest operator reconstruction error (\Cref{tab:l2d_hs}) and the most accurate recovery of the individual spectral components (Appendix~\ref{app:mb_transfer}) across all trajectory lengths. More specifically, \Cref{fig:mb_psi3_fields} shows that T-CMO accurately recovers the spatial structure of the third eigenfunction, whereas NCP, exhibits substantially larger localized errors. Figure \ref{fig:mb_eigenfunction_errors} shows a consistent improvement of T-CMO across all three eigenfunctions and potential values, reducing the overall error by approximately a factor of two. These results demonstrate that transferring the learned function space is particularly beneficial for recovering challenging spectral components of unseen dynamics from limited trajectory data. Additional results can be found in Appendix~\ref{app:mb_transfer}.

\FloatBarrier
\subsection{Turbulent plasma dynamics}

\noindent
\textbf{Plasma dynamics and reduced-model identification.}
Turbulent plasmas exhibit strongly nonlinear and multiscale interactions that make high-fidelity simulations computationally demanding. They are therefore often approximated by reduced physical models that retain the dominant mechanisms while enabling faster simulations, which is valuable for applications requiring rapid predictions, such as plasma monitoring and control. Their practical use, however, requires inferring model parameters from observed trajectories, a challenging problem because turbulence and instabilities can obscure their effects on the dynamics~\citep{boeuf2023exb,coosemans2021bayesian}. This has motivated data-driven operator approaches for extracting physical information from plasma measurements and simulations~\citep{faraji2023dmd,taylor2017dmd}.

\begin{figure}[t]
    \centering
    \includegraphics[width=1\linewidth]{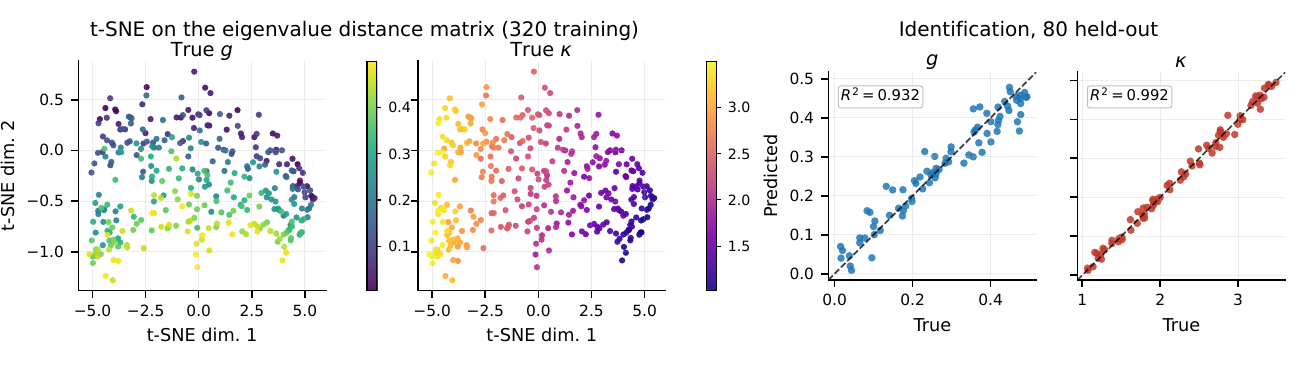}\vspace{-8mm}
\caption{
Left: t-SNE on the distance matrix between the $320$ training operator spectra colored by $g$ and $\kappa$. Right: parameter identification with kernel regression evaluated on the $80$ held-out systems.
}
    \label{fig:plasma_tsne_regression}
\end{figure}

\noindent
\textbf{Experiment objective.}
We consider Tokam2D~\citep{tokam2d}, a plasma simulator based on a reduced two-field model describing the evolution of plasma density and electrostatic potential in a two-dimensional plane. The dynamics are parametrized by $g$, which controls the interchange coupling between density and vorticity, and $\kappa$, which controls the imposed density-gradient drive~\citep{ghendrih2018sol,ghendrih2022avalanche}. Varying these parameters produces distinct turbulent regimes. We investigate whether $g$ and $\kappa$ can be recovered from the spectra of IGs estimated through T-CMO. Appendix~\ref{sec:plasma_physics} provides a detailed descriptions.

\begin{wrapfigure}{r}{0.38\textwidth}
    \centering
    \includegraphics[width=0.95\linewidth]{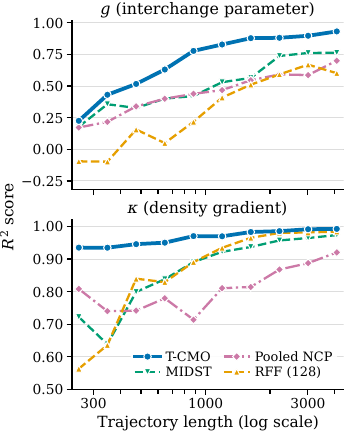}\vspace{-4mm}
    \caption{Identification of the parameters $g$ and $\kappa$ versus trajectory length for T-CMO and competing methods.}\vspace{-4mm}
    \label{fig:plasma_horizon1}
\end{wrapfigure}

\noindent
\textbf{Experimental protocol.}
We simulate $400$ plasma trajectories of $4{,}093$ observations on a $64\times64$ spatial grid, with $(g,\kappa)$ sampled uniformly from $[0,0.5]\times[1,3.5]$. We train MTL-CMO on $320$ trajectories using a convolutional ResNet that produces a dictionary of $128$ functions. For IG estimation, we combine T-CMO with the Laplace estimator~\citep{kostic2025laplace}. We regress $g$ and $\kappa$ from the estimated IG eigenvalues on the training set using kernel ridge regression. On the $80$ held-out trajectories of increasing length, we compare our approach against eigenvalues obtained with MIDST~\citep{elul2024data}, pooled NCP/Laplace~\citep{kostic2024ncp}, and RFF/Laplace~\citep{brault2016random}. Implementation details are provided in Appendix~\ref{sec:plasma_representation}.

\noindent
\textbf{Results.}
\Cref{fig:plasma_tsne_regression}-left shows a t-SNE embedding of the IG eigenvalue signatures estimated by MTL-CMO on the $320$ training systems. Although $g$ and $\kappa$ are not provided during training, the resulting representations vary smoothly with both parameters. On the $80$ held-out systems and using full trajectories, T-CMO/Laplace can be used to predict with $R^2{=}0.932$ for $g$ and $R^2{=}0.992$ for $\kappa$ (\Cref{fig:plasma_tsne_regression}-right). Moreover, \Cref{fig:plasma_horizon1} shows that it outperforms all baselines across all trajectory lengths with an early near-perfect recovery of $\kappa$ and rapidly improving the estimation of $g$ compared to other baselines. These results show the benefit of jointly learning and transferring functional spaces for parameter identification in plasma dynamics. In appendices~\ref{sec:plasma_operator} and \ref{sec:plasma_identification} we provide additional results.

%% file: sections/conclusion.tex
\section{Conclusion}
In this work, we introduced MTL-CMO, a multi-task approach that jointly learns conditional mean operators through shared function spaces and task-specific linear operators. We then proposed T-CMO, which reuses the learned function spaces to estimate the operator of a new distribution in closed form. Our statistical results and experiments on uncertainty quantification and complex dynamical systems demonstrate the benefits of jointly learning the shared function spaces.\\
In future work we will investigate new strategies to improve the estimation of the task-specific low-rank operators by leveraging the population manifold structure of their matrix representations.

%% file: sections/appendix.tex
\section{Operator representation of dynamical systems \& estimation}
\label{app:op_ds}

\subsection{Operator representation of dynamical systems \& spectral decomposition}
\label{app:op_ds_rpr}

\paragraph{Markov semigroup and transfer operator.}
Let $(X_t)_{t \geq 0}$ be a time-homogeneous Markov process on a measurable state space $\mcX$, and invariant measure $\pi$. Let $\mcF \subset \mcL^2_\pi(\mcX) $ be a space of observables. The evolution of the system is described by the Markov semigroup $\{A_t\}_{t \geq 0}$ acting on $\mcF$, such as
\begin{equation}
    [A_t f](x) = \mathbb{E}[f(X_t)\mid X_0 = x], \quad f \in \mcF.
\end{equation}

\paragraph{Infinitesimal generator.}
The family $\{A_t\}_{t\geq 0}$ is a strongly continuous semigroup satisfying
\[
A_0 = \mathrm{Id}, \qquad A_{t+s} = A_t A_s, \quad \forall t,s \geq 0.
\]
The infinitesimal generator $G$ associated with $\{A_t\}_{t\geq 0}$ is defined as
\begin{equation}
    G f \triangleq \lim_{t \to 0^+} \frac{A_t f - f}{t},
    \label{eq:generator_appendix}
\end{equation}
with domain $\mathcal{D}(G) \subset \mcF$. The generator is, in general, an unbounded linear operator on $\mcF$ and characterizes the semigroup through $A_t = \exp(tG)$.

\paragraph{Spectral decomposition via compact resolvent.}
The spectral analysis of $G$ requires additional structure due to its infinite-dimensional and unbounded nature. Let $\pi$ be an invariant measure of the process and consider the Hilbert space $\mcL^2_\pi(\mcX)$.

We assume that $G$ has compact resolvent, i.e., there exists $\mu_0$ in the spectrum of $G$ such that $(\mu_0 \Id - G)^{-1}$ is compact on $\mcL^2_\pi(\mcX)$. Under this assumption, the spectrum of $G$ is discrete and consists of isolated eigenvalues $(\lambda_i)_{i\in\mathbb{N}}$ with finite multiplicity and no accumulation point except possibly at infinity.

In addition, there exists an bi-orthogonal basis $(f_i,g_i)_{i\in\mathbb{N}}$ of $\mcL^2_\pi(\mcX)$ such that
\begin{equation}
    G f_i = \lambda_i f_i  \qquad G^*g_i = \overline{\lambda_i} g_i \qquad \innerp{f_i}{g_j}_{\mcL^2_\pi(\mcX)} = \delta_{ij} 
\end{equation}

For each isolated eigenvalue $\lambda_i$, the associated spectral projector $P_i$ is defined as the projector onto the corresponding eigenspace in $\mcL^2_\pi(\mcX)$.

In the simple eigenvalue case, we have
\begin{equation}
    P_i f = \langle f, g_i\rangle_{\mcL^2_\pi(\mcX)} f_i .
\end{equation}

More generally, if $\lambda_i$ has multiplicity $m_i$ and $(f_{i,k},g_{i,k})_{k=1}^{m_i}$ is an bi-orthogonal basis of the eigenspace, then
\begin{equation}
    P_i f
    =
    \sum_{k=1}^{m_i}
    \langle f, g_{i,k}\rangle_{\mcL^2_\pi(\mcX)} f_{i,k}.
\end{equation}

The projectors satisfy
\begin{equation}
    P_i P_j = \delta_{ij} P_i,
    \qquad
    \sum_{i} P_i = \Id
\end{equation}
on the spectral subspace.

With this notation, the generator admits the spectral representation
\begin{equation}
    G f = \sum_{i} \lambda_i P_i f,
    \label{eq:spectral_appendix}
\end{equation}
for all $f \in \mathcal{D}(G)$ for which the expansion is well-defined, and the semigroup diagonalizes as
\begin{equation}
    A_t f
    =
    \sum_{i}
    e^{\lambda_i t} P_i f .
\end{equation}

\subsection{Estimation through generator resolvent}
\label{app:resolvent}

This section presents the methodology introduced in ~\citet{kostic2025laplace}.

\paragraph{Learning initial generators via spectral filtering.}

The first step consists in learning a collection of initial generator atoms directly from trajectory data by estimating spectral components of the infinitesimal generator \(G\). 
Rather than approximating \(G\) through finite differences of the form \((A_{\Delta t}-I)/\Delta t\), which is unstable at small time scales, we rely on spectral filtering based on Toeplitz representations of analytic functions of the generator.

Let \(A_{\Delta t} = e^{\Delta t G}\) denote the transfer operator at time step \(\Delta t\). 
We consider a Toeplitz symbol
\begin{equation}
    T(z) = \sum_{j\in\mathbb{Z}} a_j z^j,
    \label{eq:toeplitz_symbol_step1}
\end{equation}
and define the associated filtered operator
\begin{equation}
    F(G) = T(A_{\Delta t}).
    \label{eq:functional_calculus_step1}
\end{equation}

Since \(A_{j\Delta t} = A_{\Delta t}^j\), the operator \(F(G)\) admits the representation
\begin{equation}
    F(G) = \sum_{j\in\mathbb{Z}} a_j A_{\Delta t}^j,
    \label{eq:toeplitz_operator_expansion_step1}
\end{equation}
which can be approximated from data using time-lagged observations. In practice, we use a truncated expansion
\begin{equation}
    F_\ell(G) = \sum_{|j|\leq \ell} a_j A_{\Delta t}^j,
    \label{eq:truncated_toeplitz_step1}
\end{equation}
whose empirical action is represented by a banded Toeplitz matrix acting on the time-ordered data sequence. 
Thus, analytic functional calculus of the generator reduces to structured linear algebra on trajectory data.

\noindent
\textbf{Resolvent and exponential filters.}
To probe the spectrum of the generator \(G\), we use families of analytic filters.

The key object is the generator resolvent
\begin{equation}
    R_\mu = (\mu I - G)^{-1}
    = \int_0^\infty e^{t G} e^{-\mu t}\, dt,
    \label{eq:resolvent_definition_step1}
\end{equation}
which we approximate by a Toeplitz-weighted sum of transfer operators,
\begin{equation}
    R_{\mu,\ell} = \sum_{j=0}^{\ell} a_j A_{\Delta t}^j,
    \qquad
    a_j \approx \Delta t\, e^{-\mu j\Delta t},
    \label{eq:resolvent_toeplitz_step1}
\end{equation}
corresponding to a discretization of the Laplace transform.

Equivalently, using the transfer-operator resolvent, we have
\begin{equation}
    (e^\mu I - A_{\Delta t})^{-1}
    =
    \sum_{j=0}^{\infty} e^{-(j+1)\mu} A_{\Delta t}^j,
    \label{eq:transfer_resolvent_step1}
\end{equation}
which naturally yields Toeplitz coefficients. These filters concentrate spectral information near the shift parameter \(\mu\), allowing us to localize different regions of the spectrum.

In addition, exponential and trigonometric filters (e.g. \(e^{\Delta t G}\), \(\cosh(\Delta t G)\), \(\sinh(\Delta t G)\)) can be used to emphasize specific spectral structures depending on the nature of the dynamics.

\paragraph{Estimation from data.}
\Cref{alg:tcmo-generator} describes the procedure to estimate the infinitesimal generator according to~\citep{kostic2025laplace}.

\subsection{From T-CMO to estimation of operator representation of dynamical system.}
\label{app:tcmo-op-est}
Estimation of a dynamical system's operator representation requires a function space (observable space) that best captures the dynamics as depicted in \Cref{sec:background} and Appendix~\ref{app:op_ds_rpr}. By construction, MTL-CMO, introduced in \Cref{sec:method}, captures function spaces that describe the dominant mode of variation in a population of dynamical systems. These learned function spaces can be adapted via transfer with T-CMO to estimate the operator representation of a new, related dynamical system.

Formally, consider the learned features maps  $(\Phi_{\widehat{\theta}},\Psi_{\widehat{\theta}})$ with MTL-CMO on a population of related dynamical systems. Suppose a sampled trajectory $\{x_i^{\text{new}}\}_{i \in [T]} \subset \mcX$ from a new dynamical system with invariant measure $\pi_{\text{new}}$. Then, 

\begin{enumerate}
    \item With T-CMO estimate the new operator $\dmon$, see \Cref{sec:T-CMO} and \Cref{alg:tcmo}.
    \item Derive its singular value decomposition following \Cref{sec:mtl-cmo} (SVD paragraph) and keep the orthonormal function basis $U^{\text{new}} = \{u_i^{\text{new}}\}_{i \in [r_\text{new}]}$.
    \item Estimate the infinitesimal generator with the function space $\text{span}(U^{\text{new}}) \subset \mcL^2_{\pi_{\text{new}}}(\mcX)$ with any infinitesimal generator estimator.
\end{enumerate}
Various infinitesimal generator estimators exist, including: \citep{schmid2010dynamic,li2017extended,kostic2022learning,kostic2025laplace}.
In particular, we describe~\citet{kostic2025laplace} in Appendix~\ref{app:resolvent}.
\begin{algorithm}[t]
\caption{Generator estimation from T-CMO transferred observables}
\label{alg:tcmo-generator}
\small
\begin{algorithmic}[1]

\Require Trajectory
$\{x_t^\star\}_{t=0}^{T-1}$,
rank-$r$ observable basis
$U^\star=\{u_j^\star\}_{j=1}^{r}$ returned by T-CMO,
sampling step $\Delta t$, resolvent shift $\mu$,
maximum lag $\ell$, rank $q$, regularization $\gamma$

\State Evaluate
$\mbz_t^\star
\gets
\big(
u_1^\star(x_t^\star),\ldots,u_r^\star(x_t^\star)
\big)^\top
\in\mathbb R^r$

\State Form
$\mbZ^\star
\gets
(\mbz_0^\star,\ldots,\mbz_{T-1}^\star)^\top
\in\mathbb R^{T\times r}$

\State Construct the truncated Laplace weights
$\displaystyle
a_k
\gets
\mu\Delta t\,e^{-\mu k\Delta t},
\quad k=0,\ldots,\ell$

\State Form the associated Toeplitz operator
$\mbT_\mu$ and compute
$\displaystyle
\mbZ_\mu^\star
\gets
\mbT_\mu\mbZ^\star$

\State Compute
$\displaystyle
\widehat{\mbC}^\star
\gets
\frac{1}{T}\mbZ^{\star\top}\mbZ^\star$,
$\displaystyle
\widehat{\mbH}_\mu^\star
\gets
\frac{1}{T}\mbZ^{\star\top}\mbZ_\mu^\star$

\State Regularize
$\displaystyle
\widehat{\mbC}_\gamma^\star
\gets
\widehat{\mbC}^\star+\gamma\mbI_r$

\State Compute the $q$ dominant generalized eigenvectors of
$\displaystyle
\widehat{\mbH}_\mu^\star
\widehat{\mbH}_\mu^{\star\top}\mbv
=
\sigma^2
\widehat{\mbC}_\gamma^\star\mbv$
and collect the normalized directions in $\mbV_q^\star$

\State Form the reduced resolvent representation
$\displaystyle
\widehat{\mbR}_\mu^\star
\gets
\mbV_q^{\star\top}
\widehat{\mbH}_\mu^\star
\mbV_q^\star$

\State Compute
$\displaystyle
\widehat{\mbR}_\mu^\star\mbr_i^\star
=
\nu_i^\star\mbr_i^\star$

\State Recover
$\displaystyle
\mbv_i^\star
\gets
\mbV_q^\star\mbr_i^\star$

\State Recover the generator eigenvalues
$\displaystyle
\widehat{\lambda}_i^\star
\gets
\mu\left(1-\frac{1}{\nu_i^\star}\right)$

\State Define the corresponding eigenfunctions
$\displaystyle
\widehat f_i^\star(x)
\gets
\sum_{j=1}^{r}
[\mbv_i^\star]_j\,u_j^\star(x)$

\State \Return
$\{\widehat{\lambda}_i^\star\}_{i=1}^{q}$ and
$\{\widehat f_i^\star\}_{i=1}^{q}$

\end{algorithmic}
\end{algorithm}

\section{Multi-task learning for conditional mean operator: Appendix}
\label{app:method_app}
\label{app:mtlcmo_derivations}

We give additional details on the finite-dimensional representation of the
task-specific operators, the population objective optimized by MTL-CMO, and
the resulting transfer estimator.

\noindent
\paragraph{Post-hoc Singular Value Decomposition of deflated operators.}
For each task $k$, the singular value decomposition of the deflated operator can
be recovered \emph{post hoc} by centering and whitening the shared dictionaries. Formally, consider the orthogonal projectors 
\begin{equation*}
    P_{\mu_k} \triangleq I_d - \mb1_\mcX \otimes_{\mu_k} \mb1_\mcX \qquad P_{\nu_k} \triangleq I_d - \mb1_\mcY \otimes_{\nu_k} \mb1_\mcY,
\end{equation*}
where $\mb1_\mcX$ and $\mb1_\mcY$ are indicator functions. These projectors center functions, i.e. 
\begin{equation*}
    P_{\mu_k}f = f - \mdE_{\mu_k}[f(X)]
    \qquad 
    P_{\nu_k}g = g - \mdE_{\nu_k}[g(Y)].
\end{equation*}
Assuming an unconstrained operator $\dmokt$, we can define the centered deflated operator: 
\begin{equation*}
    \odmokt \triangleq P_{\mu_k}\dmokt P_{\mu_k} = \msS_{\Phi_{\theta\!,c}}^{(k)}\mbA^{(k)}\mbB^{(k)}\msS_{\Psi_{\theta\!,c}}^{(k)*},
\end{equation*}
with 
$\Phi_{\theta\!,c}^{(k)}(x)
=
\Phi_\theta(x)-\mdE_{\mu_k}\!\left[\Phi_\theta(X)\right]$ and $
\Psi_{\theta\!,c}^{(k)}(y)
=
\Psi_\theta(y)-\mdE_{\nu_k}\!\left[\Psi_\theta(Y)\right]$. In fact, the centered deflated operator is a better estimator of $\dmok$. Indeed,
\begin{equation*}
    \left\|\odmokt -\dmok \right\|_{\mathrm{HS}} \leq \left\|\dmokt -\dmok \right\|_{\mathrm{HS}},
\end{equation*}
since $P_{\mu_k}\dmok P_{\nu_k} = \dmok$ and projectors are contractant maps. Assume the centered Gram-matrices 
\begin{equation*}
    \mbG_{\Phi_{\theta\!,c}}^{(k)}
    \triangleq
    \mdE_{\mu_k}\!\left[
        \Phi_{\theta\!,c}^{(k)}(X)\Phi_{\theta\!,c}^{(k)}(X)^\top
    \right],
    \qquad
    \mbG_{\Psi_{\theta\!,c}}^{(k)}
    \triangleq
    \mdE_{\nu_k}\!\left[
        \Psi_{\theta\!,c}^{(k)}(Y)\Psi_{\theta\!,c}^{(k)}(Y)^\top
    \right],
\end{equation*}
Consider the matrix SVD
\begin{equation*}
    \mbG_{\Phi_{\theta\!,c}}^{(k)1/2}
    \mbA^{(k)}\mbB^{(k)\top}
    \mbG_{\Psi_{\theta\!,c}}^{(k)1/2}
    =
    \mbU^{(k)}\Diag(\bs\sigma^{(k)})\mbV^{(k)\top}.
\end{equation*}
Since
\[
\mbU^{(k)\top}\mbU^{(k)}
=
\mbI_{r_k},
\qquad
\mbV^{(k)\top}\mbV^{(k)}
=
\mbI_{r_k},
\]
the coefficient matrices
\[
\mbC^{(k)}
=
\mbG_{\Phi_\theta}^{(k)-1/2}\mbU^{(k)},
\qquad
\mbD^{(k)}
=
\mbG_{\Psi_\theta}^{(k)-1/2}\mbV^{(k)}
\]
satisfy
\[
\mbC^{(k)\top}
\mbG_{\Phi_\theta}^{(k)}
\mbC^{(k)}
=
\mbI_{r_k},
\qquad
\mbD^{(k)\top}
\mbG_{\Psi_\theta}^{(k)}
\mbD^{(k)}
=
\mbI_{r_k}.
\]

Denoting $\mbu_i^{(k)}$ and $\mbv_i^{(k)}$ the $i^{th}$ columns of $\mbU^{(k)}$ and $\mbV^{(k)}$,
the singular functions are given by
\begin{equation*}
    \widetilde{u}_i^{(k)}
    =
    \msS_{\Phi_{\theta\!,c}}^{(k)}
    \mbG_{\Phi_{\theta\!,c}}^{(k)-1/2}\mbu_i^{(k)},
    \qquad
    \widetilde{v}_i^{(k)}
    =
    \msS_{\Psi_{\theta\!,c}}^{(k)}
    \mbG_{\Psi_{\theta\!,c}}^{(k)-1/2}\mbv_i^{(k)},
\end{equation*}
where 
$\mbG_{\Phi_{\theta\!,c}}^{(k)-1/2}$ and
$\mbG_{\Psi_{\theta\!,c}}^{(k)-1/2}$
are square root inverses.
By construction, the functions $\textstyle \{\widetilde{u}_i^{(k)}\}_{i \in [r_k]}$ and $\textstyle \{\widetilde{v}_i^{(k)}\}_{i \in [r_k]}$ are  orthonormal in their respective space $\lxk$ and $\lyk$,
and therefore yield the task-specific singular decomposition
\begin{equation}
    \textstyle
    \odmokt
    =
    \sum_{i=1}^{r_k}
    \sigma_i^{(k)}
    \widetilde u_i^{(k)}
    \otimes
    \widetilde v_i^{(k)}.
\end{equation}
Thus, the proposed factorization avoids imposing orthogonality constraints
during training while retaining access to an orthonormal, task-specific
spectral representation after optimization.

\paragraph{Derivation task-specific theoretical and empirical losses.} We start be deriving the loss for each task.
Consider the task $k$, its loss is the discrepancy between the $\dmokt$ and $\dmok$: 
\[
\begin{aligned}
    \mcL^{(k)} &= \|\dmokt - \dmok\|_{\mathrm{HS}}^2 - \|\dmok\|_{\mathrm{HS}}^2 \\
    & = \|\dmokt\|_{\mathrm{HS}}^2 - 2\innerp{\dmokt}{\dmok}_{\mathrm{HS}}.
\end{aligned}
\]
Let
\(
q_k(x,y)
=
\sum_{i,j}
M_{ij}^{(k)}
\phi_{i}^{(k)}(x)
\psi_{j}^{(k)}(y)
\), with $\mbM^{(k)} = \mbA^{(k)}\mbB^{(k)\top}$,
be the kernel of $\dmokt$. Since $\dmokt:L^2_{\nu_k}\to L^2_{\mu_k}$ is an integral operator,
\[
\begin{aligned}
    \|\dmokt\|_{\mathrm{HS}}^2
    & = \int q_k(x,y)^2\,d\mu_k(x)d\nu_k(y) \\
    & = \mdE_{\mu_k \otimes \nu_k}[q_k(X,Y)^2] \\
    & = \Tr\!\left(
\mbG_{\Psi_\theta}^{(k)}
\mbM^{(k)\top}
\mbG_{\Phi_\theta}^{(k)}
\mbM^{(k)}
\right),
\end{aligned}
\]
with 
\begin{equation*}
    \mbG_{\Phi_{\theta}}^{(k)}
    \triangleq
    \mdE_{\mu_k}\!\left[
        \Phi_{\theta}^{(k)}(X)\Phi_{\theta}^{(k)}(X)^\top
    \right]
    \qquad
    \mbG_{\Psi_{\theta}}^{(k)}
    \triangleq
    \mdE_{\nu_k}\!\left[
        \Psi_{\theta}^{(k)}(Y)\Psi_{\theta}^{(k)}(Y)^\top
    \right].
\end{equation*}
Recall that
\[
\dmok
=
\cmok
-
\mathbf 1_{\mcX}\otimes\mathbf 1_{\mcY},
\qquad
[\cmok f](x)=\mdE[f(Y)\mid X=x],
\]
so that
\[
\innerp{\dmokt}{\dmok}_{\mathrm{HS}}
=
\innerp{\dmokt}{\cmok}_{\mathrm{HS}}
-
\innerp{
\dmokt
}{
\mathbf 1_{\mcX}\otimes\mathbf 1_{\mcY}
}_{\mathrm{HS}}.
\]
For $u\in L^2_{\mu_k}$ and $v\in L^2_{\nu_k}$,
\[
\innerp{u\otimes v}{\cmok}_{\mathrm{HS}}
=
\innerp{u}{\cmok v}_{\mu_k},
\]
and,
\[
\begin{aligned}
\innerp{u}{\cmok v}_{\mu_k}
&=
\int
u(x)\mdE[v(Y)\mid X=x]\,d\mu_k(x) \\
&=
\mdE\!\left[
u(X)\mdE[v(Y)\mid X]
\right] \\
&=
\mdE\!\left[
\mdE[u(X)v(Y)\mid X]
\right] \\
&=
\mdE_{\rho_k}[u(X)v(Y)] \\
& = \int u(x)v(y)\,d\rho_k(x,y).
\end{aligned}
\]
Therefore, 
\[
\begin{aligned}
    \innerp{\dmokt}{\cmok}_{\mathrm{HS}} & = \int q_k(x,y)\,d\rho_k(x,y) \\
    &  = \mdE_{\rho_k}\left[q_k(X,Y)\right]
\end{aligned}
\]
Similarly,
\[
\innerp{
u\otimes v
}{
\mathbf 1_{\mcX}\otimes\mathbf 1_{\mcY}
}_{\mathrm{HS}}
=
\innerp{u}{\mathbf 1_{\mcX}}_{\mu_k}
\innerp{v}{\mathbf 1_{\mcY}}_{\nu_k},
\]
which gives
\[
\begin{aligned}
    \innerp{
\dmokt
}{
\mathbf 1_{\mcX}\otimes\mathbf 1_{\mcY}
}_{\mathrm{HS}}
& =
\int q_k(x,y)\,d\mu_k(x)d\nu_k(y) \\
& = \mdE_{\mu_k \otimes \nu_k}\left[q_k(X,Y)\right]
\end{aligned}
\]
Hence
\[
\begin{aligned}
    \innerp{\dmokt}{\dmok}_{\mathrm{HS}}
 & =
\int q_k(x,y)\,d\rho_k(x,y)
-
\int q_k(x,y)\,d\mu_k(x)d\nu_k(y)\\
& = \mdE_{\rho_k}\left[q_k(X,Y)\right] - \mdE_{\mu_k \otimes \nu_k}\left[q_k(X,Y)\right].
\end{aligned}
\]
Defining
\[
\begin{aligned}
    \mbC_{\Phi_{\theta\!,c}\Psi_{\theta\!,c}}^{(k)}
    = \mdE_{\rho_k}\left[\Phi_{\theta,c}^{(k)}(X) \Psi_{\theta,c}^{(k)}(Y)^\top\right],
\end{aligned}
\]
we obtain
\[
\innerp{\dmokt}{\dmok}_{\mathrm{HS}}
=
\Tr\!\left(
\mbC_{\Phi_\theta\Psi_\theta}^{(k)\top}
\mbM^{(k)}
\right).
\]
It leads to the theoretical population loss 
\[
\mcL^{(k)}
=
\Tr\!\left(
\mbG_{\Psi_\theta}^{(k)}
\mbM^{(k)\top}
\mbG_{\Phi_\theta}^{(k)}
\mbM^{(k)}
\right)
-
2\Tr\!\left(
\mbC_{\Phi_\theta\Psi_\theta}^{(k)\top}
\mbM^{(k)}
\right).
\]

From an empirical perspective, given a set of observations $\textstyle \{(x^{(k)}_i,y^{(k)}_i)\}_{i \in [n_k]}$, we denote $\Phi_\theta(\mbX^{(k)}) = \{\Phi_\theta(x^{(k)}_i)\}_{i\in[n_k]}$ and $\Psi_\theta(\mbY^{(k)}) = \{\Psi_\theta(y^{(k)}_i)\}_{i\in[n_k]}$ the feature matrices. Considering, the matrices 
\begin{equation}
    \mbQ^{(k)} = \Phi_\theta(\mbX^{(k)})\mbA^{(k)}\mbB^{(k)\top}\Psi_\theta(\mbY^{(k)})^\top
    \quad 
    \text{and}
    \quad
    \mbH^{(k)} = \mbI_{n_k} - 1/n_k \mb1\mb1^\top,
\end{equation} 
the data-fitting loss $\mcL^{(k)}$  admits an unbiased empirical estimator defined by 
\begin{equation}
    \widehat{\mcL^{(k)}} \triangleq \frac{\|\mbQ^{(k)}\|^2_\mathrm{F} - \|\Diag(\mbQ^{(k)})\|^2_\mathrm{2}}{n_k(n_k-1)} - \frac{2}{n_k-1} \Tr(\mbH^{(k)}\mbQ^{(k)}).
\end{equation}
Here $\textstyle (\|\mbQ^{(k)}\|^2_\mathrm{F} - \|\Diag(\mbQ^{(k)})\|^2_\mathrm{2})/(n_k(n_k-1))$ is the unbiased U-statistic to estimate $\|\dmokt\|^2_{\mathrm{HS}}$.

\paragraph{Multi-task optimization.}
The factorization
$\mbM^{(k)}=\mbA^{(k)}\mbB^{(k)\top}$
is scale non-identifiable, since for every $c\neq0$,
\(
\mbA^{(k)}\mbB^{(k)\top}
=
(c\mbA^{(k)})(c^{-1}\mbB^{(k)})^\top.
\)
We therefore regularize both factors and optimize
\[
\min_{\theta,\{\mbA^{(k)},\mbB^{(k)}\}_{k=1}^K}
\sum_{k=1}^{K}
\left[
\widehat{\mcL^{(k)}}
+
\lambda
\left(
\|\mbA^{(k)}\|_{\mathrm F}^2
+
\|\mbB^{(k)}\|_{\mathrm F}^2
\right)
\right].
\]
In practice, this objective is optimized stochastically at two levels:
each update samples a mini-batch of systems, and then $L$ paired observations
within each selected system.

\begin{algorithm}[t]
\caption{MTL-CMO learning procedure}
\label{alg:mtlcmo}
\small
\begin{algorithmic}[1]

\Require Task datasets $\{\mathcal D_k\}_{k=1}^K$, system batch size $B$,
within-task sample size $L$, ranks $\{r_k\}_{k=1}^K$, regularization $\lambda$

\State Initialize shared dictionary parameters $\theta$ and
$\mbA^{(k)},\mbB^{(k)}\in\mathbb R^{d\times r_k}$, $k=1,\ldots,K$

\For{each optimization step}
    \State Sample $\mathcal B\subset\{1,\ldots,K\}$ with $|\mathcal B|=B$

    \For{each $k\in\mathcal B$}
        \State Sample $L$ paired observations
        $\{(x_i^{(k)},y_i^{(k)})\}_{i=1}^{L}$ from $\mathcal D_k$

        \State Form
        $\mbX^{(k)}\gets(x_i^{(k)})_{i=1}^{L}$ and
        $\mbY^{(k)}\gets(y_i^{(k)})_{i=1}^{L}$

        \State Evaluate
        $\mbW^{(k)}\gets\Phi_\theta(\mbX^{(k)})$ and
        $\mbZ^{(k)}\gets\Psi_\theta(\mbY^{(k)})$

        \State Set
        $\mbH_L\gets\mbI_L-\frac{1}{L}\mb1\mb1^\top$

        \State From 
        $\mbQ^{(k)}
        \gets
        \mbW^{(k)}
        \mbA^{(k)}\mbB^{(k)\top}
        \mbZ^{(k)\top}$

        \State Compute
        $\displaystyle
        \widehat{\mcL}^{(k)}
        \gets
        \frac{
        \|\mbQ^{(k)}\|_{\mathrm F}^{2}
        -
        \|\Diag(\mbQ^{(k)})\|_{2}^{2}
        }{L(L-1)}
        -
        \frac{2}{L-1}\Tr(\mbH_L\mbQ^{(k)})$
    \EndFor

    \State Aggregate
    $\displaystyle
    \widehat{\mcJ}_{\mathcal B}
    \gets
    \frac{1}{B}
    \sum_{k\in\mathcal B}
    \left[
    \widehat{\mcL}^{(k)}
    +
    \lambda
    \left(
    \|\mbA^{(k)}\|_{\mathrm F}^{2}
    +
    \|\mbB^{(k)}\|_{\mathrm F}^{2}
    \right)
    \right]$

    \State Back-propagate $\widehat{\mcJ}_{\mathcal B}$ and update
    $\theta$ and $\{\mbA^{(k)},\mbB^{(k)}\}_{k\in\mathcal B}$
\EndFor

\State \Return
$\Phi_\theta,\Psi_\theta$ and
$\{\mbM^{(k)}=\mbA^{(k)}\mbB^{(k)\top}\}_{k=1}^{K}$

\end{algorithmic}
\end{algorithm}

The matrix $\mbQ^{(k)}$ contains all input--output pairings within task $k$.
Its diagonal entries correspond to matched pairs from $\rho_k$, whereas its
off-diagonal entries cross distinct observations and provide the empirical
product-of-marginals term $\mu_k\otimes\nu_k$. For dynamical systems, the
diagonal retains the time-lagged transition pairing while the off-diagonal
terms break it.

\paragraph{Closed-form transfer.}
Once the shared dictionaries are fixed, transfer only requires optimizing
with respect to $\mbM$:
\[
\mcL(\mbM)
=
\Tr\!\left(
\mbG_{\Psi}
\mbM^\top
\mbG_{\Phi}
\mbM
\right)
-
2\Tr\!\left(
\mbC^\top\mbM
\right).
\]
Assume that $\mbG_{\Phi}\succ0$ and $\mbG_{\Psi}\succ0$. Since
\[
\Tr\!\left(
\mbG_{\Psi}
\mbM^\top
\mbG_{\Phi}
\mbM
\right)
=
\left\|
\mbG_{\Phi}^{1/2}
\mbM
\mbG_{\Psi}^{1/2}
\right\|_{\mathrm F}^2,
\]
we have
\[
\mcL(\mbM)
\geq
\lambda_{\min}(\mbG_{\Phi})
\lambda_{\min}(\mbG_{\Psi})
\|\mbM\|_{\mathrm F}^2
-
2\|\mbC\|_{\mathrm F}\|\mbM\|_{\mathrm F}.
\]
Thus $\mcL(\mbM)\to+\infty$ as $\|\mbM\|_{\mathrm F}\to\infty$.
The objective is continuous and coercive, hence admits a global minimizer.
Moreover, with $\mbm=\operatorname{vec}(\mbM)$,
\[
\nabla_{\mbm}^2\mcL
=
2\left(
\mbG_{\Psi}\otimes\mbG_{\Phi}
\right)\succ0,
\]
so the minimizer is unique. Finally,
\[
\nabla_{\mbM}\mcL
=
2\mbG_{\Phi}\mbM\mbG_{\Psi}
-
2\mbC,
\]
and the first-order condition gives
\[
\mbG_{\Phi}\mbM^\star\mbG_{\Psi}
=
\mbC,
\qquad
\boxed{
\mbM^\star
=
\mbG_{\Phi}^{-1}
\mbC
\mbG_{\Psi}^{-1}.
}
\]

For a new task, T-CMO therefore estimates only the target-specific operator;
the shared dictionaries are not retrained.

\begin{algorithm}[t]
\caption{T-CMO transfer procedure}
\label{alg:tcmo}
\small
\begin{algorithmic}[1]

\Require Target dataset
$\mathcal D^\star=\{(x_i^\star,y_i^\star)\}_{i=1}^{n}$,
pretrained dictionaries $\Phi_\theta,\Psi_\theta$,
target rank $r$, Gram regularization $\varepsilon$

\State Form
$\mbX^\star=(x_i^\star)_{i=1}^{n}$ and
$\mbY^\star=(y_i^\star)_{i=1}^{n}$

\State Evaluate
$\mbW^\star\gets\Phi_\theta(\mbX^\star)\in\mathbb R^{n\times d}$ and
$\mbZ^\star\gets\Psi_\theta(\mbY^\star)\in\mathbb R^{n\times d}$

\State Set
$\mbH_n\gets\mbI_n-\frac{1}{n}\mb1\mb1^\top$

\State Center
$\mbW_c^\star\gets\mbH_n\mbW^\star$ and
$\mbZ_c^\star\gets\mbH_n\mbZ^\star$

\State Compute
$\displaystyle
\widehat{\mbG}_{\Phi}^\star
\gets
\frac{1}{n}\mbW_c^{\star\top}\mbW_c^\star$,
$\displaystyle
\widehat{\mbG}_{\Psi}^\star
\gets
\frac{1}{n}\mbZ_c^{\star\top}\mbZ_c^\star$

\State Compute
$\displaystyle
\widehat{\mbC}_{\Phi\Psi}^\star
\gets
\frac{1}{n}\mbW_c^{\star\top}\mbZ_c^\star$

\State Regularize
$\widehat{\mbG}_{\Phi,\varepsilon}^\star
\gets
\widehat{\mbG}_{\Phi}^\star+\varepsilon\mbI_d$ and
$\widehat{\mbG}_{\Psi,\varepsilon}^\star
\gets
\widehat{\mbG}_{\Psi}^\star+\varepsilon\mbI_d$

\State Recover
$\displaystyle
\widehat{\mbM}^\star
\gets
\widehat{\mbG}_{\Phi,\varepsilon}^{\star-1}
\widehat{\mbC}_{\Phi\Psi}^\star
\widehat{\mbG}_{\Psi,\varepsilon}^{\star-1}$

\State Whiten
$\displaystyle
\widehat{\mbM}^\star_\mbW
\gets
\widehat{\mbG}_{\Phi,\varepsilon}^{\star 1/2}
\widehat{\mbM}^\star
\widehat{\mbG}_{\Psi,\varepsilon}^{\star 1/2}$

\State Compute the rank-$r$ SVD
$\widehat{\mbM}^\star_\mbW
=
\mbU_r^\star
\Diag(\bs\sigma^\star)
\mbV_r^{\star\top}$

\State Recover the singular functions from
$\widehat{\mbG}_{\Phi,\varepsilon}^{\star-1/2}\mbU_r^\star$
and
$\widehat{\mbG}_{\Psi,\varepsilon}^{\star-1/2}\mbV_r^\star$

\State \Return
$\widehat{\mbM}^\star_\mbW$ and its rank-$r$ singular representation

\end{algorithmic}
\end{algorithm}

\FloatBarrier
\input{sections/related_work}

\FloatBarrier
\section{Uncertainty Quantification Experiments}
\label{app:uq}

\subsection{Conditional-distribution families}
\label{app:uq_families}

We consider four families of related conditional distributions. For each
family, a task $k$ corresponds to a conditional law
$P_k(\mathrm y\mid \mbx)$, with
$\mbX\sim\mcU([-2,2]^D)$. Task dependence enters through the parameters of
the conditional law.

\noindent\textbf{(CD1) Univariate bimodal Gaussian mixture.}
For $D=1$,
\begin{equation*}
    Y^{(k)}\mid X=x
    \sim
    p_k\,\mcN\!\left(a_k\sin x,\sigma_k^2\right)
    +(1-p_k)\,\mcN\!\left(-a_k\sin x,\sigma_k^2\right),
\end{equation*}
where
\[
p_k\sim\mcU([0.2,0.8]),
\qquad
a_k\sim\mcU([0.6,1.0]),
\qquad
\sigma_k\sim\mcU([0.5,0.8]).
\]
Equivalently, the mixture component can be represented through a Bernoulli
variable $B^{(k)}\sim\mcB(p_k)$ and
$Z^{(k)}=2B^{(k)}-1\in\{-1,1\}$.

\noindent\textbf{(CD2) Multivariate bimodal Gaussian mixture.}
For $D=10$,
\begin{equation*}
    Y^{(k)}\mid \mbX=\mbx
    \sim
    p_k\,\mcN\!\left(
        a_k\sin(\mbw^\top\mbx),\sigma_k^2
    \right)
    +(1-p_k)\,\mcN\!\left(
        -a_k\sin(\mbw^\top\mbx),\sigma_k^2
    \right),
\end{equation*}
with the same distributions for $(p_k,a_k,\sigma_k)$ as in \textbf{CD1},
$\mbw\in\mdS^9$, and $\mbw$ shared across tasks.

\noindent\textbf{(CD3) Multivariate Student distribution.}
For $D=10$, let
\[
\mu_k(\mbx)
=
a_k\sin(\mbw^\top\mbx)
+
b_k\cos(\mbw^\top\mbx).
\]
We define
\begin{equation*}
    Y^{(k)}\mid\mbX=\mbx
    \sim
    t_{\nu_k}\!\left(\mu_k(\mbx),\sigma_k\right),
\end{equation*}
whose conditional density is
\begin{equation*}
p_k(y\mid\mbx)
=
\frac{
    \Gamma\!\left((\nu_k+1)/2\right)
}{
    \Gamma\!\left(\nu_k/2\right)
    \sqrt{\nu_k\pi}\,\sigma_k
}
\left[
1+
\frac{1}{\nu_k}
\left(
\frac{y-\mu_k(\mbx)}{\sigma_k}
\right)^2
\right]^{-(\nu_k+1)/2}.
\end{equation*}
The task parameters are
\[
a_k\sim\mcN(1,0.3^2),
\qquad
b_k\sim\mcN(0.3,0.2^2),
\qquad
\sigma_k\sim\mcN(0.5,0.1^2),
\qquad
\nu_k\sim\mcU([3,10]),
\]
 The direction
$\mbw\in\mdS^9$ is shared across tasks.

\noindent\textbf{(CD4) Multivariate skew-normal distribution.}
We use the same form
\[
\mu_k(\mbx)
=
a_k\sin(\mbw^\top\mbx)
+
b_k\cos(\mbw^\top\mbx),
\]
and define
\begin{equation*}
    Y^{(k)}\mid\mbX=\mbx
    \sim
    \operatorname{SN}\!\left(
        \mu_k(\mbx),\sigma_k,\alpha_k
    \right).
\end{equation*}
Writing $\phi$ and $\Phi$ for the standard Gaussian density and CDF,
respectively, its conditional density is
\begin{equation*}
p_k(y\mid\mbx)
=
\frac{2}{\sigma_k}
\phi\!\left(
    \frac{y-\mu_k(\mbx)}{\sigma_k}
\right)
\Phi\!\left(
    \alpha_k
    \frac{y-\mu_k(\mbx)}{\sigma_k}
\right).
\end{equation*}
We sample
\[
a_k\sim\mcN(1,0.15^2),
\qquad
b_k\sim\mcN(0.5,0.15^2),
\qquad
\sigma_k\sim\mcN(0.45,0.08^2),
\qquad
\alpha_k\sim\mcN(1.5,0.4^2),
\]
with $\sigma_k$ truncated below at $0.1$ and
$\mbw\in\mdS^9$ shared across tasks.

\Cref{fig:CD_density} displays representative conditional densities from
\textbf{CD1}, \textbf{CD3}, and \textbf{CD4}. We omit \textbf{CD2}, which
differs from \textbf{CD1} only through the higher-dimensional input and the
projection $\mbw^\top\mbx$.

\begin{figure}[t]
    \centering
    \includegraphics[width=\linewidth]{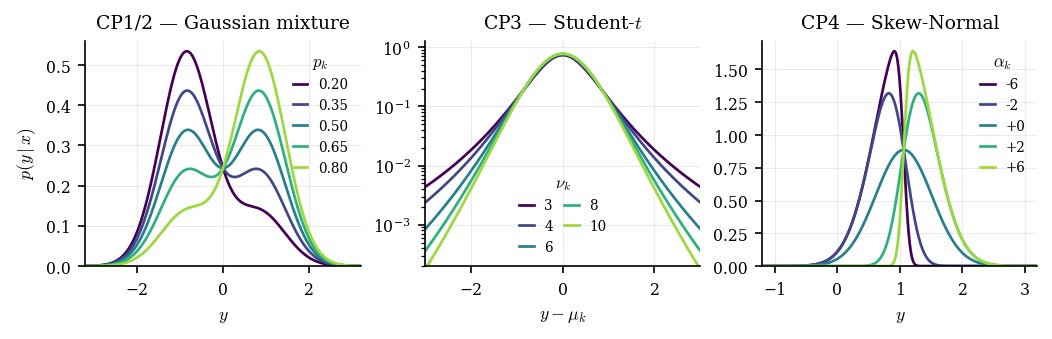}
    \caption{
    Representative conditional densities from \textbf{CD1}, \textbf{CD3},
    and \textbf{CD4} for different conditioning inputs and task parameters.
    }
    \label{fig:CD_density}
\end{figure}

\paragraph{Experimental protocol.}
For each family, we generate $K=100$ tasks with $n=400$ observations per
task. Task parameters are sampled once and kept fixed across repetitions,
while the observations are independently resampled. Results are averaged over
$10$ independent data realizations.

For transfer, we generate $100$ additional unseen tasks by independently
sampling new task parameters. For each target task, we vary the number of
available observations .
Transfer results are averaged over $10$ independent realizations of the target
data.
\subsection{Multi-task CMO architecture and training}
\label{app:uq_architecture}

For all four conditional-distribution families, we use the multi-task CMO
parametrization introduced . The two shared functional
spaces are represented by the neural dictionaries
\[
\Phi_\theta(x)
=
\{\phi_1^\theta(x),\ldots,\phi_d^\theta(x)\},
\qquad
\Psi_\theta(y)
=
\{\psi_1^\theta(y),\ldots,\psi_d^\theta(y)\}.
\]

For each task $k$, the corresponding deflated conditional operator is
parametrized as
\[
\dmokt
=
\msS_{\Phi_\theta}^{(k)}
\mbA^{(k)}\mbB^{(k)\top}
\msS_{\Psi_\theta}^{(k)*},
\qquad
\mbA^{(k)},\mbB^{(k)}
\in\mathbb R^{d\times r_k}.
\]
Hence, $\mbA^{(k)}\mbB^{(k)\top}$ defines a rank-$r_k$ factorization of the
task-specific operator in the shared functional coordinates. The neural
dictionaries are common to all tasks, while the low-rank factors
$\mbA^{(k)}$ and $\mbB^{(k)}$ capture the task-specific dependence structure.

\paragraph{Neural parametrization.}
The two dictionaries are parametrized by separate MLPs, both shared across
tasks. The input dictionary takes $x\in\mathbb R^D$, whereas the output
dictionary takes the scalar response $y\in\mathbb R$. Each hidden block is
composed of a linear layer followed by the activation function and dropout;
The dimension $d$ of the learned functional
space is given by the width of the last hidden layer. No decoder is used:
conditional statistics are obtained directly from the learned CMO.
The architecture is adapted to each conditional-distribution family. Table
\ref{tab:uq_architecture} reports the configurations used in the experiments.

\begin{table}[!htbp]
\centering
\footnotesize
\setlength{\tabcolsep}{5pt}
\renewcommand{\arraystretch}{0.95}
\caption{Neural architecture and operator rank used for the four
conditional-distribution families.}
\label{tab:uq_architecture}
\begin{tabular}{lccccc}
\toprule
& Hidden layers & $d$ & $r_k$ & Activation & Dropout \\
\midrule
CD1 & $3\times64$  & $64$  & $20$ & GELU & $0.15$ \\
CD2 & $2\times64$ & $64$ & $7$  & Tanh & $0.2$ \\
CD3 & $4\times64$  & $64$  & $18$ & GELU & $0.14$ \\
CD4 & $4\times64$  & $64$  & $13$ & Tanh & $0.16$ \\
\bottomrule
\end{tabular}
\end{table}

\paragraph{Optimization.}
For each task, the response variable $Y$ is standardized using its empirical
mean and standard deviation, while the input variables are left unchanged.
The shared dictionaries and the task-specific low-rank factors are optimized
jointly with AdamW. We use separate learning rates and weight-decay
coefficients for the shared neural parameters and for
$\{\mbA^{(k)},\mbB^{(k)}\}_{k=1}^K$.

\begin{table}[!htbp]
\centering
\footnotesize
\setlength{\tabcolsep}{4pt}
\renewcommand{\arraystretch}{0.95}
\caption{Optimization parameters for the uncertainty-quantification experiments.}
\label{tab:uq_training}
\resizebox{\linewidth}{!}{
\begin{tabular}{lcccccccc}
\toprule
& Epochs & Batch size
& $\mathrm{lr}_{\mathrm{shared}}$
& $\mathrm{lr}_{\mathrm{specific}}$
& $\mathrm{wd}_{\mathrm{shared}}$
& $\mathrm{wd}_{\mathrm{specific}}$
& Grad. clip
& Scheduler \\
\midrule
CD1
& $5400$ & $32$
& $5.8\times10^{-5}$
& $1.4\times10^{-4}$
& $2.2\times10^{-3}$
& $2.6\times10^{-2}$
& $5.0$
& None
\\
CD2
& $4200$ & $64$
& $2.7\times10^{-4}$
& $6.4\times10^{-5}$
& $3.9\times10^{-3}$
& $2.6\times10^{-4}$
& $2.0$
& None
\\
CD3
& $3600$ & $64$
& $1.2\times10^{-4}$
& $1.8\times10^{-3}$
& $2.2\times10^{-4}$
& $3.4\times10^{-6}$
& $1.0$
& None
\\
CD4
& $2600$ & $64$
& $3.0\times10^{-3}$
& $2.9\times10^{-3}$
& $9.1\times10^{-6}$
& $4.4\times10^{-3}$
& $2.0$
& Cosine
\\
\bottomrule
\end{tabular}
}
\end{table}

\subsection{Conditional CDF reconstruction}
\label{app:uq_cdf}

We now detail how a conditional CDF is recovered from the learned CMO.
Consider a task $k$ with observations
$\mathcal D_k=\{(x_j^{(k)},y_j^{(k)})\}_{j=1}^{n_k}$.

\paragraph{Centering and task-specific SVD.}
The outputs of the two neural dictionaries are centered independently for each
task. We define
\[
\overline{\Phi}^{(k)}
=
\frac{1}{n_k}\sum_{j=1}^{n_k}\Phi_\theta(x_j^{(k)}),
\qquad
\overline{\Psi}^{(k)}
=
\frac{1}{n_k}\sum_{j=1}^{n_k}\Psi_\theta(y_j^{(k)}),
\]
and
\[
\Phi_{\theta,c}^{(k)}(x)
=
\Phi_\theta(x)-\overline{\Phi}^{(k)},
\qquad
\Psi_{\theta,c}^{(k)}(y)
=
\Psi_\theta(y)-\overline{\Psi}^{(k)}.
\]
The corresponding empirical Gram matrices are
\[
\widehat{\mbG}_{\Phi_\theta}^{(k)}
=
\frac{1}{n_k}
\sum_{j=1}^{n_k}
\Phi_{\theta,c}^{(k)}(x_j^{(k)})^\top
\Phi_{\theta,c}^{(k)}(x_j^{(k)}),
\qquad
\widehat{\mbG}_{\Psi_\theta}^{(k)}
=
\frac{1}{n_k}
\sum_{j=1}^{n_k}
\Psi_{\theta,c}^{(k)}(y_j^{(k)})^\top
\Psi_{\theta,c}^{(k)}(y_j^{(k)}).
\]

During multi-task learning, task $k$ is represented in the shared functional
coordinates by the low-rank matrix $\mbA^{(k)}\mbB^{(k)\top}$. Its
orthonormal singular representation is recovered by whitening with the
task-specific Gram matrices and computing
\[
\widehat{\mbG}_{\Phi_\theta}^{(k)1/2}
\mbA^{(k)}\mbB^{(k)\top}
\widehat{\mbG}_{\Psi_\theta}^{(k)1/2}
=
\mbU^{(k)}
\Diag(\bs\sigma^{(k)})
\mbV^{(k)\top}.
\]
Denoting by $\mbu_i^{(k)}$ and $\mbv_i^{(k)}$ the $i$th columns of
$\mbU^{(k)}$ and $\mbV^{(k)}$, the task-specific singular functions are
\[
\widetilde u_i^{(k)}(x)
=
\Phi_{\theta,c}^{(k)}(x)
\widehat{\mbG}_{\Phi_\theta}^{(k)-1/2}
\mbu_i^{(k)},
\qquad
\widetilde v_i^{(k)}(y)
=
\Psi_{\theta,c}^{(k)}(y)
\widehat{\mbG}_{\Psi_\theta}^{(k)-1/2}
\mbv_i^{(k)}.
\]
The resulting rank-$r_k$ deflated operator is
\[
\dmokt
=
\sum_{i=1}^{r_k}
\sigma_i^{(k)}
\widetilde u_i^{(k)}
\otimes
\widetilde v_i^{(k)}.
\]

\paragraph{Conditional CDF.}
For scalar $Y$, we recover the conditional CDF by taking
$f_t(y)=\mathbf 1_{\{y\leq t\}}$, so that
\[
F_k(t\mid x)
=
\mathbb E\!\left[
f_t(Y^{(k)})\mid X^{(k)}=x
\right].
\]
We approximate expectations with respect to the output marginal $\nu_k$ using
the empirical measure
\[
\widehat{\nu}_k
=
\frac{1}{n_k}
\sum_{j=1}^{n_k}\delta_{y_j^{(k)}}.
\]
Applying the recovered CMO to $f_t$ gives
\[
\widehat F_k(t\mid x)
=
\frac{1}{n_k}
\sum_{j=1}^{n_k}
\mathbf 1_{\{y_j^{(k)}\leq t\}}
+
\frac{1}{n_k}
\sum_{j:y_j^{(k)}\leq t}
\sum_{i=1}^{r_k}
\sigma_i^{(k)}
\widetilde u_i^{(k)}(x)
\widetilde v_i^{(k)}(y_j^{(k)}).
\]
The first term is the empirical marginal CDF of $Y^{(k)}$, while the second
is the correction induced by the deflated conditional operator and introduces
the dependence on $x$.

For computation, we sort the outputs
$y_{(1)}^{(k)}\leq\cdots\leq y_{(n_k)}^{(k)}$ and define
\[
c_j^{(k)}(x)
=
\sum_{i=1}^{r_k}
\sigma_i^{(k)}
\widetilde u_i^{(k)}(x)
\widetilde v_i^{(k)}(y_{(j)}^{(k)}).
\]
At the $m$th ordered output,
\[
\widehat F_k(y_{(m)}^{(k)}\mid x)
=
\frac{m}{n_k}
+
\frac{1}{n_k}
\sum_{j=1}^{m}c_j^{(k)}(x).
\]
The full conditional CDF is therefore obtained by a cumulative sum over the
ordered outputs. For an arbitrary threshold $t$, all observations satisfying
$y_j^{(k)}\leq t$ are retained.

In practice, small negative empirical masses may arise from numerical estimation. We clip these masses to zero and renormalize them before taking the cumulative sum, ensuring that the resulting estimate is a valid conditional CDF.

\paragraph{Wasserstein evaluation.}
For each task $k$, we compare the estimated conditional CDF
$\widehat F_k(\cdot\mid x)$ with the analytical ground-truth CDF of the
corresponding data-generating process using the one-dimensional
Wasserstein-$1$ distance. Both CDFs are evaluated on a uniform grid of
$1000$ output values covering the relevant support of $Y$. For each task,
the distance is computed at $40$ conditioning values uniformly distributed
over the input range and averaged over these conditioning points. The
resulting scores are then averaged across tasks. All experiments are repeated
over $10$ independent random seeds.

\subsection{Transfer to unseen conditional distributions}
\label{app:uq_transfer}

We next evaluate whether the functional spaces learned jointly across source
tasks can be reused to estimate conditional distributions that were not
observed during multi-task training. For each conditional-distribution family,
we generate a new collection of target tasks independently of the source tasks
used to learn the shared dictionaries. The parameters of the target
distributions are drawn from the same task-generating distribution as during
multi-task training.

\paragraph{Transfer protocol.}
Given the learned dictionaries
$\Phi_{\hat\theta}$ and $\Psi_{\hat\theta}$, all their parameters are frozen.
For a new target task with
\[
\mathcal D_{\mathrm{tr}}
=
\{(x_j,y_j)\}_{j=1}^{n_{\mathrm{tr}}},
\]
we estimate only the operator associated with this task. The dictionary
outputs are centered using the target observations, and we compute the
empirical Gram and cross-covariance matrices
\[
\widehat{\mbG}_{\Phi}
=
\frac{1}{n_{\mathrm{tr}}}
\sum_{j=1}^{n_{\mathrm{tr}}}
\Phi_c(x_j)^\top\Phi_c(x_j),
\qquad
\widehat{\mbG}_{\Psi}
=
\frac{1}{n_{\mathrm{tr}}}
\sum_{j=1}^{n_{\mathrm{tr}}}
\Psi_c(y_j)^\top\Psi_c(y_j),
\]
and
\[
\widehat{\mbC}
=
\frac{1}{n_{\mathrm{tr}}}
\sum_{j=1}^{n_{\mathrm{tr}}}
\Phi_c(x_j)^\top\Psi_c(y_j).
\]
The target operator is then recovered in closed form as
\[
\mbM^\star
=
\widehat{\mbG}_{\Phi}^{-1}
\widehat{\mbC}
\widehat{\mbG}_{\Psi}^{-1}.
\]
We subsequently whiten $\mbM^\star$ with the target Gram matrices and compute
its truncated SVD, using the same rank as in the corresponding source model.
The resulting singular functions are used to reconstruct the target
conditional CDF as described in Section~\ref{app:uq_cdf}. No gradient-based
optimization or fine-tuning of the shared dictionaries is performed at
transfer time.

\paragraph{Transfer performance.}
Table~\ref{tab:uq_transfer} reports the Wasserstein-$1$ error obtained on
$100$ unseen target tasks as the number of target observations increases.
Results are averaged over $10$ independent target-data seeds.

\begin{table*}[t]
\centering
\caption{
Transfer to unseen conditional distributions.
Wasserstein-$1$ distance to the ground-truth conditional distribution
(mean $\pm$ standard deviation over $10$ target-data seeds), averaged over
$100$ unseen tasks. Lower is better. The last row of each block reports the
ratio between single-task NCP and T-CMO.
}
\label{tab:uq_transfer}

\resizebox{\linewidth}{!}{%
\begin{tabular}{lcccccccc}
\toprule
$n_{\mathrm{tr}}$
& 50 & 100 & 150 & 200 & 250 & 300 & 350 & 400 \\
\midrule

\multicolumn{9}{l}{\textbf{CD1}}\\
NCP
& \textbf{0.2362 $\pm$ 0.0059}
& \textbf{0.1739 $\pm$ 0.0025}
& \textbf{0.1455 $\pm$ 0.0038}
& \textbf{0.1258 $\pm$ 0.0021}
& \textbf{0.1152 $\pm$ 0.0015}
& \textbf{0.1062 $\pm$ 0.0021}
& 0.1004 $\pm$ 0.0018
& 0.0948 $\pm$ 0.0024 \\
T-CMO
& 0.2629 $\pm$ 0.0053
& 0.1869 $\pm$ 0.0015
& 0.1546 $\pm$ 0.0042
& 0.1315 $\pm$ 0.0012
& 0.1188 $\pm$ 0.0020
& 0.1072 $\pm$ 0.0022
& 0.1004 $\pm$ 0.0015
& \textbf{0.0943 $\pm$ 0.0015} \\
NCP / T-CMO
& 0.90 & 0.93 & 0.94 & 0.96 & 0.97 & 0.99 & 1.00 & 1.01 \\

\midrule
\multicolumn{9}{l}{\textbf{CD2}}\\
NCP
& \textbf{0.2326 $\pm$ 0.0082}
& \textbf{0.1826 $\pm$ 0.0041}
& \textbf{0.1704 $\pm$ 0.0016}
& 0.1588 $\pm$ 0.0015
& 0.1538 $\pm$ 0.0024
& 0.1500 $\pm$ 0.0007
& 0.1470 $\pm$ 0.0024
& 0.1458 $\pm$ 0.0028 \\
T-CMO
& 0.2913 $\pm$ 0.0052
& 0.2089 $\pm$ 0.0022
& 0.1747 $\pm$ 0.0023
& \textbf{0.1547 $\pm$ 0.0016}
& \textbf{0.1391 $\pm$ 0.0032}
& \textbf{0.1244 $\pm$ 0.0021}
& \textbf{0.1186 $\pm$ 0.0017}
& \textbf{0.1125 $\pm$ 0.0007} \\
NCP / T-CMO
& 0.80 & 0.87 & 0.98 & 1.03 & 1.11 & 1.21 & 1.24 & 1.30 \\

\midrule
\multicolumn{9}{l}{\textbf{CD3}}\\
NCP
& \textbf{0.2603 $\pm$ 0.0062}
& 0.2176 $\pm$ 0.0038
& 0.1999 $\pm$ 0.0011
& 0.1934 $\pm$ 0.0016
& 0.1889 $\pm$ 0.0012
& 0.1860 $\pm$ 0.0012
& 0.1830 $\pm$ 0.0005
& 0.1815 $\pm$ 0.0017 \\
T-CMO
& 0.2728 $\pm$ 0.0074
& \textbf{0.1990 $\pm$ 0.0039}
& \textbf{0.1624 $\pm$ 0.0034}
& \textbf{0.1447 $\pm$ 0.0009}
& \textbf{0.1317 $\pm$ 0.0029}
& \textbf{0.1198 $\pm$ 0.0018}
& \textbf{0.1128 $\pm$ 0.0009}
& \textbf{0.1067 $\pm$ 0.0015} \\
NCP / T-CMO
& 0.95 & 1.09 & 1.23 & 1.34 & 1.43 & 1.55 & 1.62 & 1.70 \\

\midrule
\multicolumn{9}{l}{\textbf{CD4}}\\
NCP
& 0.2584 $\pm$ 0.0038
& 0.2225 $\pm$ 0.0028
& 0.2049 $\pm$ 0.0012
& 0.1909 $\pm$ 0.0020
& 0.1797 $\pm$ 0.0033
& 0.1681 $\pm$ 0.0027
& 0.1587 $\pm$ 0.0026
& 0.1498 $\pm$ 0.0017 \\
T-CMO
& \textbf{0.1683 $\pm$ 0.0019}
& \textbf{0.1240 $\pm$ 0.0019}
& \textbf{0.1052 $\pm$ 0.0014}
& \textbf{0.0923 $\pm$ 0.0012}
& \textbf{0.0849 $\pm$ 0.0009}
& \textbf{0.0794 $\pm$ 0.0008}
& \textbf{0.0742 $\pm$ 0.0001}
& \textbf{0.0719 $\pm$ 0.0005} \\
NCP / T-CMO
& 1.54 & 1.79 & 1.95 & 2.07 & 2.12 & 2.12 & 2.14 & 2.08 \\

\bottomrule
\end{tabular}%
}
\end{table*}
The benefit of transferring the shared functional spaces depends strongly on
the conditional-distribution family. For CP1, whose conditional structure is
comparatively simple, T-CMO remains close to single-task NCP and the two
methods reach essentially the same accuracy for large target sample sizes.
The advantage of transfer becomes more pronounced for the higher-dimensional
and more structured families. On CP2, T-CMO improves over single-task NCP from
$n_{\mathrm{tr}}=200$ onward, while on CP3 the improvement appears already at
$n_{\mathrm{tr}}=100$ and increases steadily with the amount of target data.
The strongest effect is observed for CP4, where T-CMO consistently
outperforms single-task NCP and achieves approximately a two-fold reduction
in Wasserstein error for moderate and large target sample sizes.

These results suggest that the shared functional spaces are particularly
useful when the conditional distribution is difficult to recover from an
individual task alone. Multi-task learning provides a representation adapted
to the common structure of the task family, so that a new conditional
distribution can be identified by estimating only its task-specific operator.
For simpler conditional families, where a single-task model can already learn
an accurate representation from relatively few observations, the benefit of
sharing across tasks is correspondingly smaller.

\paragraph{Sensitivity to the number of tasks and samples.}
We finally investigate how transfer performance depends on the amount of
information available during multi-task learning. We focus on \textbf{CD4}
and vary either the number of observations per source task $n$, while fixing
$K=200$, or the number of source tasks $K$, while fixing $n=200$.
Figure~\ref{fig:uq_transfer_sweeps} shows the corresponding trends, while
Table~\ref{tab:CD4_sweeps} reports the same results numerically.

\begin{figure}[t]
    \centering
    \includegraphics[width=\linewidth]{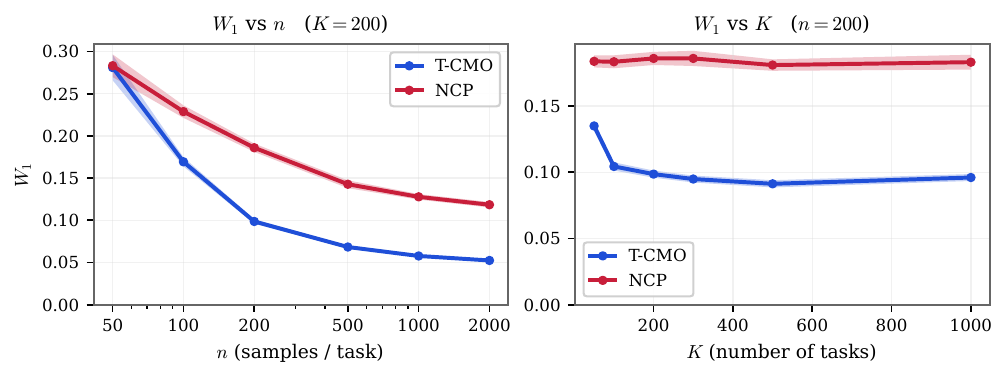}
    \caption{
    Sensitivity of transfer performance on \textbf{CD4}.
    \textbf{Left:} Wasserstein-$1$ error as a function of the number of
    observations per source task $n$, with $K=200$.
    \textbf{Right:} Wasserstein-$1$ error as a function of the number of
    source tasks $K$, with $n=200$.
    }
    \label{fig:uq_transfer_sweeps}
\end{figure}

\begin{table}[t]
\centering
\caption{
Sensitivity of transfer performance on \textbf{CD4}.
Wasserstein-$1$ distance to the ground-truth conditional distribution.
\emph{Gain} denotes the relative reduction in $W_1$ of T-CMO with respect
to single-task NCP. Lower is better.
}
\label{tab:CD4_sweeps}
\begin{tabular}{lccc}
\toprule
 & T-CMO \textbf{(ours)} & NCP & Gain \\
\midrule
\multicolumn{4}{l}{\textit{Varying $n$ (samples per task), $K=200$}} \\
$n=50$
& 0.281 $\pm$ 0.015
& 0.283 $\pm$ 0.013
& 1\% \\
$n=100$
& 0.169 $\pm$ 0.005
& 0.229 $\pm$ 0.007
& 26\% \\
$n=200$
& 0.099 $\pm$ 0.003
& 0.186 $\pm$ 0.005
& 47\% \\
$n=500$
& 0.068 $\pm$ 0.002
& 0.143 $\pm$ 0.005
& 52\% \\
$n=1000$
& 0.058 $\pm$ 0.001
& 0.128 $\pm$ 0.003
& 55\% \\
$n=2000$
& 0.052 $\pm$ 0.001
& 0.118 $\pm$ 0.003
& 56\% \\
\midrule
\multicolumn{4}{l}{\textit{Varying $K$ (number of tasks), $n=200$}} \\
$K=50$
& 0.135 $\pm$ 0.004
& 0.184 $\pm$ 0.005
& 27\% \\
$K=100$
& 0.104 $\pm$ 0.003
& 0.183 $\pm$ 0.005
& 43\% \\
$K=200$
& 0.099 $\pm$ 0.003
& 0.186 $\pm$ 0.005
& 47\% \\
$K=300$
& 0.095 $\pm$ 0.003
& 0.186 $\pm$ 0.006
& 49\% \\
$K=500$
& 0.091 $\pm$ 0.003
& 0.181 $\pm$ 0.004
& 50\% \\
$K=1000$
& 0.096 $\pm$ 0.003
& 0.183 $\pm$ 0.006
& 48\% \\
\bottomrule
\end{tabular}
\end{table}

Increasing the number of observations per source task substantially improves
T-CMO. At $n=50$, T-CMO and single-task NCP perform almost identically,
whereas the relative reduction in Wasserstein error reaches $47\%$ at
$n=200$ and $56\%$ at $n=2000$. Increasing the number of source tasks has a
similar effect: the gain rises from $27\%$ at $K=50$ to approximately $50\%$
for $K\geq300$, after which the performance largely saturates. In contrast,
single-task NCP remains essentially insensitive to $K$, since it does not
exploit information from the additional source tasks. Overall, these results
show that transfer improves as the shared functional spaces are learned from
either more observations per task or a richer collection of related
conditional distributions, with diminishing returns once the common
representation is sufficiently well identified.

\subsection{Baselines}
\label{app:uq_baselines}

We compare MTL-CMO against both multi-task and single-task conditional
distribution estimators. All methods are trained from the same observations
and evaluated using the same conditioning inputs, output grid, and
Wasserstein-$1$ metric described above.

\paragraph{Multi-task baselines.}
\textbf{Pooled-NCP} is a pooled version of Neural Conditional Probability
(NCP)~\citep{kostic2024ncp}. Samples from all source tasks are aggregated and
a single NCP model is trained without access to the task identity. This
baseline therefore exploits the larger pooled dataset, but does not model
task-specific operators.

\textbf{MTL-MDN} combines hard parameter sharing
\citep{caruana1997multitask} with Mixture Density Networks
\citep{bishop1994mixture}. A common neural backbone is shared by all tasks and
each task has its own Gaussian-mixture output head. We use two mixture
components for CD1 and four components for CD2--CD4.

\textbf{DeepJMQR} follows the joint multi-quantile regression approach of
~\citep{rodrigues2020beyond}. We use a shared neural representation together
with a task embedding and predict $99$ conditional quantiles, trained with
the pinball loss. The conditional CDF used for evaluation is recovered from
the predicted quantiles.

\paragraph{Single-task baselines.}
\textbf{NCP}~\citep{kostic2024ncp} is trained independently on every task
and therefore provides the direct task-specific counterpart of our
multi-task operator model. Each model only observes samples from its
corresponding task.

\textbf{Conditional Flow Matching (CFM)} is the flow-matching
objective of \citep{lipman2023flow}. A conditional velocity field is trained
independently for each task and used to transport samples from the reference
distribution to the target conditional distribution. Conditional CDFs are
obtained from generated samples.

\textbf{Mixture Density Network (MDN)}~\citep{bishop1994mixture} directly
parameterizes $p(y\mid x)$ as a Gaussian mixture whose parameters are
predicted by a neural network. As for MTL-MDN, we use two Gaussian components
for CD1 and four components for CD2--CD4.

\textbf{Neural Spline Flow (NSF)}~\citep{durkan2019neural} models the
conditional distribution using monotone rational-quadratic spline
transformations. We use three spline transforms with eight bins and recover
the conditional CDF from samples generated by the fitted flow.

\textbf{Engression}~\citep{shen2025engression} is a neural distributional
regression method trained using the energy score. We use the authors'
implementation with input/output standardization and obtain conditional CDFs
from samples generated by the fitted model.

\textbf{NGBoost}~\citep{duan2020ngboost} performs probabilistic gradient
boosting using the natural gradient. We use a Gaussian predictive
distribution, $100$ boosting estimators, learning rate $0.1$, and
depth-$2$ regression trees as base learners.

\textbf{FlexCode}~\citep{izbicki2017converting} represents the conditional
density in an orthogonal series whose coefficients are estimated through
regression. We use a Fourier basis, with Lasso regression for CD1--CD2 and
Random Forest regression for CD3--CD4.

\textbf{Nadaraya--Watson (NW)} is a non-parametric kernel baseline based on
the classical Nadaraya--Watson estimator
\citep{nadaraya1964estimating,watson1964smooth}. 

\paragraph{Model selection and evaluation.}
Hyperparameters are selected independently for each conditional-distribution
family using held-out validation data; target evaluation tasks are not used
for model selection. For neural baselines, architecture, learning rate,
regularization, and method-specific parameters are selected using the same
validation criterion. For Nadaraya--Watson, only the input-kernel bandwidth
is selected. Generative baselines are evaluated from their generated
conditional samples, whereas methods providing an analytical or empirical
CDF are evaluated directly. All methods are finally compared on exactly the
same conditioning points and output grid.

\FloatBarrier

\section{Langevin Experiments}
\label{app:langevin}

\subsection{Two-dimensional Müller--Brown family}
\label{app:langevin_2d}

\paragraph{Potential family.}
We consider the parametric Müller--Brown family introduced in the main text,
derived from the standard Müller--Brown potential, a widely used benchmark for
metastable dynamics and transition-path problems
~\citep{devergne2024biased,letreut2025markov}.It corresponds to a two-dimensional overdamped Langevin dynamics,
\begin{equation}
d\mbX_t
=
-\nabla V_{r,\theta}(\mbX_t)\,dt
+
\sqrt{2\beta^{-1}}\,d\mbW_t,
\qquad \beta=1,
\label{eq:mb_langevin}
\end{equation}
where $\mbW_t$ is a standard two-dimensional Brownian motion. For
$\mbx=(x_1,x_2)\in\mathbb R^2$, the potential is
\begin{equation}
V_{r,\theta}(\mbx)
=
\frac{1}{s}
\sum_{j=1}^{4}
w_j(r)A_j
\exp\!\left[
(\mbx-\bm\mu_j)^\top
\mbH_j^{(\theta)}
(\mbx-\bm\mu_j)
\right]
+
c_{\mathrm{wall}}
\left[(x_1-x_c)^4+(x_2-y_c)^4\right].
\end{equation}
Here $\mbw(r)=(1,1,r,1)$,
$\mbH_3^{(\theta)}=\mbR_\theta\mbH_3\mbR_\theta^\top$, and
$\mbH_j^{(\theta)}=\mbH_j$ for $j\neq3$, with
$\mbH_j=\left(\begin{smallmatrix}a_j & b_j/2\\ b_j/2 & c_j\end{smallmatrix}\right)$.
The canonical Müller--Brown coefficients are
$\mbA=(-200,-100,-170,15)$,
$\bm\mu_1=(1,0)$, $\bm\mu_2=(0,0.5)$,
$\bm\mu_3=(-0.5,1.5)$, $\bm\mu_4=(-1,1)$,
$\mba=(-1,-1,-6.5,0.7)$,
$\mbb=(0,0,11,0.6)$, and
$\mbc=(-10,-10,-6.5,0.7)$.

We use $s=35$ and sample
\(
r\sim\mathcal U([0.75,1.15]),
\
\theta\sim\mathcal U([-\pi/4,\pi/12]).
\)
The parameter $r$ modifies the amplitude of the third Müller--Brown component,
whereas $\theta$ rotates its local quadratic form over a $60^\circ$ range,
while the other three components remain fixed.

A fixed quartic wall is added for numerical confinement, with
$c_{\mathrm{wall}}=0.5$ and
$(x_c,y_c)=(-0.25,0.875)$. Its contribution is negligible in the metastable
basins and increases rapidly away from the region of interest, preventing
trajectories from exploring numerically irrelevant regions.

\begin{figure}[!htbp]
    \centering
    \includegraphics[width=0.95\linewidth]{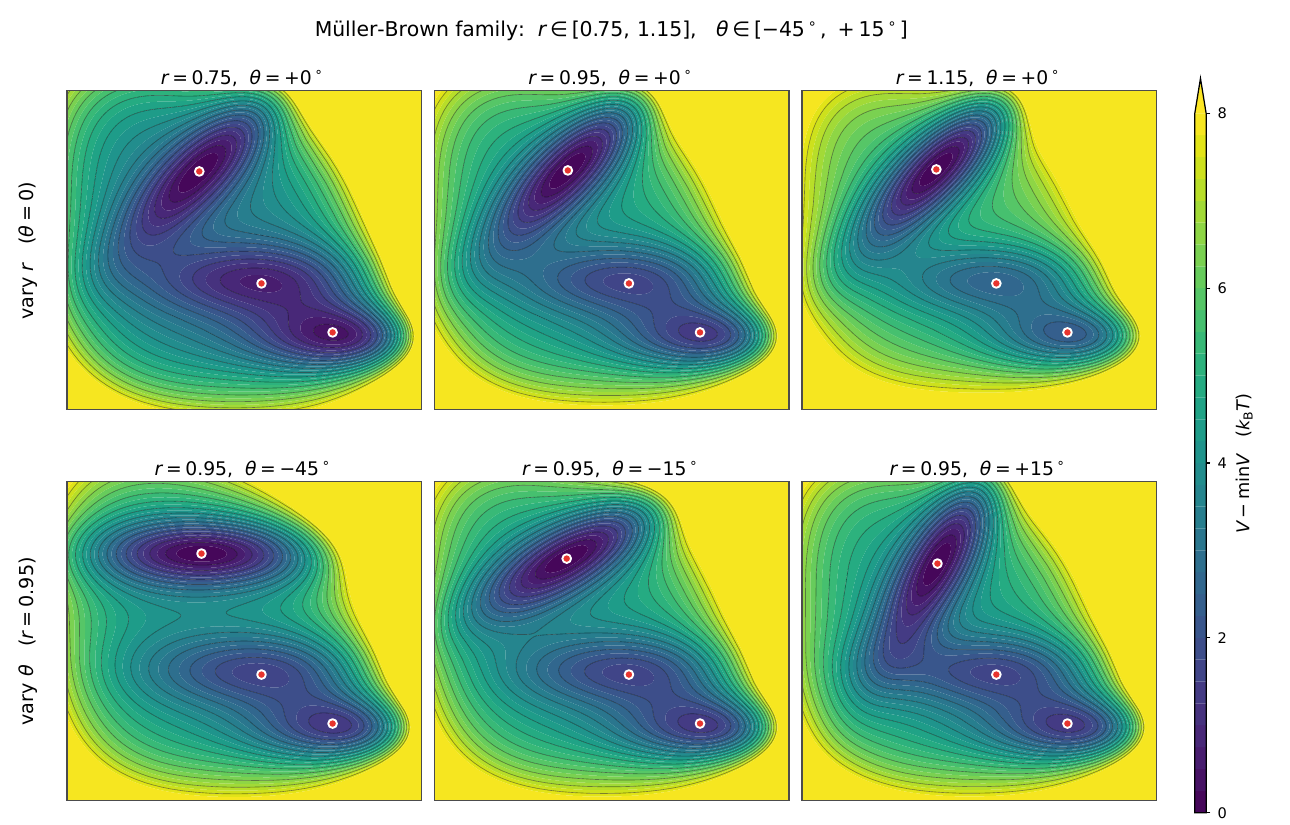}
    \caption{
    \textbf{Parametric Müller--Brown family.}
    Top: varying $r$ at fixed $\theta=0^\circ$ modifies the relative depth of
    the third metastable basin while largely preserving its orientation.
    Bottom: varying $\theta$ at fixed $r=0.95$ rotates the local geometry of
    the same basin over a $60^\circ$ range.
    }
    \label{fig:mb_family}
\end{figure}

Figure~\ref{fig:mb_family} illustrates the complementary effects of the two
task parameters. Varying $r$ primarily changes the energetic importance of the
third basin relative to the other metastable states. In contrast, varying
$\theta$ rotates its anisotropic quadratic structure, modifying the
orientation of the local level sets and of the corresponding drift field.
The family therefore preserves the same global three-basin topology while
inducing controlled task-dependent changes in the dynamics and associated
spectral structure.

\noindent
\textbf{Simulation and dataset.}
For each $(r,\theta)$, we simulate the overdamped Langevin dynamics at a sampling rate $\Delta t_{\mathrm{sim}}=10^{-3}$ and down-sampled the trajectories to the sampling rate $\Delta t_{\mathrm{obs}}=10^{-2}$. We discard the first $5{,}000$ integration steps and retain $80{,}000$
observations for each system.

We generate $950$ systems in total: $750$ source tasks are used for multi-task
learning, $50$ tasks are reserved for model selection, and the remaining
$150$ systems are kept unseen for transfer evaluation. Performance is
evaluated from trajectory prefixes
\(
N\in\{2,5,10,20,40,80\}\times10^3.
\)
All model-selection choices for the competing methods are made on the
$50$ validation tasks, separately for each trajectory length $N$.

\noindent
\textbf{Reference spectrum.}
For each task, reference eigenpairs are obtained by finite-difference
discretization of the backward Langevin generator on $[-3,3]^2$ using a
$240\times240$ grid, corresponding to $\Delta x=0.025$.
We retain the first three non-trivial eigenpairs
$(\lambda_j,\psi_j)_{j=1}^{3}$.
To avoid numerical instabilities in regions far outside the relevant part of
the energy landscape, the potential used to construct the discrete generator
is clipped at $V_{\min}+300$.

The stationary density is
\(
\pi_{r,\theta}(\mathbf{x})
\propto
\exp\!\left[-\beta V_{r,\theta}(\mathbf{x})\right].
\)
Reference and estimated eigenfunctions are normalized in
$L^2(\pi_{r,\theta})$, and mode-wise recovery is measured using the
sign-invariant cosine error
$1-\cos_{\pi}(\widehat{\psi}_j,\psi_j)$.
We additionally report eigenspace and eigenvalue diagnostics below.

\subsection{Shared representation and operator estimation}
\label{app:mb_operator_learning}

\paragraph{Multi-task representation learning.}
We instantiate the multi-task learning procedure of
\Cref{sec:method} with a single shared map
$\Phi_\theta=\Psi_\omega=\phi_\theta$, exploiting the reversibility of the
Langevin dynamics. The map
$\phi_\theta:\mathbb R^2\rightarrow\mathbb R^{64}$ is parameterized by a
three-layer MLP with hidden width $576$ and SiLU activations. During the
multi-task stage, each system is associated with a rank-$20$ operator
\(
    M^{(k)} = A_kB_k^\top,
    \
    A_k,B_k\in\mathbb R^{64\times20}.
\)
The final shared representation is learned on the $800$ training systems.
Training uses consecutive states, corresponding to $\tau=0.01$, with temporal
windows of length $2176$. The remaining optimization settings are summarized
in Table~\ref{tab:mb_training}.

\noindent
\textbf{System-specific operator estimation.}
For each unseen system, we apply T-CMO using the closed-form transfer procedure
of \Cref{alg:tcmo}, with adaptation rank $20$ and no gradient updates.
The resulting representation is then passed to the resolvent estimator
described in Appendix~\ref{app:resolvent}, from which we retain the three
slowest non-trivial modes for evaluation.

\noindent
\textbf{Spectral-rank diagnostic.}
We examine the reduced-rank reconstruction error of the whitened lagged
operator as a function of the retained rank. As shown in
Fig.~\ref{fig:mb_rrr_rank}, the error decreases sharply over the first three
components and then rapidly saturates. This provides an empirical diagnostic
for the low-dimensional spectral structure captured by the first three modes.

\begin{figure}[ht]
    \centering
    \includegraphics[width=0.52\linewidth]{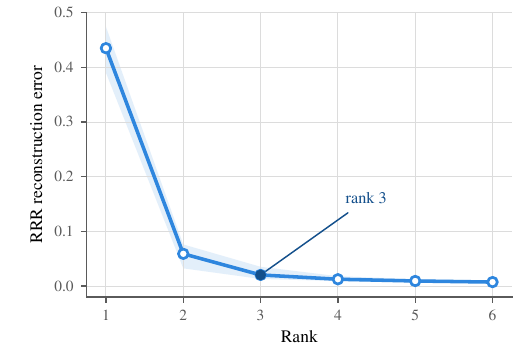}
    \caption{
    \textbf{Spectral-rank diagnostic.}
    Mean reduced-rank reconstruction error of the whitened lagged operator as
    a function of the retained rank. Most of the reduction occurs within the
    first three components, after which the error rapidly saturates.
    }
    \label{fig:mb_rrr_rank}
\end{figure}

\paragraph{Geometry of the learned operators.}
We next examine whether the task-specific operators learned in the shared
functional space retain the structure of the underlying Müller--Brown family.
Each recovered rank-$3$ operator is represented in the common
$\phi_\theta$ basis, and pairwise operator distances are computed using
SGOT~\citep{germain2026spectral}. We apply t-SNE to the resulting distance
matrix and visualize the induced geometry in
\Cref{fig:mb_sgot_tsne}. Importantly, neither $r$ nor $\theta$ is provided
during learning. Nevertheless, the embedding exhibits smooth gradients with
respect to both parameters: operators associated with nearby values of $r$ or
$\theta$ occupy neighboring regions. This shows that the learned operators
retain the structured variation of the underlying dynamical family without
supervision from its physical parameters.

\begin{figure}[!t]
    \centering
    \includegraphics[width=0.72\linewidth]{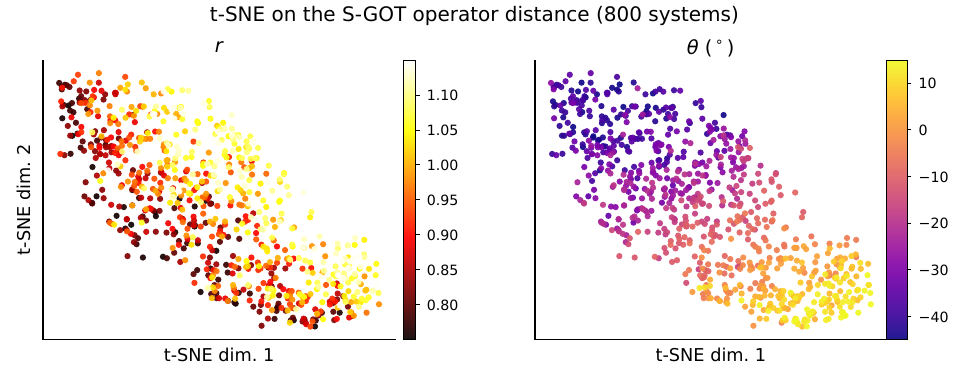}
    \caption{
    \textbf{Geometry of the learned Müller--Brown operators.}
    t-SNE of pairwise S-GOT distances between operators learned during the
    multi-task stage, colored by $r$ (left) and $\theta$ (right).
    }
    \label{fig:mb_sgot_tsne}
\end{figure}

\paragraph{Operator estimation accuracy.}
We next assess whether sharing the functional representation improves
operator estimation on the systems used during multi-task learning.
\Cref{fig:mb_mtl_vs_single} compares MTL-CMO with independently trained
single-task CMO models. Most systems lie below the diagonal, showing that the
shared representation improves eigenfunction recovery across the task family
rather than on only a small subset of systems. The improvement is observed
across all three modes and is strongest for the more difficult third mode
$\psi_3$.

\begin{figure}[!t]
    \centering
    \includegraphics[width=\linewidth]{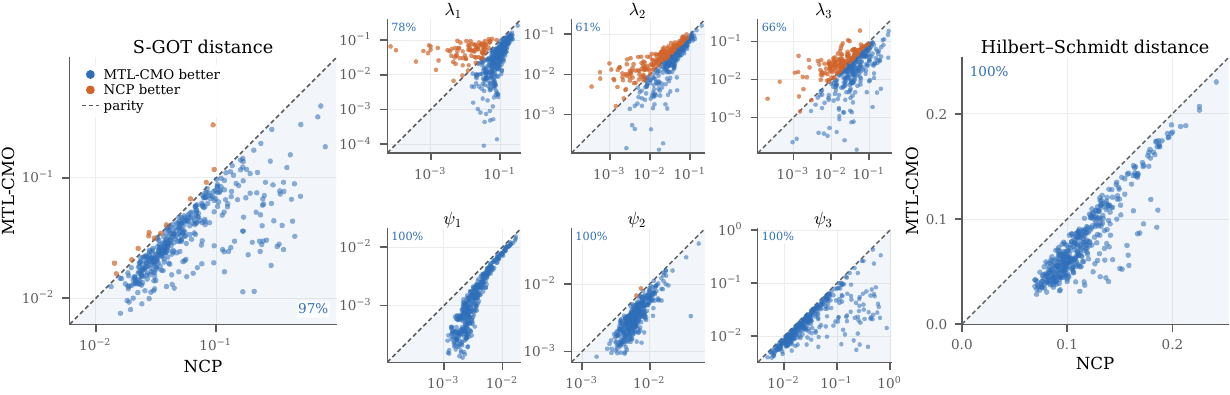}
    \caption{
    \textbf{Multi-task versus single-task CMO learning.}
    Eigenfunction errors of MTL-CMO versus independently trained single-task
    CMO models. Points below the diagonal favor MTL-CMO.
    }
    \label{fig:mb_mtl_vs_single}
\end{figure}
\begin{table}[!htbp]
\centering
\caption{
Architecture, training, and spectral-estimation configuration for the
Müller--Brown experiment.
}
\label{tab:mb_training}
\small
\begin{tabular}{lc}
\toprule
\textbf{Configuration} & \textbf{Value} \\
\midrule
\multicolumn{2}{l}{\textit{Architecture}} \\
Input dimension & $2$ \\
Shared feature dimension $d$ & $64$ \\
Hidden layers & $3\times576$ \\
Activation & SiLU \\
Task-specific rank & $20$ \\
\midrule
\multicolumn{2}{l}{\textit{Multi-task training}} \\
Systems & $800$ \\
Batch size & $40$ \\
Window length & $2176$ \\
Lag & $1$ step  \\
Epochs & $450$ \\
Optimizer & AdamW \\
Network learning rate & $3.117\times10^{-4}$ \\
Operator learning rate & $1.780\times10^{-4}$ \\
Weight decay & $4.49\times10^{-7}$ \\
$\lambda$ & $3.952\times10^{-3}$ \\
\midrule
\multicolumn{2}{l}{\textit{Spectral estimation}} \\
Adaptation rank & $20$ \\
Spectral rank & $3$ \\
Resolvent shift $\mu$ & $4.2$ \\
Maximum lag & $10{,}000$ \\
Regularization & $3.339\times10^{-8}$ \\
\bottomrule
\end{tabular}
\end{table}

\FloatBarrier

\FloatBarrier
\subsection{Transfer learning on unseen systems}
\label{app:mb_transfer}

\paragraph{Experimental protocol.}
We evaluate spectral recovery on the $150$ unseen Müller--Brown systems.
T-CMO adapts the shared representation to each target system using the
closed-form procedure of \Cref{alg:tcmo}, with adaptation rank $20$ and no
gradient updates. Spectral quantities are then extracted with the
resolvent-based estimator, and we retain the three slowest non-trivial modes, see Appendix~\ref{app:tcmo-op-est}.
Performance is evaluated from trajectory prefixes
$N\in\{2,5,10,20,40,80\}\times10^3$.
All methods receive exactly the same target prefixes; methods requiring
task-specific estimation are refitted from the corresponding prefix, whereas
transfer and meta-learning methods reuse their source-stage representation
according to their respective procedures. Hyperparameters are selected on the
$50$ validation systems separately for each trajectory length.

\paragraph{Baselines.}
We compare against six complementary approaches.
\textbf{Single-task NCP}~\citep{kostic2024ncp} learns an independent
representation from each target trajectory and therefore uses no information
from the source systems.
\textbf{Pooled NCP}~\citep{kostic2024ncp} trains a single model on the
concatenated source trajectories without task identities, testing whether
pooling alone provides a transferable representation.
\textbf{RFF-EDMD} combines fixed random Fourier features
\citep{rahimi2007random} with direct EDMD-type operator estimation
\citep{kostic2022learning}, while \textbf{RFF + resolvent} uses the same
features with the resolvent-based spectral estimator
\citep{kostic2025laplace}.
\textbf{MetaKoopman}~\citep{iwata2021meta} explicitly meta-learns Koopman
spectral information across related short time series, and
\textbf{VAMPNet}~\citep{mardt2018vampnets} learns dominant dynamical modes
through a neural variational objective.

\paragraph{Quantitative spectral recovery.}
The mode-wise results show where this improvement arises.
For the eigenfunctions (\Cref{tab:l2d_psi}), T-CMO attains the lowest mean
error in $17$ of the $18$ mode--trajectory combinations. The only exception
is $\psi_3$ at $N=5{,}000$, where it is second to VAMPNet by $3.1\times10^{-3}$,
well within the reported across-system variability. The first two modes become
accurate for several methods as $N$ increases, whereas $\psi_3$ remains
substantially harder and provides the clearest separation between approaches.
For the eigenvalues (\Cref{tab:l2d_lambda}), T-CMO achieves the lowest reported
mean error in every column, up to a tie at displayed precision for
$\lambda_1$ at $N=40{,}000$. Thus, transfer improves both the spatial
eigenfunctions and the associated relaxation rates.

\begin{table}[t]
\centering
\scriptsize
\caption{
Eigenfunction errors for the first three non-trivial modes on the $150$ unseen
Müller--Brown systems. Values are mean $\pm$ standard deviation.
Lower is better; \textbf{best} and \underline{second best} per column.
}
\label{tab:l2d_psi}
\setlength{\tabcolsep}{3.5pt}
\renewcommand{\arraystretch}{0.94}
\resizebox{\linewidth}{!}{
\begin{tabular}{lcccccc}
\toprule
& \multicolumn{6}{c}{\textbf{Trajectory length} $N$} \\
\cmidrule(lr){2-7}
Method
& \textbf{2\,000} & \textbf{5\,000} & \textbf{10\,000}
& \textbf{20\,000} & \textbf{40\,000} & \textbf{80\,000} \\
\midrule

\multicolumn{7}{l}{\textit{$\psi_1$}} \\
T-CMO \textbf{(ours)}
& $\mathbf{0.0731 \pm 0.0966}$
& $\mathbf{0.0294 \pm 0.0414}$
& $\mathbf{0.0151 \pm 0.0206}$
& $\mathbf{0.0079 \pm 0.0095}$
& $\mathbf{0.0040 \pm 0.0053}$
& $\mathbf{0.0021 \pm 0.0026}$ \\
RFF-EDMD
& $0.0982 \pm 0.1537$ & $0.0333 \pm 0.0427$
& $0.0248 \pm 0.0821$ & $0.0098 \pm 0.0098$
& $0.0058 \pm 0.0054$ & $0.0037 \pm 0.0026$ \\
MetaKoopman
& $0.0837 \pm 0.1154$ & $0.0327 \pm 0.0424$
& $0.0174 \pm 0.0209$ & $0.0099 \pm 0.0098$
& $0.0058 \pm 0.0055$ & $0.0040 \pm 0.0029$ \\
Single-task NCP
& $0.0839 \pm 0.1071$ & $0.0332 \pm 0.0415$
& $0.0177 \pm 0.0206$ & $0.0101 \pm 0.0095$
& $0.0059 \pm 0.0054$ & $0.0041 \pm 0.0026$ \\
RFF + resolvent
& $\underline{0.0781 \pm 0.0896}$ & $0.0329 \pm 0.0421$
& $0.0170 \pm 0.0206$ & $0.0093 \pm 0.0095$
& $0.0052 \pm 0.0054$ & $0.0033 \pm 0.0026$ \\
Pooled NCP
& $0.1013 \pm 0.1298$ & $0.0332 \pm 0.0425$
& $\underline{0.0163 \pm 0.0206}$
& $\underline{0.0083 \pm 0.0095}$
& $\underline{0.0041 \pm 0.0054}$
& $\underline{0.0022 \pm 0.0026}$ \\
VAMPNet
& $0.0900 \pm 0.1310$ & $\underline{0.0308 \pm 0.0426}$
& $0.0164 \pm 0.0213$ & $0.0084 \pm 0.0098$
& $0.0042 \pm 0.0054$ & $0.0023 \pm 0.0026$ \\

\midrule
\multicolumn{7}{l}{\textit{$\psi_2$}} \\
T-CMO \textbf{(ours)}
& $\mathbf{0.1174 \pm 0.1652}$
& $\mathbf{0.0352 \pm 0.0350}$
& $\mathbf{0.0173 \pm 0.0133}$
& $\mathbf{0.0089 \pm 0.0063}$
& $\mathbf{0.0046 \pm 0.0032}$
& $\mathbf{0.0022 \pm 0.0014}$ \\
RFF-EDMD
& $0.1607 \pm 0.1744$ & $0.0710 \pm 0.1234$
& $0.0453 \pm 0.1162$ & $0.0224 \pm 0.0567$
& $0.0085 \pm 0.0046$ & $0.0054 \pm 0.0021$ \\
MetaKoopman
& $0.1722 \pm 0.2028$ & $0.0606 \pm 0.0855$
& $0.0361 \pm 0.0745$ & $0.0289 \pm 0.0972$
& $0.0144 \pm 0.0323$ & $0.0079 \pm 0.0037$ \\
Single-task NCP
& $0.1890 \pm 0.2045$ & $0.0723 \pm 0.1072$
& $0.0378 \pm 0.0738$ & $0.0235 \pm 0.0734$
& $0.0149 \pm 0.0656$ & $0.0065 \pm 0.0025$ \\
RFF + resolvent
& $0.2013 \pm 0.2181$ & $0.0761 \pm 0.1102$
& $0.0426 \pm 0.0887$ & $0.0295 \pm 0.1061$
& $0.0195 \pm 0.0932$ & $0.0053 \pm 0.0023$ \\
Pooled NCP
& $0.2567 \pm 0.2026$ & $0.0890 \pm 0.1145$
& $0.0478 \pm 0.0982$ & $0.0349 \pm 0.1270$
& $0.0191 \pm 0.1039$ & $\underline{0.0036 \pm 0.0023}$ \\
VAMPNet
& $\underline{0.1559 \pm 0.2062}$
& $\underline{0.0446 \pm 0.0417}$
& $\underline{0.0235 \pm 0.0158}$
& $\underline{0.0144 \pm 0.0084}$
& $\underline{0.0081 \pm 0.0044}$
& $0.0061 \pm 0.0027$ \\

\midrule
\multicolumn{7}{l}{\textit{$\psi_3$}} \\
T-CMO \textbf{(ours)}
& $\mathbf{0.2555 \pm 0.2206}$
& $\underline{0.1378 \pm 0.1490}$
& $\mathbf{0.0756 \pm 0.0766}$
& $\mathbf{0.0410 \pm 0.0397}$
& $\mathbf{0.0263 \pm 0.0283}$
& $\mathbf{0.0265 \pm 0.0186}$ \\
RFF-EDMD
& $0.3527 \pm 0.2495$ & $0.2155 \pm 0.2180$
& $0.1392 \pm 0.1820$ & $0.1216 \pm 0.1600$
& $0.0595 \pm 0.1098$ & $0.0407 \pm 0.0805$ \\
MetaKoopman
& $0.3405 \pm 0.2370$ & $0.2073 \pm 0.1903$
& $0.1416 \pm 0.1534$ & $0.1161 \pm 0.1555$
& $0.0805 \pm 0.1244$ & $0.0565 \pm 0.0950$ \\
Single-task NCP
& $0.3682 \pm 0.2335$ & $0.2267 \pm 0.2020$
& $0.1593 \pm 0.1865$ & $0.1267 \pm 0.1765$
& $0.0939 \pm 0.1615$ & $0.0593 \pm 0.1160$ \\
RFF + resolvent
& $0.3821 \pm 0.2407$ & $0.2376 \pm 0.2083$
& $0.1775 \pm 0.2036$ & $0.1363 \pm 0.1851$
& $0.1055 \pm 0.1793$ & $0.0678 \pm 0.1376$ \\
Pooled NCP
& $0.4675 \pm 0.2236$ & $0.2781 \pm 0.2165$
& $0.2045 \pm 0.2158$ & $0.1502 \pm 0.1988$
& $0.1228 \pm 0.2024$ & $0.0739 \pm 0.1482$ \\
VAMPNet
& $\underline{0.3118 \pm 0.2496}$
& $\mathbf{0.1347 \pm 0.1388}$
& $\underline{0.0829 \pm 0.0813}$
& $\underline{0.0623 \pm 0.0623}$
& $\underline{0.0410 \pm 0.0533}$
& $\underline{0.0290 \pm 0.0321}$ \\
\bottomrule
\end{tabular}
}
\end{table}

\begin{table}[t]
\centering
\scriptsize
\caption{
Relative eigenvalue errors for the first three non-trivial modes on the
$150$ unseen Müller--Brown systems. Values are mean $\pm$ standard deviation.
Lower is better; \textbf{best} and \underline{second best} per column.
}
\label{tab:l2d_lambda}
\setlength{\tabcolsep}{3.5pt}
\renewcommand{\arraystretch}{0.94}
\resizebox{\linewidth}{!}{
\begin{tabular}{lcccccc}
\toprule
& \multicolumn{6}{c}{\textbf{Trajectory length} $N$} \\
\cmidrule(lr){2-7}
Method
& \textbf{2\,000} & \textbf{5\,000} & \textbf{10\,000}
& \textbf{20\,000} & \textbf{40\,000} & \textbf{80\,000} \\
\midrule

\multicolumn{7}{l}{\textit{$\lambda_1$}} \\
T-CMO \textbf{(ours)}
& $\mathbf{0.8288 \pm 2.8035}$ & $\mathbf{0.1919 \pm 0.2668}$
& $\mathbf{0.1241 \pm 0.1057}$ & $\mathbf{0.0916 \pm 0.0824}$
& $\mathbf{0.0643 \pm 0.0569}$ & $\mathbf{0.0443 \pm 0.0367}$ \\
RFF-EDMD
& $0.9852 \pm 2.8284$ & $0.3160 \pm 0.3197$
& $0.2710 \pm 0.1682$ & $0.1236 \pm 0.1025$
& $0.2239 \pm 0.0935$ & $0.1620 \pm 0.0706$ \\
MetaKoopman
& $1.0705 \pm 3.0177$ & $0.2779 \pm 0.4465$
& $0.1842 \pm 0.1614$ & $0.1356 \pm 0.1260$
& $0.1043 \pm 0.0853$ & $0.0870 \pm 0.0689$ \\
Single-task NCP
& $1.0360 \pm 2.9714$ & $0.2714 \pm 0.4275$
& $0.1829 \pm 0.1489$ & $0.1346 \pm 0.1168$
& $0.1026 \pm 0.0793$ & $0.0885 \pm 0.0590$ \\
RFF + resolvent
& $1.0123 \pm 2.9517$ & $0.2567 \pm 0.4141$
& $0.1676 \pm 0.1395$ & $0.1202 \pm 0.1095$
& $0.0878 \pm 0.0731$ & $0.0695 \pm 0.0530$ \\
Pooled NCP
& $1.0015 \pm 2.9503$ & $0.2490 \pm 0.4097$
& $0.1570 \pm 0.1313$ & $0.1088 \pm 0.1026$
& $0.0766 \pm 0.0635$ & $0.0512 \pm 0.0458$ \\
VAMPNet
& $\underline{0.9758 \pm 3.0234}$
& $\underline{0.2143 \pm 0.3208}$
& $\underline{0.1348 \pm 0.1203}$
& $\underline{0.0951 \pm 0.0863}$
& $\underline{0.0643 \pm 0.0570}$
& $\underline{0.0452 \pm 0.0386}$ \\

\midrule
\multicolumn{7}{l}{\textit{$\lambda_2$}} \\
T-CMO \textbf{(ours)}
& $\mathbf{0.2605 \pm 0.4016}$ & $\mathbf{0.1140 \pm 0.0923}$
& $\mathbf{0.0764 \pm 0.0659}$ & $\mathbf{0.0520 \pm 0.0393}$
& $\mathbf{0.0354 \pm 0.0304}$ & $\mathbf{0.0240 \pm 0.0202}$ \\
RFF-EDMD
& $\underline{0.2830 \pm 0.3944}$ & $\underline{0.1269 \pm 0.1085}$
& $0.0922 \pm 0.0951$ & $\underline{0.0564 \pm 0.0464}$
& $0.0544 \pm 0.0382$ & $0.0395 \pm 0.0269$ \\
MetaKoopman
& $0.3265 \pm 0.5164$ & $0.1515 \pm 0.1390$
& $0.1077 \pm 0.0933$ & $0.0801 \pm 0.0647$
& $0.0649 \pm 0.0514$ & $0.0569 \pm 0.0388$ \\
Single-task NCP
& $0.3154 \pm 0.4623$ & $0.1498 \pm 0.1325$
& $0.1035 \pm 0.0890$ & $0.0751 \pm 0.0617$
& $0.0582 \pm 0.0478$ & $0.0482 \pm 0.0348$ \\
RFF + resolvent
& $0.3122 \pm 0.4538$ & $0.1484 \pm 0.1314$
& $0.1021 \pm 0.0866$ & $0.0739 \pm 0.0619$
& $0.0559 \pm 0.0466$ & $0.0440 \pm 0.0325$ \\
Pooled NCP
& $0.3111 \pm 0.4416$ & $0.1483 \pm 0.1302$
& $0.1019 \pm 0.0842$ & $0.0714 \pm 0.0623$
& $0.0525 \pm 0.0454$ & $0.0388 \pm 0.0294$ \\
VAMPNet
& $0.3214 \pm 0.5216$ & $0.1308 \pm 0.1264$
& $\underline{0.0891 \pm 0.0946}$ & $0.0566 \pm 0.0492$
& $\underline{0.0417 \pm 0.0356}$ & $\underline{0.0310 \pm 0.0239}$ \\

\midrule
\multicolumn{7}{l}{\textit{$\lambda_3$}} \\
T-CMO \textbf{(ours)}
& $\mathbf{0.2021 \pm 0.2943}$ & $\mathbf{0.0962 \pm 0.0798}$
& $\mathbf{0.0646 \pm 0.0545}$ & $\mathbf{0.0445 \pm 0.0373}$
& $\mathbf{0.0338 \pm 0.0264}$ & $\mathbf{0.0249 \pm 0.0215}$ \\
RFF-EDMD
& $\underline{0.2055 \pm 0.2940}$ & $0.1103 \pm 0.1061$
& $0.0837 \pm 0.0871$ & $0.0705 \pm 0.0791$
& $0.0467 \pm 0.0494$ & $\underline{0.0313 \pm 0.0337}$ \\
MetaKoopman
& $0.2753 \pm 0.3363$ & $0.1369 \pm 0.1217$
& $0.1157 \pm 0.0882$ & $0.0913 \pm 0.0713$
& $0.0781 \pm 0.0621$ & $0.0731 \pm 0.0415$ \\
Single-task NCP
& $0.2640 \pm 0.3232$ & $0.1353 \pm 0.1120$
& $0.1059 \pm 0.0865$ & $0.0848 \pm 0.0686$
& $0.0681 \pm 0.0585$ & $0.0557 \pm 0.0428$ \\
RFF + resolvent
& $0.2551 \pm 0.3070$ & $0.1360 \pm 0.1100$
& $0.1090 \pm 0.0903$ & $0.0899 \pm 0.0808$
& $0.0708 \pm 0.0640$ & $0.0541 \pm 0.0477$ \\
Pooled NCP
& $0.2538 \pm 0.3021$ & $0.1380 \pm 0.1151$
& $0.1133 \pm 0.0921$ & $0.0921 \pm 0.0858$
& $0.0734 \pm 0.0669$ & $0.0539 \pm 0.0498$ \\
VAMPNet
& $0.2370 \pm 0.4022$ & $\underline{0.1096 \pm 0.1212}$
& $\underline{0.0769 \pm 0.0622}$ & $\underline{0.0544 \pm 0.0448}$
& $\underline{0.0446 \pm 0.0361}$ & $0.0318 \pm 0.0270$ \\
\bottomrule
\end{tabular}
}
\end{table}

\FloatBarrier

\paragraph{Transfer versus single-task estimation.}
To isolate the contribution of the shared representation, we compare T-CMO
directly with single-task NCP, which estimates each unseen system independently
from the same target trajectory. Across the quantitative tables above, T-CMO
has lower mean eigenfunction and eigenvalue errors than the single-task model
for every reported mode and trajectory length. The improvement is modest for
the easiest mode once trajectories become long, but becomes substantially
larger for the harder second and third modes, showing that transfer is most
useful when the target spectral structure is difficult to estimate from a
single trajectory.

\begin{figure}[!htbp]
    \centering
    \includegraphics[width=0.92\linewidth]{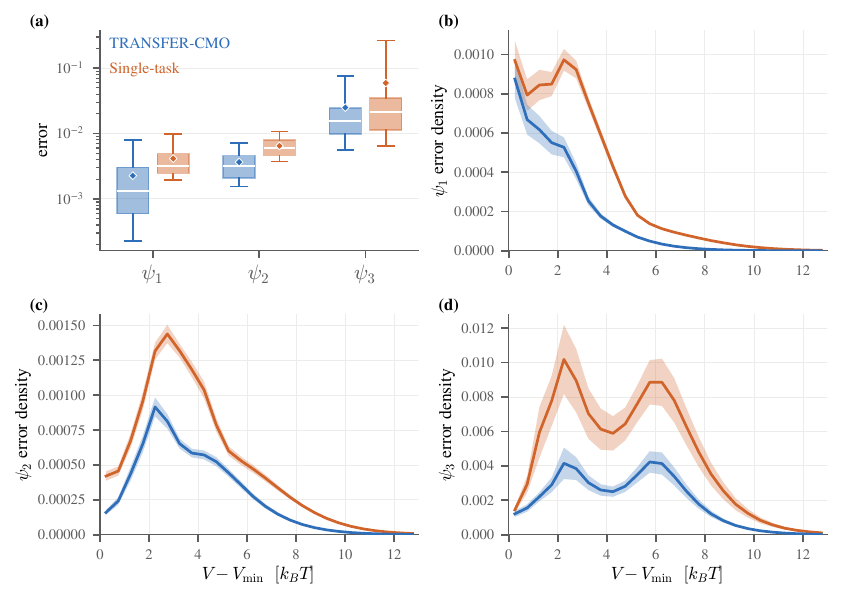}
    \caption{
    \textbf{Transfer versus single-task spectral recovery.}
    \textbf{(a)} Distribution of mode-wise
    $1-\cos_{\pi}(\widehat{\psi}_j,\psi_j)$ errors over unseen systems.
    \textbf{(b--d)} Error distribution across potential-energy levels for
    $\psi_1$, $\psi_2$, and $\psi_3$.
    }
    \label{fig:mb_transfer_modes}
\end{figure}

\Cref{fig:mb_transfer_modes} further reveals a clear hierarchy in spectral
difficulty. Both methods recover the first two modes relatively accurately,
whereas $\psi_3$ exhibits larger errors and stronger variability across
systems. T-CMO reduces the error for all three modes, with the clearest
separation on $\psi_3$. The energy-resolved curves show that the residual
error is not spread uniformly over the state space: it is concentrated over
specific non-minimal energy ranges associated with the more difficult parts
of the landscape. T-CMO suppresses these errors across the energy range rather
than improving only isolated regions.

\begin{figure}[t]
    \centering
    \includegraphics[width=0.95\linewidth]{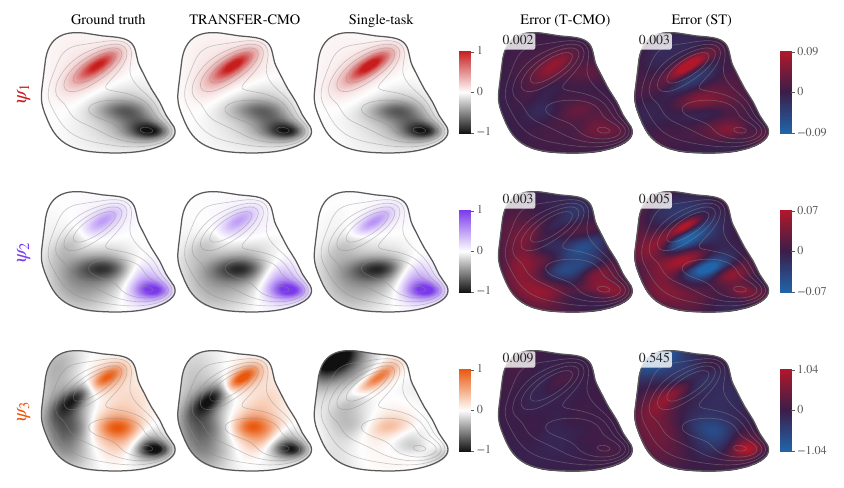}
    \caption{
    \textbf{Spatial recovery of the first three eigenfunctions on a representative
    unseen system.}
    From left to right: finite-difference reference, T-CMO, single-task NCP,
    and the corresponding pointwise errors. Eigenfunctions are sign-aligned
    and normalized in $L^2(\pi)$.
    }
    \label{fig:mb_transfer_fields}
\end{figure}

\Cref{fig:mb_transfer_fields} provides a spatial view of the same effect.
The displayed eigenfunctions are multiplied by $\sqrt{\pi}$, so that
$\|f\|_{L^2(\pi)}^2=\int |\sqrt{\pi(\mbx)}f(\mbx)|^2\,d\mbx$ and visible
spatial discrepancies directly reflect their contribution to the evaluation
metric. For $\psi_1$ and $\psi_2$, both approaches reproduce the dominant
structure, although T-CMO leaves smaller residual errors. The difference is
much stronger for $\psi_3$: T-CMO remains close to the finite-difference
reference, whereas the independently learned estimate exhibits substantial
spatial distortion. Together with the quantitative results above, this shows
that reusing the multi-task representation improves both global operator
recovery and the more difficult task-specific spectral components.

\FloatBarrier

\section{Plasma experiments}
\label{app:plasma}

This experiment tests whether functional spaces learned from turbulent plasma
trajectories, without access to the physical control parameters, recover the
physical organization of the Tokam2D family. MTL-CMO first learns the shared
representation across source systems; T-CMO then estimates a system-specific
operator in closed form, from which we extract a finite-dimensional generator
spectrum. The physical parameters $(g,\kappa)$ are used only afterward as
diagnostic variables. The complete experiment is run on a single NVIDIA RTX
A6000 GPU.

\subsection{Physical setting and plasma data}
\label{sec:plasma_physics}

\paragraph{Physical model.}
The benchmark is based on a reduced two-field model for edge turbulence,
describing the coupled evolution of density fluctuations $n(x,y,t)$ and the
electrostatic potential $\phi(x,y,t)$
\citep{hasegawa1983plasma,ghendrih2018sol,ghendrih2022avalanche}.
The potential determines the perpendicular $\mathbf E\times\mathbf B$ flow and
the vorticity,
\[
\mathbf V_E=(-\partial_y\phi,\partial_x\phi),
\qquad
\Omega=\Delta_\perp\phi,
\qquad
\Delta_\perp=\partial_{xx}+\partial_{yy}.
\]
Since $\mathbf V_E$ is divergence free, nonlinear transport can be written as
$\mathbf V_E\!\cdot\nabla f=[\phi,f]$, with
$[\phi,f]=\partial_x\phi\,\partial_y f-\partial_y\phi\,\partial_x f$.

We consider the gradient-driven configuration, with inverse density-gradient
length $\kappa=1/L_n$. The reduced dynamics are
\[
\left\{
\begin{aligned}
\partial_t n
+\kappa\,\partial_y\phi
+[\phi,n]
-D_n\Delta_\perp n
&=
-\sigma_n n+\sigma_{n\phi}\phi,
\\
\partial_t\Omega
+g\,\partial_y n
+[\phi,\Omega]
-D_\phi\Delta_\perp\Omega
&=
-\sigma_{\phi n}n+\sigma_\phi\phi,
\qquad
\Omega=\Delta_\perp\phi .
\end{aligned}
\right.
\]
The two parameters act through distinct mechanisms.
The term $\kappa\,\partial_y\phi$ couples the imposed background density
gradient to the radial $\mathbf E\times\mathbf B$ motion and drives density
fluctuations, whereas $g\,\partial_y n$ couples density perturbations back into
the vorticity equation and controls the interchange mechanism. Their nonlinear
interaction closes the feedback loop
$n\rightarrow\Omega\rightarrow\phi\rightarrow\mathbf V_E\rightarrow n$.
Diffusion and linear loss terms are fixed throughout the dataset.

A physically important observable is the radial particle flux
$\Gamma_x=nV_{E,x}=-n\,\partial_y\phi$, which measures radial transport induced
by the correlation between density fluctuations and the
$\mathbf E\times\mathbf B$ flow.

\paragraph{Simulation family.}
We generate $M=400$ simulations using Tokam2D~\citep{tokam2d}, with
$g\sim\mathcal U([0,0.5])$ and
$\kappa\sim\mathcal U([1,3.5])$. All remaining physical and numerical
parameters are fixed. Simulations are performed on a $128\times128$ spatial
grid over a square domain of side length $64$, with snapshots separated by
$\Delta t=0.05$. Each trajectory contains $4093$ stored states. The first
$320$ systems are used for multi-task learning and the remaining $80$ are held
out for evaluation.

\begin{figure}[t]
    \centering
    \includegraphics[width=\linewidth]{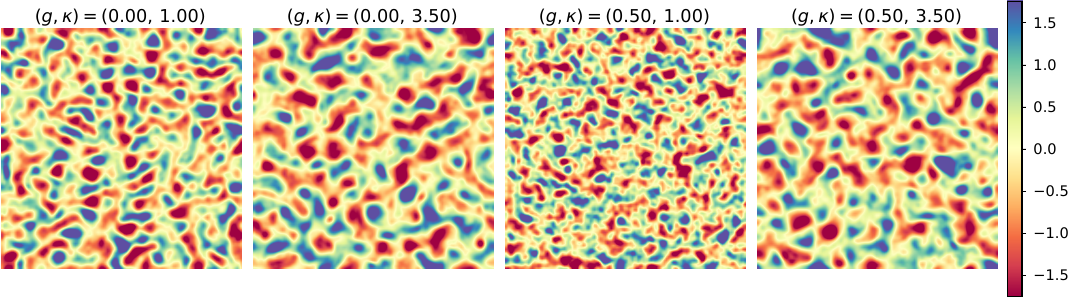}
    \vspace{0.25em}
    \includegraphics[width=\linewidth]{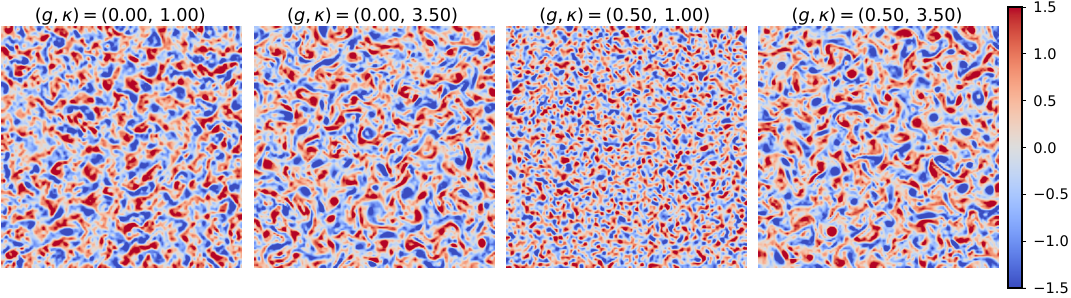}
    \caption{
    Representative electrostatic potential $\phi$ (top) and vorticity
    $\Omega=\Delta_\perp\phi$ (bottom) across the $(g,\kappa)$ domain.
    Each field is standardized independently for visualization.
    }
    \label{fig:app_plasma_potential_vorticity}
\end{figure}

\Cref{fig:app_plasma_potential_vorticity} illustrates the distinct effects of
the two physical parameters. Increasing $\kappa$ strengthens the
density-gradient drive and substantially changes the characteristic spatial
scale of the potential and vorticity fields, with high-$\kappa$ regimes
exhibiting larger and more coherent structures. The effect of $g$ is
qualitatively different: at low $\kappa$, increasing $g$ produces a finer and
more fragmented vorticity field, consistently with its direct coupling to
density perturbations through $g\,\partial_y n$. At larger $\kappa$, this
effect interacts with the stronger density-gradient forcing. The dependence
on $(g,\kappa)$ is therefore not simply additive; both parameters jointly
control the spatial organization of the turbulent flow.

\paragraph{Observed fields and preprocessing.}
Each state contains three physical channels: density $n$, vorticity $\Omega$,
and radial particle flux $\Gamma_x$. The original
$3\times128\times128$ fields are downsampled by a factor of two before being
passed to the network, giving
$\mathbf x_t\in\mathbb R^{3\times64\times64}$.
These observables respectively describe transported density, rotational flow
structure, and radial turbulent transport. The electrostatic potential shown
in \Cref{fig:app_plasma_potential_vorticity} is used only for physical
interpretation and is not provided directly to MTL-CMO.

\FloatBarrier

\subsection{Shared functional representation}
\label{sec:plasma_representation}

MTL-CMO parameterizes the shared dictionaries $\Phi_\theta$ and $\Psi_\theta$
with a convolutional ResNet. A common encoder maps the three input fields to a
$256$-dimensional latent representation, followed by two output heads producing
$d=128$ functional features. The encoder contains three residual stages with
channel dimensions $(64,128,256)$ and $(2,2,2)$ residual blocks, using
GroupNorm, SiLU activations, and dropout. For each source system, the
task-specific operator has rank $80$ and is parameterized as
$\mbM^{(k)}=\mbA^{(k)}\mbB^{(k)\top}$.

Training follows the MTL-CMO procedure of \Cref{alg:mtlcmo} using consecutive
states $(x_t^{(k)},x_{t+1}^{(k)})$. Each optimization step samples $8$ systems
and $256$ transitions per system. Input channels are standardized using
statistics computed from the $320$ source systems only, while dictionary
outputs are centered within each sampled system window before evaluating the
multi-task objective.

\begin{table}[t]
\centering
\footnotesize
\setlength{\tabcolsep}{4pt}
\renewcommand{\arraystretch}{0.95}
\caption{Architecture and optimization configuration for the plasma experiment.}
\label{tab:plasma_training}
\begin{tabular}{lc}
\toprule
Configuration & Value \\
\midrule
\multicolumn{2}{l}{\textit{Architecture}} \\
Input size & $3\times64\times64$ \\
Shared feature dimension $d$ & $128$ \\
Task-specific rank & $80$ \\
Encoder & Convolutional ResNet \\
Residual stages & $(64,128,256)$ \\
Blocks per stage & $(2,2,2)$ \\
Encoder output dimension & $256$ \\
Output heads & $2\times(256\rightarrow128)$ \\
Normalization & GroupNorm (8 groups) \\
Activation & SiLU \\
Dropout & $0.2$ \\
\midrule
\multicolumn{2}{l}{\textit{Optimization}} \\
Systems per mini-batch & $8$ \\
Transitions per system & $256$ \\
Epochs & $120$ \\
Optimizer & AdamW \\
Learning rate & $3\times10^{-4}$ \\
Minimum learning rate & $10^{-6}$ \\
Weight decay & $10^{-4}$ \\
Learning-rate schedule & Cosine \\
Warm-up & $10\%$ \\
Gradient clipping & $1$ \\
\bottomrule
\end{tabular}
\end{table}

\begin{figure}[t]
    \centering
    \includegraphics[width=0.72\linewidth]{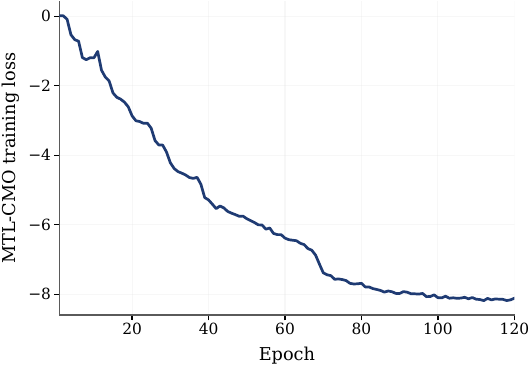}
    \caption{Evolution of the MTL-CMO training objective over $120$ epochs.}
    \label{fig:plasma_loss}
\end{figure}

\FloatBarrier

\subsection{System-specific transfer and spectral estimation}
\label{sec:plasma_operator}

\paragraph{Closed-form adaptation.}
For each plasma system, we apply the T-CMO procedure of
\Cref{alg:tcmo} to the available trajectory. The shared dictionaries are reused
without gradient updates, and only the system-specific operator is estimated.
The same procedure is applied to source systems when constructing diagnostic
representations and to held-out systems for transfer evaluation.

\paragraph{Adaptation rank.}
We select the transfer rank using a reduced-rank regression diagnostic. For
each candidate rank, a one-step predictor is estimated independently on each
of the $320$ source systems and the reconstruction error is averaged across
systems. As shown in \Cref{fig:plasma_rrr_rank}, the error decreases rapidly
and largely saturates around $r=32$. We therefore use adaptation rank $32$
throughout the plasma experiments. This is distinct from the rank-$80$
task-specific operators used during MTL-CMO training.

\begin{figure}[t]
    \centering
    \includegraphics[width=0.62\linewidth]{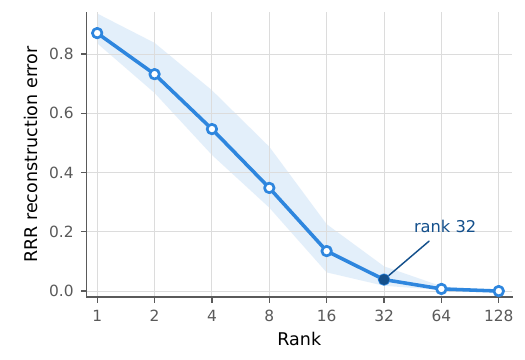}
    \caption{
    Mean one-step RRR reconstruction error across the $320$ source systems
    versus retained rank. The transfer rank is fixed to $32$.
    }
    \label{fig:plasma_rrr_rank}
\end{figure}

\paragraph{Generator spectrum.}
As explained in Appendix~\ref{app:tcmo-op-est}, the resulting $32$-dimensional adapted trajectories are passed to the
generator-resolvent estimator described in Appendix~\ref{app:resolvent}. We
retain the $31$ slowest recovered generator eigenvalues and represent each
system by their real and absolute imaginary parts, yielding a
$62$-dimensional spectral descriptor. We use resolvent shift $s=10$
throughout the plasma experiments.

\FloatBarrier

\subsection{Physical identification and data efficiency}
\label{sec:plasma_identification}

\paragraph{Compared representations.}
We compare four dynamical representations.
\textbf{MTL-CMO/T-CMO} uses the shared ResNet learned across the source family,
followed by closed-form T-CMO adaptation and the generator-resolvent pipeline,
yielding $62$ spectral features.
\textbf{Pooled NCP} uses the same neural architecture and source data but learns
a single pooled representation without task-specific operators; the same
adaptation and spectral-estimation pipeline is used afterward.
\textbf{RFF} replaces the learned functional representation by $128$ Gaussian
random Fourier features, with bandwidth determined from the source systems,
and uses the same downstream spectral estimator.
Finally, \textbf{MIDST} learns a shared latent dynamical model with a
$128$-dimensional system-specific representation; for an unseen system its
shared parameters are frozen and the system-specific coefficients are adapted
by gradient descent. We use lag $10$, selected from the source systems.

For physical identification, the spectral coordinates of MTL-CMO/T-CMO,
Pooled NCP, and RFF are the real and absolute imaginary parts of the $31$
slowest recovered eigenvalues. MIDST uses its native $128$-dimensional
system-specific representation. Features are standardized using the $320$
source systems only. We fit separate RBF kernel-ridge regressors for $g$ and
$\kappa$, using $\alpha=10^{-2}$ and $\gamma=10^{-3}$ for every method, with
no method-specific tuning. Performance is measured by $R^2$ on the $80$
held-out systems.

\paragraph{Geometry of the learned representations.}
We first examine whether the unsupervised representations organize the source
systems according to the underlying physics. For each method, we compute
pairwise Euclidean distances between standardized system descriptors and apply
classical multidimensional scaling. The values of $g$ and $\kappa$ are used
only afterward to color the embeddings.

\begin{figure}[t]
    \centering
    \includegraphics[width=\linewidth]{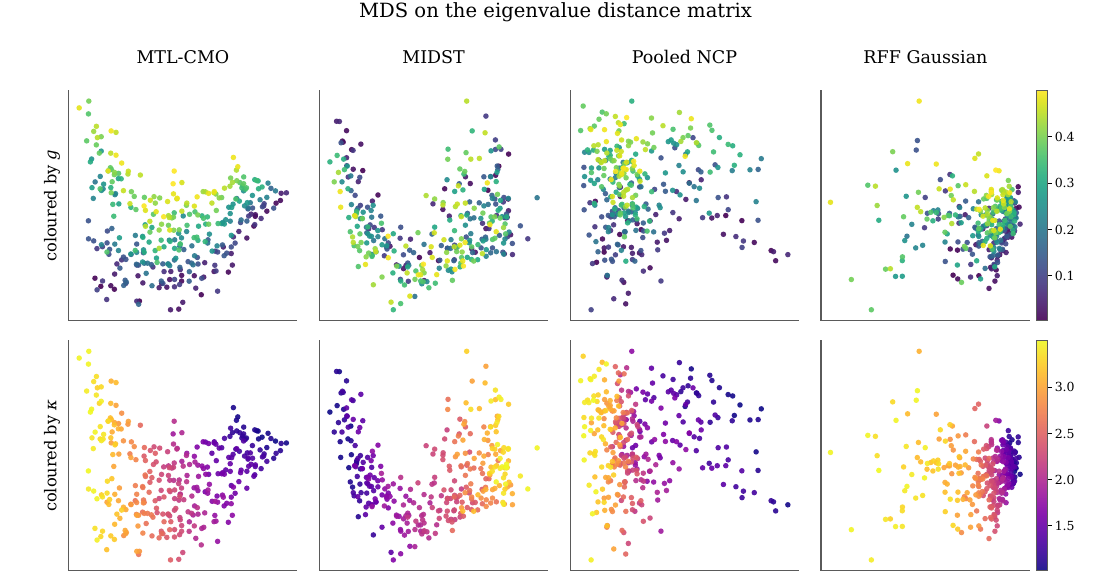}
    \caption{
    MDS embeddings of the dynamical representations of the $320$ source
    systems, colored by $g$ (top) and $\kappa$ (bottom).
    }
    \label{fig:plasma_mds}
\end{figure}

The MTL-CMO representation exhibits a continuous organization with respect to
both physical parameters. The ordering is particularly pronounced for
$\kappa$, while $g$ induces a weaker but still structured variation across the
embedding. Since neither parameter is observed during representation learning
or operator estimation, this organization indicates that the learned spectrum
captures physically meaningful variation across turbulent regimes rather than
direct supervision from $(g,\kappa)$.

\paragraph{Identification from short trajectories.}
We next recompute every system-dependent representation from increasingly short
trajectory prefixes. The source/held-out split and downstream regression
protocol remain unchanged. \Cref{fig:plasma_horizon_app} and
 show that T-CMO yields the highest $R^2$ for both
parameters at every trajectory length.

For $\kappa$, the spectral representation is already highly informative from
only $256$ frames, with $R^2=0.935$, and reaches $0.992$ on the complete
trajectory. Recovering $g$ is substantially harder: T-CMO increases from
$R^2=0.225$ at $256$ frames to $0.778$ at $878$ and $0.932$ on the full
trajectory. This difference is consistent with the spectral geometry above:
the density-gradient parameter is strongly expressed in the leading spectral
structure, whereas the interchange parameter is encoded through a more
distributed signature and benefits more from additional temporal information.

\begin{figure}[t]
    \centering
    \includegraphics[width=\linewidth]{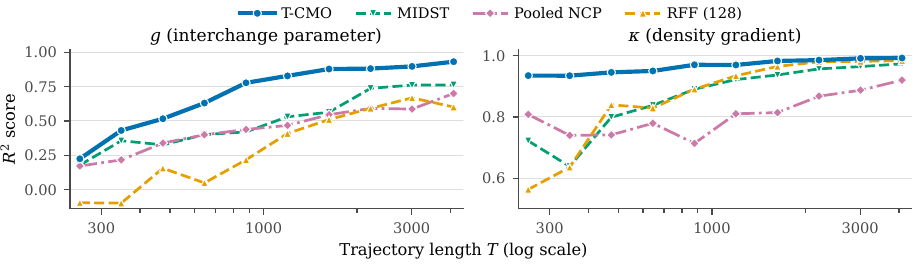}
    \caption{
    Physical-parameter recovery on held-out systems versus trajectory length
    for T-CMO and competing representations.
    }
    \label{fig:plasma_horizon_app}
\end{figure}

\begin{table}[t]
\centering
\footnotesize
\setlength{\tabcolsep}{4pt}
\renewcommand{\arraystretch}{0.95}
\caption{
Held-out $R^2$ for physical-parameter recovery versus available trajectory
length. Lower-dimensional spectral methods use $62$ features; MIDST uses its
native $128$-dimensional representation.
}
\label{tab:plasma_r2}
\begin{tabular}{lcccccc}
\toprule
Trajectory length $T$
& $256$ & $474$ & $878$ & $1\,625$ & $3\,008$ & $4\,093$ \\
\midrule
\multicolumn{7}{l}{\textit{Interchange parameter} $g$} \\
T-CMO \textbf{(ours)}
& $\mathbf{0.225}$ & $\mathbf{0.516}$ & $\mathbf{0.778}$
& $\mathbf{0.878}$ & $\mathbf{0.897}$ & $\mathbf{0.932}$ \\
MIDST
& $0.178$ & $0.326$ & $0.421$ & $0.564$ & $0.761$ & $0.763$ \\
Pooled NCP
& $0.172$ & $0.339$ & $0.438$ & $0.546$ & $0.587$ & $0.700$ \\
RFF (128)
& $-0.096$ & $0.156$ & $0.217$ & $0.511$ & $0.669$ & $0.692$ \\
\midrule
\multicolumn{7}{l}{\textit{Density-gradient parameter} $\kappa$} \\
T-CMO \textbf{(ours)}
& $\mathbf{0.935}$ & $\mathbf{0.946}$ & $\mathbf{0.970}$
& $\mathbf{0.983}$ & $\mathbf{0.991}$ & $\mathbf{0.992}$ \\
MIDST
& $0.722$ & $0.799$ & $0.891$ & $0.937$ & $0.964$ & $0.973$ \\
Pooled NCP
& $0.808$ & $0.741$ & $0.714$ & $0.814$ & $0.887$ & $0.920$ \\
RFF (128)
& $0.563$ & $0.840$ & $0.890$ & $0.965$ & $0.982$ & $0.984$ \\
\bottomrule
\end{tabular}
\end{table}

The comparisons separate the contributions of representation learning and
multi-task structure. RFF keeps the downstream spectral estimator but removes
the learned functional space, while Pooled NCP retains the neural architecture
but removes the task-specific multi-task factorization. Their lower performance,
especially for $g$, shows that the gain cannot be explained by the downstream
regressor or spectral estimator alone. MIDST benefits from cross-system
learning but remains less accurate, particularly for short trajectories and
for recovery of the interchange parameter.

\paragraph{Number of retained spectral modes.}
We finally vary the number $n$ of slow generator modes supplied to the
diagnostic regressor. Each mode contributes two coordinates,
$\operatorname{Re}\lambda_j$ and $|\operatorname{Im}\lambda_j|$.
\Cref{fig:plasma_nmodes} shows that $\kappa$ is largely encoded by the leading
modes and saturates rapidly. In contrast, $g$ benefits from a broader part of
the slow spectrum and largely plateaus only after roughly a dozen modes. We
retain all $31$ modes in the main evaluation, yielding $62$ spectral features.

\begin{figure}[t]
    \centering
    \includegraphics[width=0.62\linewidth]{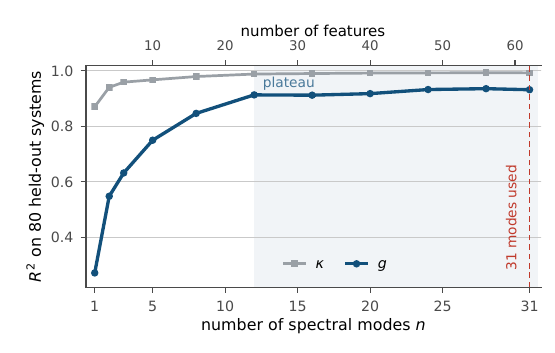}
    \caption{
    Held-out $R^2$ versus the number of slow generator modes retained from the
    T-CMO representation. Each mode contributes two spectral coordinates.
    }
    \label{fig:plasma_nmodes}
\end{figure}

\paragraph{RFF dimension sensitivity.}
The Gaussian RFF baseline exhibits a clear optimum around $D=128$ on complete
trajectories. Increasing the number of random features beyond this value does
not improve physical identification. 
\begin{table}[t]
\centering
\footnotesize
\setlength{\tabcolsep}{5pt}
\renewcommand{\arraystretch}{0.95}
\caption{Gaussian RFF sensitivity to the random-feature dimension $D$ on complete trajectories.}
\label{tab:rff_dims}
\begin{tabular}{lcccccc}
\toprule
Features $D$ & $32$ & $64$ & $\mathbf{128}$ & $256$ & $512$ & $1024$ \\
\midrule
Interchange $g$
& $0.388$ & $0.662$ & $\mathbf{0.714}$ & $0.641$ & $0.639$ & $0.639$ \\
Density gradient $\kappa$
& $0.968$ & $0.979$ & $\mathbf{0.986}$ & $0.985$ & $0.983$ & $0.982$ \\
\bottomrule
\end{tabular}
\end{table}

\begin{table}[t]
\centering
\footnotesize
\setlength{\tabcolsep}{5pt}
\renewcommand{\arraystretch}{0.95}
\caption{Physical-identification evaluation settings.}
\label{tab:plasma_eval_settings}
\begin{tabular}{lc}
\toprule
Parameter & Value \\
\midrule
Source systems & $320$ \\
Held-out systems & $80$ \\
T-CMO / Pooled NCP spectral features & $62$ \\
RFF dictionary dimension & $128$ \\
RFF spectral features & $62$ \\
MIDST features & $128$ \\
MIDST lag & $10$ \\
Regression model & Kernel ridge, RBF \\
Feature preprocessing & StandardScaler \\
KRR regularization $\alpha$ & $10^{-2}$ \\
RBF parameter $\gamma$ & $10^{-3}$ \\
Hyperparameter selection & Fixed, no cross-validation \\
Metric & Held-out $R^2$ \\
\bottomrule
\end{tabular}
\end{table}

\FloatBarrier

\input{sections/Stat_analysis}

%% file: sections/related_work.tex
\section{Detailed related work}
\label{app:related_work}

The proposed approach, MTL-CMO, jointly estimates conditional mean operators (CMOs) from data associated with related joint distributions (see \Cref{sec:method}). Because CMOs describe conditional distributions through their action on observables, relevant work spans both general-purpose operator estimation and application-specific methods. We discuss four connections: (i) multi-task representation learning, (ii) single-task CMO estimation, (iii) uncertainty quantification, and (iv) operator-based learning for dynamical systems.

\noindent
\textbf{Positioning within multi-task representation learning.}
MTL-CMO can be considered as a two-sided extension of the one-sided shared representation paradigm of multi-task learning~\citet{caruana1997multitask}. A standard formulation of this framework stipulates that each task has a predictor
\begin{equation*}
    \textstyle
    f_k(x) = w_k^\top h_\theta(x),
\end{equation*}
where $h_\theta$ is a shared feature map and $w_k$ the task-specific head. The one-sided problem has been studied extensively \citep{evgeniou2005learning,ando2005framework,argyriou2008convex} and typically involves specific regularization~\citep{evgeniou2004regularized,zhang2010learning,maurer2013sparse}. Provable statistical guarantees on the benefit of sharing feature maps between tasks have been established~\citep{maurer2016benefit,tripuraneni2021provable}. \\
MTL-CMO differs in three respects. First, each task-specific object is an operator between function spaces rather than a scalar- or vector-valued predictor. Second, it is two-sided; the method learns shared function spaces on both the input and the output sides of these operators. Third, although the functions themselves are shared across tasks, their associated synthesis maps are defined in the task-specific spaces $\lxk$ and $\lyk$, whose inner products depend on the marginals $\mu_k$ and $\nu_k$.

\noindent
\textbf{Conditional mean operators and their estimation.}
Conditional mean operators have long been studied; for instance, early work addressed the problem of estimating multiple regression and correlation~\citep{breiman1985estimating}. 
A prominent body of work, known as conditional mean embedding (CME), focus on the estimation of the operator whose action is restricted to predefined Reproducible Kernel Hilbert Space (RKHS)~\citep{song2009hilbert,muandet2017kernel}. with classical estimators expressed as regularized kernel ridge regression, whose consistency and statistical rates have been subsequently studied~\citep{grunewalder2012conditional,mollenhauer2020nonparametric,hertel2026verifiable}. More recently, a hybrid CME estimator combining neural networks to model the output space with a predefined RKHS to model the input space was proposed in~\citep{shimizu2024neural}.
CME approaches have also been extensively used to study dynamical systems, although the choice of function space can affect the recovered spectral information~\citep{kostic2023sharp}. However, CMEs are less well-suited to uncertainty quantification, since evaluating conditional expectations is only possible for observables in the RKHS. To alleviate this issue, Neural Conditional Probability (NCP) instead learns a truncated singular representation of a compact CMO, allowing its action to be evaluated on a broader class of observables~\citep{kostic2024ncp}. MTL-CMO builds on this operator perspective while learning functional representations jointly across related tasks.

\noindent
\textbf{Multi-task learning for uncertainty quantification.}
The core idea of multi-task approaches for uncertainty quantification is to jointly learn conditional density functions or conditional cumulative distribution functions.
A baseline approach to share information across conditional distributions is to combine a common neural representation with task-specific mixture density networks~\citep{caruana1997multitask,bishop1994mixture}. Each task then predicts the parameters of its own conditional mixture model. Another approach, DeepJMQR, jointly learns conditional expectations and multiple conditional quantiles to estimate a discrete approximation of one-dimensional conditional transport maps~\citep{rodrigues2020beyond}. \\
These methods directly parameterize a density or selected statistics. MTL-CMO instead jointly learns operators from which different conditional statistics can be approximated by projecting the corresponding observables, see \Cref{eq:cs_approx}.

\noindent
\textbf{Operator-based multi-task learning for dynamical systems.}
For a stochastic dynamical system, the conditional mean operator that maps an observable of a future state to its conditional expectation given the current state is a Koopman operator at the corresponding time lag. 
Several multi-task approaches have been proposed to jointly learn dynamics from observed trajectories and by leveraging Koopman operators~\citep{iwata2021meta,elul2024data,zhang2025koopman,han2026mako}. These methods suppose a shared embedding of the trajectories and impose a task-specific linear evolution of the dynamics in the latent space. Some additionally learn a common~\citep{elul2024data,zhang2025koopman} or task-specific~\citep{han2026mako} decoder from latent coordinates to the state spaces. In particular, \citet{han2026mako} has been proposed for predictive control and therefore includes the control command in the modeling. \\
Unlike MTL-CMO, these methods are primarily trained to reconstruct or predict trajectories, rather than to minimize a Hilbert-Schmidt discrepancy between operators. Their objectives may therefore favor accurate state predictions without ensuring accurate estimates of the operator's action on other observables. MTL-CMO instead minimizes an empirical objective corresponding to the Hilbert-Schmidt error, which directly estimates the conditional expectation of future states for any observables.

\noindent
\textbf{Multi-task conditional mean operator estimation.}
Standard multi-task representation learning shares an input feature map across task-specific predictors~\citep{caruana1997multitask,maurer2016benefit}. MTL-CMO extends this principle to operators by learning input and output function spaces, whose corresponding synthesis maps~\Cref{eq:synthesis_map} act in spaces defined by each task's marginals. This differs from conditional mean embedding, which typically uses a prescribed reproducing kernel Hilbert space~\citep{song2009hilbert,muandet2017kernel}, and from NCP, which estimates a single CMO~\citep{kostic2024ncp}.

%% file: sections/Stat_analysis.tex
\section{Statistical guarantees}
\label{sec:theory}

We study how well \method\ learns the shared representation, and how this depends on the number of tasks. The analysis has two steps. First, we show that the population multi-task loss controls, deterministically, the quality of the learned representation: the error of the best head on each training task, and the error on new tasks that share the same structure (\cref{prp:approx,prp:new_task}). Second, we bound the population loss of the trained model (\cref{sec:theory_rate}). The cost of learning the shared representation is paid with the total number of samples $nK$, while each task only pays for its own head.

\paragraph{Setting.} For every task $k\in[K]$, we observe $n$ i.i.d.\ pairs $\{(x_i^{(k)},y_i^{(k)})\}_{i\in[n]}$ from $\rho_k$, the $K$ datasets being independent. We assume that $\rho_k$ has a density $p_k$ with respect to $\mu_k\otimes\nu_k$ such that $g_k:=p_k-1\in\Lk$. Then $\dmok$ is the integral operator with kernel $g_k$, i.e.\ $[\dmok f](x)=\mdE_{\nu_k}[g_k(x,Y)f(Y)]$, and $\norm[\mathrm{HS}]{\dmok}=\norm[\Lk]{g_k}$. We write $h=(\theta,\{\mbA^{(k)},\mbB^{(k)}\}_{k\in[K]})$ for the parameters of the whole model and
\[
q_{h_k}(x,y):=\Phi_\theta(x)^\top\mbA^{(k)}\mbB^{(k)\top}\Psi_\theta(y)
\]
for the kernel learned for task $k$.

\begin{assump}\label{ass:bounded}
There exists $c\geq1$ such that $\norm{\Phi_\theta(x)}\leq c\sqrt d$ and $\norm{\Psi_\theta(y)}\leq c\sqrt d$ for all $\theta\in\Theta$, $x\in\mcX$ and $y\in\mcY$. The heads belong to $\mcW_\rho:=\{(\mbA,\mbB):\norm[\mathrm F]{\mbA}^2+\norm[\mathrm F]{\mbB}^2\leq2\rho\}$ for some $\rho>0$.
\end{assump}

The bound on the features holds, for instance, when the last layer of the networks has a bounded activation. The constraint on the heads is the constrained counterpart of the ridge penalty of \cref{sec:method}. We believe that an analogous guarantee holds for the penalized estimator used in practice, but a complete proof requires significantly more technical work, which we leave for future work. Under \cref{ass:bounded}, $\norm[\mathrm F]{\mbA\mbB^\top}\leq\rho$ and $\abs{q_{h_k}}\leq Q_\rho:=\rho c^2d$ by Cauchy-Schwartz.

\paragraph{Estimator and population risk.} For task $k$, the model is the operator $\dmokt$ with kernel $q_{h_k}$, and the population loss of \cref{sec:method} is $\mcL^{(k)}(h)=\norm[\mathrm{HS}]{\dmokt-\dmok}^2-\norm[\mathrm{HS}]{\dmok}^2$. Since the last term does not depend on $h$, we measure the performance of the model by
\begin{equation}\label{eq:risk}
\begin{aligned}
&\mcE_k(h):=\norm[\mathrm{HS}]{\dmokt-\dmok}^2=\norm[\Lk]{q_{h_k}-g_k}^2,\\
&\overline{\mcE}(h):=\frac1K\sum_{k=1}^K\mcE_k(h),\qquad
\mcA_K:=\inf_{h\in\Theta\times\mcW_\rho^K}\overline{\mcE}(h),
\end{aligned}
\end{equation}
where the second equality holds because $\dmokt-\dmok$ is the integral operator with kernel $q_{h_k}-g_k$. The empirical loss $\widehat{\mcL}^{(k)}(h)$ of \cref{sec:method} is an unbiased estimator of $\mcL^{(k)}(h)$, so that minimizing the multi-task objective amounts to minimizing an empirical version of $\overline{\mcE}$. We consider any $\widehat h=(\widehat\theta,\{\widehat\mbA^{(k)},\widehat\mbB^{(k)}\}_k)\in\Theta\times\mcW_\rho^K$ such that
\begin{equation}\label{eq:erm}
\frac1K\sum_{k=1}^K\widehat{\mcL}^{(k)}(\widehat h)\leq\inf_{h\in\Theta\times\mcW_\rho^K}\frac1K\sum_{k=1}^K\widehat{\mcL}^{(k)}(h)+\epsilon_{\mathrm{opt}}.
\end{equation}

\paragraph{Quality of a representation.} For a task $k$, let $\Phic:=\Phi_\theta-\mdE_{\mu_k}[\Phi_\theta(X)]$ and $\Psic:=\Psi_\theta-\mdE_{\nu_k}[\Psi_\theta(Y)]$ be the centered dictionaries, and $P^{(k)}_{\Phi_\theta}$, $P^{(k)}_{\Psi_\theta}$ the orthogonal projections in $\lxk$ and $\lyk$ onto their spans. We measure the quality of the representation $\theta$ for task $k$ by the error of the best head on the learned spaces,
\begin{equation}\label{eq:eta}
\eta_k^2(\theta):=\min_{\mbM\in\mdR^{d\times d}}\norm[\mathrm{HS}]{\dmok-\msS_{\Phic}\,\mbM\,\msS_{\Psic}^{*}}^2 .
\end{equation}
This quantity does not depend on the heads, and it is defined in the same way for a new task, with $\dmon$ in place of $\dmok$.

\paragraph{Approximation result.} Let $\tau_k^2:=\sum_{i>r_k}(\sigma_i^{(k)})^2$ be the tail of the singular values of $\dmok$. The next proposition shows that the risk~\eqref{eq:risk} controls the quality of the representation on the training tasks.

\begin{prp}\label{prp:approx}
Let $h=(\theta,\{\mbA^{(k)},\mbB^{(k)}\}_k)\in\Theta\times\mcW_\rho^K$ and $k\in[K]$.
\begin{enumerate}
\item[(i)] \emph{Best head.} $\eta_k^2(\theta)=\norm[\mathrm{HS}]{\dmok-P^{(k)}_{\Phi_\theta}\dmok P^{(k)}_{\Psi_\theta}}^2$, and the minimum in~\eqref{eq:eta} is attained at the T-CMO solution $\mbM^*_k=\mbG_{\Phi_\theta}^{(k)+}\mbC^{(k)}_{\Phi_\theta\Psi_\theta}\mbG_{\Psi_\theta}^{(k)+}$.
\item[(ii)] \emph{Risk controls the representation.} Let $T:=\dmokt$, so that $\mcE_k(h)=\norm[\mathrm{HS}]{T-\dmok}^2$ by~\eqref{eq:risk}. Since $T$ has rank at most $r_k$,
\[
\tau_k^2\ \leq\ \mcE_k(h),\qquad \eta_k^2(\theta)\ \leq\ \mcE_k(h).
\]
In particular, $\mcA_K\geq\frac1K\sum_k\tau_k^2$, with equality if (A2) is realizable, i.e.\ if some $h\in\Theta\times\mcW_\rho^K$ satisfies that $q_{h_k}$ is the kernel of $\dmokr$ for all $k$.
\end{enumerate}
\end{prp}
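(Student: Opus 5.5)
The plan is to view both parts as statements about orthogonal projections in the Hilbert space of Hilbert--Schmidt operators $\lyk\to\lxk$.

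For (i), let $V_k:=\spn\{\Phic\}\subset\lxk$ and $W_k:=\spn\{\Psic\}\subset\lyk$. I will show that the map $\mbM\mapsto\msS_{\Phic}\mbM\msS_{\Psic}^{*}$ is onto the closed subspace
\[
\mcH_k:=\{X:\ X=P^{(k)}_{\Phi_\theta}XP^{(k)}_{\Psi_\theta}\}.
\]
The range of $\msS_{\Phic}$ is $V_k$. The range of $\msS_{\Psic}^*$ restricted to $W_k$ is $\operatorname{ran}(\msS_{\Psic}^*\msS_{\Psic})=\operatorname{ran}\mbG^{(k)}_{\Psi_{\theta,c}}$, and $\msS_{\Psic}^*$ vanishes on $W_k^\perp$. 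So every operator $W_k\to V_k$ is obtained this way.

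The map $X\mapsto P^{(k)}_{\Phi_\theta}XP^{(k)}_{\Psi_\theta}$ is self-adjoint and idempotent for the HS inner product, because $\innerp{PXQ}{Y}_{\mathrm{HS}}=\Tr(Q X^*PY)=\innerp{X}{PYQ}_{\mathrm{HS}}$. It is therefore the HS-orthogonal projection onto $\mcH_k$, and the Pythagorean identity gives $\eta_k^2(\theta)=\norm[\mathrm{HS}]{\dmok-P^{(k)}_{\Phi_\theta}\dmok P^{(k)}_{\Psi_\theta}}^2$.

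To identify the minimizer, I will use the pseudo-inverse formula for the projection onto the range of a synthesis map: $P^{(k)}_{\Phi_\theta}=\msS_{\Phic}\mbG^{(k)+}_{\Phi_{\theta,c}}\msS_{\Phic}^*$, where $\mbG^{(k)}_{\Phi_{\theta,c}}=\msS_{\Phic}^*\msS_{\Phic}$, and similarly on the $\Psi$ side. Substituting gives
\[
P\dmok P=\msS_{\Phic}\,\mbG^{+}\bigl(\msS_{\Phic}^*\dmok\msS_{\Psic}\bigr)\mbG^{+}\,\msS_{\Psic}^*.
\]
It remains to compute the middle matrix. By the appendix computation of $\innerp{u}{\cmok v}_{\mu_k}$, its $(i,j)$ entry is
\[
\innerp{\phi^c_i}{\dmok\psi^c_j}_{\mu_k}=\mdE_{\rho_k}[\phi^c_i\psi^c_j]-\mdE_{\mu_k}[\phi^c_i]\,\mdE_{\nu_k}[\psi^c_j]=\mbC^{(k)}_{ij},
\]
since the centered functions have zero mean. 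This identifies $\mbM^*_k$. The Gram and cross-covariance matrices in the statement are to be read as the centered ones. This is harmless because $\dmok\mb1_\mcY=0$ and $\dmok^*\mb1_\mcX=0$.

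For (ii), the factor $\mbA^{(k)}\mbB^{(k)\top}$ has rank at most $r_k$, so $T$ has rank at most $r_k$. The Eckart--Young--Mirsky theorem in HS norm then gives $\tau_k^2\le\norm[\mathrm{HS}]{T-\dmok}^2=\mcE_k(h)$.

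For the bound on $\eta_k^2$, I will compare with the centered operator $P_{\mu_k}TP_{\nu_k}$ from the appendix. Since $P_{\mu_k}\msS_{\Phi_\theta}=\msS_{\Phic}$ and $\msS_{\Psi_\theta}^*P_{\nu_k}=(P_{\nu_k}\msS_{\Psi_\theta})^*=\msS_{\Psic}^*$, we get
\[
P_{\mu_k}TP_{\nu_k}=\msS_{\Phic}\mbA^{(k)}\mbB^{(k)\top}\msS_{\Psic}^*,
\]
which is admissible in~\eqref{eq:eta}. Because $\dmok=P_{\mu_k}\dmok P_{\nu_k}$ and projections are contractions in HS norm,
\[
\eta_k^2(\theta)\le\norm[\mathrm{HS}]{P_{\mu_k}(T-\dmok)P_{\nu_k}}^2\le\mcE_k(h).
\]

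Finally, averaging $\tau_k^2\le\mcE_k(h)$ over $k$ and taking the infimum over $h$ gives $\mcA_K\ge\frac1K\sum_k\tau_k^2$. Under realizability, the chosen $h$ has $\mcE_k(h)=\norm[\mathrm{HS}]{\dmokr-\dmok}^2=\tau_k^2$ for every $k$, which yields equality.

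The main obstacle I expect is the bookkeeping in (i): checking that the pseudo-inverse formula for the projection holds and that $\msS_{\Psic}^*$ reaches all of $\operatorname{ran}\mbG$ when the Gram matrices are singular.
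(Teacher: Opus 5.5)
Your proof is correct and follows the paper's argument. Part (i) is the paper's projection lemma: the map $X\mapsto P^{(k)}_{\Phi_\theta}XP^{(k)}_{\Psi_\theta}$ is the HS-orthogonal projection onto $\{\msS_{\Phic}\mbM\msS_{\Psic}^*\}$, and the minimizer is read off from $P=\msS\mbG^{+}\msS^*$. Part (ii) is Eckart--Young--Mirsky together with the observation that centering $T$ on both sides gives an admissible competitor in~\eqref{eq:eta} without increasing the distance to $\dmok=P_{\mu_k}\dmok P_{\nu_k}$.

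Your extra details (direct surjectivity, the entrywise identification of the cross-covariance, and the averaging step for $\mcA_K$) are sound. The only imprecision is the ``harmless'' remark. The centered reading of the Gram matrices is genuinely needed for $\mbM^*_k$ to minimize~\eqref{eq:eta}. The identities $\dmok\mb1_\mcY=0$ and its adjoint counterpart only make the cross term $\msS^{(k)*}_{\Phi_\theta}\dmok\msS^{(k)}_{\Psi_\theta}$ insensitive to centering.
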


Here $^+$ denotes the Moore--Penrose pseudo-inverse, which coincides with the inverse when the Gram matrices are invertible; it accounts for possibly redundant dictionaries. Item (i) shows that $\eta_k(\theta)$ is the population error of T-CMO with the frozen representation $\theta$, and item (ii) that the risk of any trained model bounds the quality of its representation.

\paragraph{New tasks.} A new task shares the representation of the training tasks if its operator factorizes through the same functions. We thus consider (A2) in the following form: there exist $\Phi^\star\colon\mcX\to\mdR^{d^\star}$ and $\Psi^\star\colon\mcY\to\mdR^{d^\star}$ such that every task, training or new, satisfies
\begin{equation}\label{eq:shared_repr}
g_k(x,y)=\Phi^\star_{c,k}(x)^\top\mbM_k\,\Psi^\star_{c,k}(y)\qquad\text{for some }\mbM_k\in\mdR^{d^\star\times d^\star},
\end{equation}
where $\Phi^\star_{c,k},\Psi^\star_{c,k}$ are centered with respect to $\mu_k,\nu_k$, and $k=\mathrm{new}$ for the new task. Let $\mbG^{(k)}_{\Phi^\star}$ and $\mbG^{(k)}_{\Psi^\star}$ be the corresponding covariance matrices. Transferring to any new task of the form~\eqref{eq:shared_repr} requires the training tasks to use all shared directions, which we quantify by
\[
\lambda_\Phi:=\lambda_{\min}\Big(\frac1K\sum_{k=1}^K\mbM_k\mbG^{(k)}_{\Psi^\star}\mbM_k^\top\Big),\qquad
\lambda_\Psi:=\lambda_{\min}\Big(\frac1K\sum_{k=1}^K\mbM_k^\top\mbG^{(k)}_{\Phi^\star}\mbM_k\Big).
\]
These quantities are the operator counterpart of the task diversity condition used in multi-task representation learning, where the smallest singular value of the matrix collecting the source heads is required to be bounded away from zero \citep{tripuraneni2020theory,du2021fewshot}; here the heads are matrices and each direction is weighted by the covariance of the shared output features. For instance, $\lambda_\Phi,\lambda_\Psi>0$ as soon as one training task has an invertible head and non-degenerate covariance matrices. For the new task, we let $\sigma_\Phi:=\norm[\mathrm{op}]{\mbM_{\mathrm{new}}\mbG^{(\mathrm{new})}_{\Psi^\star}\mbM_{\mathrm{new}}^\top}$ and $\sigma_\Psi:=\norm[\mathrm{op}]{\mbM_{\mathrm{new}}^\top\mbG^{(\mathrm{new})}_{\Phi^\star}\mbM_{\mathrm{new}}}$. Finally, the representation is learned where the training inputs lie, which we account for with $\beta_\mu:=\max_k\norm[\infty]{\rd\mu_{\mathrm{new}}/\rd\mu_k}$ and $\beta_\nu:=\max_k\norm[\infty]{\rd\nu_{\mathrm{new}}/\rd\nu_k}$; $\beta_\mu=\beta_\nu=1$ when the marginals coincide.

\begin{prp}\label{prp:new_task}
Assume~\eqref{eq:shared_repr} with $\lambda_\Phi,\lambda_\Psi>0$. For every $\theta\in\Theta$,
\begin{equation}\label{eq:new_task}
\eta_{\mathrm{new}}^2(\theta)\ \leq\ C_{\mathrm{new}}\cdot\frac1K\sum_{k=1}^K\eta_k^2(\theta),\qquad C_{\mathrm{new}}:=\frac{\beta_\mu\sigma_\Phi}{\lambda_\Phi}+\frac{\beta_\nu\sigma_\Psi}{\lambda_\Psi}.
\end{equation}
\end{prp}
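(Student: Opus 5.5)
The plan is to reduce both sides of~\eqref{eq:new_task} to one-sided projection residuals, and then compare these residuals across tasks through a quadratic form on coefficient vectors.

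\textbf{Step 1 (one-sided split).} By \cref{prp:approx}(i), $\eta_k^2(\theta)=\norm[\mathrm{HS}]{D_k-P_kD_kQ_k}^2$, where $D_k:=\dmok$, $P_k:=P^{(k)}_{\Phi_\theta}$ and $Q_k:=P^{(k)}_{\Psi_\theta}$. The same identity holds for $k=\mathrm{new}$. Orthogonality of the decompositions gives two Pythagorean identities:
\[
\norm[\mathrm{HS}]{D-PDQ}^2=\norm[\mathrm{HS}]{(I-P)D}^2+\norm[\mathrm{HS}]{PD(I-Q)}^2=\norm[\mathrm{HS}]{D(I-Q)}^2+\norm[\mathrm{HS}]{(I-P)DQ}^2 .
\]
From the first identity and $\norm[\mathrm{HS}]{PD(I-Q)}\le\norm[\mathrm{HS}]{D(I-Q)}$, we get $\eta_{\mathrm{new}}^2\le \norm[\mathrm{HS}]{(I-P_{\mathrm{new}})D_{\mathrm{new}}}^2+\norm[\mathrm{HS}]{D_{\mathrm{new}}(I-Q_{\mathrm{new}})}^2$. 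From both identities, $\eta_k^2\ge\norm[\mathrm{HS}]{(I-P_k)D_k}^2$ and $\eta_k^2\ge\norm[\mathrm{HS}]{D_k(I-Q_k)}^2$ for each training task. It therefore suffices to prove the two inequalities
\[
\norm[\mathrm{HS}]{(I-P_{\mathrm{new}})D_{\mathrm{new}}}^2\le\frac{\beta_\mu\sigma_\Phi}{\lambda_\Phi}\cdot\frac1K\sum_{k}\norm[\mathrm{HS}]{(I-P_k)D_k}^2,
\qquad
\norm[\mathrm{HS}]{D_{\mathrm{new}}(I-Q_{\mathrm{new}})}^2\le\frac{\beta_\nu\sigma_\Psi}{\lambda_\Psi}\cdot\frac1K\sum_{k}\norm[\mathrm{HS}]{D_k(I-Q_k)}^2 .
\]
I only treat the first one; the second follows by the symmetric argument with $\Psi$, $\nu$ and $\mbM_k^\top$.

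\textbf{Step 2 (residual matrix).} Use~\eqref{eq:shared_repr}. The operator $(I-P_k)D_k$ has kernel $r_k(x)^\top\mbM_k\Psi^\star_{c,k}(y)$, where $r_k:=(I-P_k)\Phi^\star_{c,k}$ is applied componentwise in $\lxk$. Define $\mbE_k:=\mdE_{\mu_k}[r_kr_k^\top]$ and $\mbS_k:=\mbM_k\mbG^{(k)}_{\Psi^\star}\mbM_k^\top$. Then
\[
\norm[\mathrm{HS}]{(I-P_k)D_k}^2=\Tr(\mbE_k\mbS_k).
\]
Since the span of $\Phic$ in $\lxk$ together with the constants is $\operatorname{span}\{\mb1,\phi^\theta_1,\dots,\phi^\theta_d\}$, the residual of the centered $\Phi^\star_{c,k}$ onto $\operatorname{span}\Phic$ equals the residual of the uncentered $\Phi^\star$ onto $V:=\operatorname{span}\{\mb1,\Phi_\theta\}$. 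Consequently, for every $a\in\mdR^{d^\star}$,
\[
a^\top\mbE_ka=\min_{f\in V}\norm[\lxk]{a^\top\Phi^\star-f}^2 .
\]
This is a property of the pair $(\theta,\mu_k)$ alone; in particular it does not depend on the task-specific centering.

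\textbf{Step 3 (change of measure and diversity).} For each $k$ and each $a$, plug the $\mu_k$-minimizer $f$ into the $\mu_{\mathrm{new}}$ problem and use the density bound:
\[
a^\top\mbE_{\mathrm{new}}a\le\int(a^\top\Phi^\star-f)^2\,\rd\mu_{\mathrm{new}}\le\beta_\mu\,a^\top\mbE_ka .
\]
Hence $\mbE_{\mathrm{new}}\preceq\beta_\mu\mbE_k$ in Loewner order, for every $k$. Since each $\mbS_k\succeq0$, this gives $\Tr(\mbE_{\mathrm{new}}\mbS_k)\le\beta_\mu\Tr(\mbE_k\mbS_k)$. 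Averaging over $k$ and using $\bar{\mbS}:=\frac1K\sum_k\mbS_k\succeq\lambda_\Phi\mbI$ yields
\[
\lambda_\Phi\Tr(\mbE_{\mathrm{new}})\le\Tr(\mbE_{\mathrm{new}}\bar{\mbS})\le\beta_\mu\frac1K\sum_k\Tr(\mbE_k\mbS_k).
\]
Finally, $\Tr(\mbE_{\mathrm{new}}\mbS_{\mathrm{new}})\le\sigma_\Phi\Tr(\mbE_{\mathrm{new}})$, which gives the first inequality of Step 1.

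The main obstacle is Step 2. One must check carefully that the centering conventions do not spoil the per-direction comparison: the centered dictionaries depend on $\mu_k$, so a comparison between tasks is only possible after rewriting the residual as a distance to the fixed, measure-independent space $V$. Once this is done, the Loewner comparison of Step 3 is what lets a task-specific best approximation be transferred to the new task, with only the factor $\beta_\mu$ lost.
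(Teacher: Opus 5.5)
Your proposal is correct and follows the paper's proof essentially step for step. It uses the same one-sided split from \cref{lmm:projection}, the same residual matrices $\mbE_k$ rewritten as distances to the measure-independent space $\spn\{\Phi_\theta,1\}$, the same Loewner comparison $\mbE_{\mathrm{new}}\preceq\beta_\mu\mbE_k$, and the same diversity bound through $\lambda_\Phi$ followed by $\sigma_\Phi$. The only cosmetic difference is that you carry the one-sided averages $\frac1K\sum_k\norm[\mathrm{HS}]{(I-P_k)D_k}^2$ to the end before bounding them by $\eta_k^2(\theta)$, whereas the paper bounds each one-sided residual by $\eta_k^2(\theta)$ immediately.
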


A representation that is accurate on average over the training tasks is thus accurate for every new task sharing it, and since $\eta_{\mathrm{new}}(\theta)$ is the population error of T-CMO (\cref{prp:approx}(i)), \cref{prp:new_task} quantifies transfer. The constant $C_{\mathrm{new}}$ is small when the training tasks are diverse (large $\lambda_\Phi,\lambda_\Psi$). Both ingredients are necessary: a shared direction that no training task uses cannot be learned, and the learned features are not constrained where no training input lies.

By \cref{prp:approx,prp:new_task}, all these guarantees follow from a bound on the average risk $\overline{\mcE}(\widehat h)$ of the trained model.

\subsection{Statistical rate}
\label{sec:theory_rate}

We measure the complexity of the shared dictionaries on the pooled sample of all tasks. For $m\geq1$, let $X^{(k)}_l\sim\mu_k$ ($k\in[K]$, $l\in[m]$) be independent and $\gamma_{klj}$ i.i.d.\ standard Gaussian variables, and define
\begin{equation}\label{eq:gauss}
\mathfrak G_{m,K}(\mcF_\Phi):=\frac{1}{mK}\,\mdE\sup_{\theta\in\Theta}\sum_{k=1}^K\sum_{l=1}^m\sum_{j=1}^d\gamma_{klj}\,\phi^\theta_j\big(X^{(k)}_l\big),
\end{equation}
and $\mathfrak G_{m,K}(\mcF_\Psi)$ in the same way with $Y^{(k)}_l\sim\nu_k$. For norm-bounded neural networks, $\mathfrak G_{m,K}(\mcF)\leq\mathrm{Comp}(\mcF)/\sqrt{mK}$ with $\mathrm{Comp}(\mcF)$ independent of $m$ and $K$ \citep{golowich2018size}. We use Gaussian rather than Rademacher complexities because the separation between the shared representation and the task-specific heads relies on the chain rule of \citet{maurer2016benefit}, which is based on Gaussian comparison inequalities; the two complexities are equivalent up to a factor $\sqrt{\log(mKd)}$ \citep[Eq.~(4.9)]{ledoux1991probability}. We set
\[
L_\rho:=\rho c\sqrt d\,(1+Q_\rho),\qquad B_\rho:=4Q_\rho(Q_\rho+3).
\]

\begin{thm}\label{thm:rate}
Let \cref{ass:bounded} hold and $n\geq2$ be even. There exists an absolute constant $C>0$ such that, for any $\delta\in(0,1)$, with probability at least $1-\delta$,
\begin{equation}\label{eq:rate}
\overline{\mcE}(\widehat h)\ \leq\ \mcA_K+\epsilon_{\mathrm{opt}}
 +C\,L_\rho\Big[\underbrace{\mathfrak G_{n/2,K}(\mcF_\Phi)+\mathfrak G_{n/2,K}(\mcF_\Psi)}_{\text{shared representation}}+\underbrace{c\sqrt{d/n}}_{\text{heads}}\Big]
+2B_\rho\sqrt{\frac{\log(2\delta^{-1})}{2nK}} .
\end{equation}
\end{thm}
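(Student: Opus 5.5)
The plan is the usual excess-risk decomposition for empirical risk minimization, with two complications. First, $\widehat{\mcL}^{(k)}$ is a U-statistic rather than an i.i.d.\ average. Second, the complexity of the function class has to be split between the shared dictionaries and the heads.

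Write $\overline{\mcL}(h):=\frac1K\sum_k\mcL^{(k)}(h)$ and $\widehat{\overline{\mcL}}(h):=\frac1K\sum_k\widehat{\mcL}^{(k)}(h)$. Since $\overline{\mcE}(h)=\overline{\mcL}(h)+\frac1K\sum_k\norm[\mathrm{HS}]{\dmok}^2$, it suffices to control $\overline{\mcL}(\widehat h)-\inf_h\overline{\mcL}(h)$. Let $h^\ast$ be a near-minimizer of $\overline{\mcL}$, so that $\overline{\mcE}(h^\ast)\leq\mcA_K+\epsilon$ with $\epsilon\to0$. By~\eqref{eq:erm},
\[
\overline{\mcL}(\widehat h)-\overline{\mcL}(h^\ast)\leq \epsilon_{\mathrm{opt}}+2\sup_{h\in\Theta\times\mcW_\rho^K}\big|\widehat{\overline{\mcL}}(h)-\overline{\mcL}(h)\big|,
\]
and everything reduces to a uniform deviation bound $Z:=\sup_h|\widehat{\overline{\mcL}}(h)-\overline{\mcL}(h)|$.

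\textbf{Concentration.} The term $2B_\rho\sqrt{\log(2\delta^{-1})/(2nK)}$ should come from a bounded-differences inequality applied to $Z$. Each sample $(x^{(k)}_i,y^{(k)}_i)$ enters $\widehat{\mcL}^{(k)}$ through one row and one column of $\mbQ^{(k)}$, and $|q_{h_k}|\leq Q_\rho$. Replacing that sample therefore changes the quadratic part by $O(Q_\rho^2/n)$ and the linear trace part by $O(Q_\rho/n)$. After the $1/K$ averaging, this gives bounded differences of order $B_\rho/(nK)$ over $nK$ independent variables. McDiarmid's inequality then yields $Z\leq\mdE Z+B_\rho\sqrt{\log(2/\delta)/(2nK)}$. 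If $B_\rho$ is set up accordingly, the factor $2$ in front absorbs the two-sided deviation.

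\textbf{Symmetrization through decoupling.} This is where $n$ even enters. For a fixed permutation of $[n]$, I pair the sample $i$ with the sample $i+n/2$. The quantities
\[
q(x_i,y_i),\qquad q(x_{i+n/2},y_{i+n/2}),\qquad q(x_i,y_{i+n/2})^2,\qquad q(x_{i+n/2},y_i)^2
\]
are unbiased for the $\rho_k$ and $\mu_k\otimes\nu_k$ terms of $\mcL^{(k)}$. Each is a function of $n/2$ i.i.d.\ blocks. By Hoeffding's representation, the U-statistic $\widehat{\mcL}^{(k)}$ (including the centering term $\Tr(\mbH^{(k)}\mbQ^{(k)})$) equals the average over permutations of such block averages. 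Since the supremum of an average is at most the average of suprema, $\mdE Z$ is bounded by the expected supremum of an i.i.d.\ empirical process with $n/2$ samples per task. Standard symmetrization then bounds it by Gaussian averages of the form
\[
\frac{1}{(n/2)K}\mdE\sup_h\sum_{k,l}\gamma_{kl}\,\ell\big(q_{h_k}(X^{(k)}_l,Y^{(k)}_{l'})\big),
\]
with $\ell(t)=t^2$ or $\ell(t)=t$, both Lipschitz on $[-Q_\rho,Q_\rho]$ with constant at most $1+2Q_\rho$. A Gaussian contraction step removes $\ell$ and produces the factor $(1+Q_\rho)$ in $L_\rho$.

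\textbf{Chain rule — the main obstacle.} What remains is $\mdE\sup_{\theta,\{\mbA^{(k)},\mbB^{(k)}\}}\sum_{k,l}\gamma_{kl}\,\Phi_\theta(X_l^{(k)})^\top\mbM_k\Psi_\theta(Y_l^{(k)})$ with $\norm[\mathrm F]{\mbM_k}\leq\rho$. This function is bilinear in the two dictionaries, so Maurer's chain rule \citep{maurer2016benefit} must be applied to the composite map $\theta\mapsto(\Phi_\theta(X),\Psi_\theta(Y))\in\mdR^{2d}$ followed by the task-wise heads. Two quantities must be controlled.

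\emph{(a) Lipschitz constant in the representation.} Changing $\Phi$ with $\Psi$ fixed moves $q$ by at most $\rho\,c\sqrt d$ times the change in $\Phi$, using the feature bound of \cref{ass:bounded}, and symmetrically for $\Psi$. This is the $\rho c\sqrt d$ factor of $L_\rho$, and it yields the terms $\mathfrak G_{n/2,K}(\mcF_\Phi)+\mathfrak G_{n/2,K}(\mcF_\Psi)$.

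\emph{(b) Head complexity at worst-case representations.} For fixed features, the sup over $\mbM_k$ of $\sum_l\gamma_{kl}\Phi^\top\mbM_k\Psi$ is at most $\rho\,\mdE\norm[\mathrm F]{\sum_l\gamma_{kl}\Phi\Psi^\top}\leq\rho\,c^2d\sqrt{n/2}$. Summing over $k$ and dividing by $(n/2)K$ gives order $\rho c^2 d/\sqrt n$, i.e.\ $L_\rho\,c\sqrt{d/n}$ up to constants.

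The delicate part is to verify the chain rule's hypotheses for this bilinear, two-sided, per-task structure. In particular, the off-diagonal pairs $(X_l,Y_{l'})$ must still be independent across blocks; the decoupling step guarantees exactly this. Collecting (a), (b), the contraction factor and the concentration term then gives~\eqref{eq:rate}, after letting $\epsilon\to0$.
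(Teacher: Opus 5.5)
Your overall architecture is the paper's: reduction to a uniform deviation, McDiarmid with bounded differences $B_\rho/(nK)$, Hoeffding's permutation representation to pass to $n/2$ independent blocks per task, and Maurer's chain rule to separate the dictionaries from the heads. Your deviations from the paper are harmless:
\begin{itemize}
\item you use a two-sided supremum instead of a one-sided supremum plus a deviation at a fixed $\bar h$;
\item you put the diagonal terms into the same block process, instead of a separate V-part with $m=n$;
\item you remove $t\mapsto t^2$ by a Sudakov--Fernique contraction before the chain rule. The paper instead keeps $\varphi(t)=t^2+2t$ inside the head class and bounds its $Q$ through $p_{\mbW}^2=\innerp{\mbW\otimes\mbW}{(\mba\mbb^\top)\otimes(\mba\mbb^\top)}$ in \cref{lmm:heads}.
\end{itemize}
The contraction route is arguably lighter, since afterwards only the bilinear class needs analysis.

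The gap is in step (b), which is where the chain rule is actually used. The chain rule does not take as input the complexity of the heads at a fixed (or worst-case) representation. Its head term is $c_2\,Q(\mcG)\sup_h\norm{h(\bar\zeta)}/(m\sqrt K)$, where $Q(\mcG)=\sup_{\mby\neq\mby'}\mdE\sup_{g\in\mcG}\sum_l\gamma_l\big(g(y_l)-g(y'_l)\big)/\norm{\mby-\mby'}$ measures how the head process varies when the representation moves. This holds in \cref{lmm:maurer} and equally in the Gaussian-average version you need after symmetrizing yourself. A bound on $\sup_{\mby}\mdE\sup_{\mbW}\sum_l\gamma_l\,g_{\mbW}(y_l)$ does not control $Q(\mcG)$. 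So (a) and (b) together do not supply the inputs of the chain rule, and this is precisely the ``delicate part'' you leave open. The missing computation is short. By Jensen,
\[
\mdE\sup_{\norm[\mathrm F]{\mbW}\leq\rho}\sum_l\gamma_l\innerp{\mbW}{\mba_l\mbb_l^\top-\tilde\mba_l\tilde\mbb_l^\top}\leq\rho\Big(\sum_l\norm[\mathrm F]{\mba_l\mbb_l^\top-\tilde\mba_l\tilde\mbb_l^\top}^2\Big)^{1/2}\leq\sqrt2\,\rho\, c\sqrt d\,\norm{\mby-\tilde\mby}.
\]
Combined with $\sup_h\norm{h(\bar\zeta)}\lesssim c\sqrt d\sqrt{nK}$, this gives the heads term $L_\rho\,c\sqrt{d/n}$; the $\sqrt K$ in the denominator is what makes this cost per task.

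Your computation in (b) is not wasted. Being uniform in the features, it bounds the extra term $G(\mcG(y_0))/(mK)$ that the chain rule carries when $0\notin\mcH$. The paper disposes of that term only by implicitly assuming $\Phi_\theta\equiv\Psi_\theta\equiv0$ for some $\theta$ (\cref{rmk:maurer_zero}). So (b) complements the $Q$ bound but does not replace it.
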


For neural networks, the statistical error is of order
\[
L_\rho\,\frac{\mathrm{Comp}(\mcF_\Phi)+\mathrm{Comp}(\mcF_\Psi)}{\sqrt{nK}}+L_\rho\,c\sqrt{\frac dn}.
\]
The complexity of the networks, which is typically the dominant term, is paid with the $nK$ samples of all tasks, whereas each head only costs $\sqrt{d/n}$. Learning a single task ($K=1$) gives $\mathrm{Comp}(\mcF)/\sqrt n$: sharing the representation across $K$ tasks divides the dominant term by $\sqrt K$. This is the operator counterpart of the benefit of multi-task representation learning of \citet{maurer2016benefit}. The price for sharing is the approximation error $\mcA_K$, which is small only when the tasks share their dominant singular spaces, i.e.\ under (A3). As in \citet{kostic2024ncp}, the optimization error is not analyzed.

Combining \cref{thm:rate} with \cref{prp:approx,prp:new_task} gives the transfer guarantee: with probability at least $1-\delta$,
\[
\eta_{\mathrm{new}}^2(\widehat\theta)\leq C_{\mathrm{new}}\cdot\mathrm{RHS}\eqref{eq:rate},
\]
where $\mathrm{RHS}\eqref{eq:rate}$ is the right-hand side of~\eqref{eq:rate}. The error of T-CMO on a new task thus decreases with the total number of training samples $nK$, up to the approximation terms.

\subsection{Proof of \cref{prp:approx,prp:new_task}}
 
\begin{lmm}\label{lmm:projection}
Let $\msS_1\colon\mdR^d\to\lxk$ and $\msS_2\colon\mdR^d\to\lyk$ be linear, with $P_1,P_2$ the orthogonal projections onto their ranges. For every Hilbert--Schmidt operator $\msD$,
$\min_{\mbM}\norm[\mathrm{HS}]{\msD-\msS_1\mbM\msS_2^*}=\norm[\mathrm{HS}]{\msD-P_1\msD P_2}$, attained at $\mbM^*=(\msS_1^*\msS_1)^+\msS_1^*\msD\msS_2(\msS_2^*\msS_2)^+$. Moreover,
\[
\norm[\mathrm{HS}]{\msD-P_1\msD P_2}^2=\norm[\mathrm{HS}]{(I-P_1)\msD}^2+\norm[\mathrm{HS}]{P_1\msD(I-P_2)}^2=\norm[\mathrm{HS}]{\msD(I-P_2)}^2+\norm[\mathrm{HS}]{(I-P_1)\msD P_2}^2 .
\]
\end{lmm}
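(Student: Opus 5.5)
The argument is a Hilbert--Schmidt version of least squares with two projections. First reduce everything to the finite-rank operators $\msS_1\mbM\msS_2^*$. Their range lies in $\mathrm{ran}\,\msS_1$, and their adjoint has range in $\mathrm{ran}\,\msS_2$, so they are exactly the operators $T$ with $T=P_1TP_2$. Conversely, every operator of the form $P_1 X P_2$ with $X$ bounded can be written as $\msS_1\mbM\msS_2^*$. To see this, take $\mbM=\msS_1^+X(\msS_2^*)^+$ and use $\msS_1\msS_1^+=P_1$ and $(\msS_2^*)^+\msS_2^*=P_2$; both identities hold because the ranges are finite dimensional and hence closed.

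Hence the minimization is over the closed subspace $\mathcal V=\{P_1TP_2\}$ of the Hilbert space of Hilbert--Schmidt operators. The map $\Pi(T)=P_1TP_2$ is idempotent. It is also self-adjoint for the HS inner product, since
\[
\innerp{P_1TP_2}{S}_{\mathrm{HS}}=\Tr(P_2T^*P_1S)=\innerp{T}{P_1SP_2}_{\mathrm{HS}}.
\]
So $\Pi$ is the orthogonal projection onto $\mathcal V$, and the minimum of $\norm[\mathrm{HS}]{\msD-T}$ over $T\in\mathcal V$ is $\norm[\mathrm{HS}]{\msD-P_1\msD P_2}$. This gives the first claim.

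For the minimizer, I would check that $\mbM^*$ satisfies $\msS_1\mbM^*\msS_2^*=P_1\msD P_2$. This follows from the pseudo-inverse identities $\msS_1(\msS_1^*\msS_1)^+\msS_1^*=P_1$ and $\msS_2(\msS_2^*\msS_2)^+\msS_2^*=P_2$. Both can be verified through the SVD of $\msS_i$, a finite-rank map from $\mdR^d$, or by noting that the left side is self-adjoint, idempotent, and has the same range as $\msS_i$. The Gram-matrix formula is the same thing written in coordinates. I expect this step to need the most care: with redundant dictionaries, $\msS_i^*\msS_i$ is singular, and the argument must use the Moore--Penrose identities rather than inverses.

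For the Pythagorean decompositions, write
\[
\msD-P_1\msD P_2=(I-P_1)\msD+P_1\msD(I-P_2).
\]
The two terms are HS-orthogonal, because
\[
\innerp{(I-P_1)\msD}{P_1\msD(I-P_2)}_{\mathrm{HS}}=\Tr\big(\msD^*(I-P_1)P_1\msD(I-P_2)\big)=0.
\]
This gives the first identity. The second follows symmetrically from the split
\[
\msD-P_1\msD P_2=\msD(I-P_2)+(I-P_1)\msD P_2,
\]
whose cross term contains $(I-P_2)P_2=0$ after cycling the trace. All traces are finite because the operators involved are Hilbert--Schmidt.
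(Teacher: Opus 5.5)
Your proposal is correct and follows essentially the same route as the paper. You identify $\{\msS_1\mbM\msS_2^*\}$ with $\{T: T=P_1TP_2\}$, note that $T\mapsto P_1TP_2$ is idempotent and HS-self-adjoint (hence the orthogonal projection), read off the minimizer from $\msS_i(\msS_i^*\msS_i)^+\msS_i^*=P_i$, and get both identities from the same two orthogonal splittings, while spelling out details the paper leaves implicit: the explicit preimage $\mbM=\msS_1^+X(\msS_2^*)^+$, the trace computation for self-adjointness, and the vanishing cross terms.
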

\begin{proof}
Since $P_i=\msS_i(\msS_i^*\msS_i)^+\msS_i^*$, the set $\{\msS_1\mbM\msS_2^*\}$ equals $\{T:T=P_1TP_2\}$. The map $T\mapsto P_1TP_2$ is idempotent and self-adjoint for the Hilbert--Schmidt inner product, hence it is the orthogonal projection onto this set, and the closest point to $\msD$ is $P_1\msD P_2$, which corresponds to $\mbM^*$. The last identities follow from $\msD-P_1\msD P_2=(I-P_1)\msD+P_1\msD(I-P_2)=\msD(I-P_2)+(I-P_1)\msD P_2$, where in both cases the two terms are orthogonal for the Hilbert--Schmidt inner product.
\end{proof}
 
\begin{proof}[Proof of \cref{prp:approx}]
(i) Apply \cref{lmm:projection} with $\msS_1=\msS_{\Phic}$ and $\msS_2=\msS_{\Psic}$, for which $\msS_1^*\msS_1=\mbG^{(k)}_{\Phi_\theta}$, $\msS_2^*\msS_2=\mbG^{(k)}_{\Psi_\theta}$ and $\msS_1^*\dmok\msS_2=\mbC^{(k)}_{\Phi_\theta\Psi_\theta}$.
 
(ii) Let $T:=\dmokt$, so that $\mcE_k(h)=\norm[\mathrm{HS}]{T-\dmok}^2$ by~\eqref{eq:risk}. Since $T$ has rank at most $r_k$, the Eckart--Young--Mirsky theorem gives $\mcE_k(h)\geq\tau_k^2$, with equality when $q_{h_k}$ is the kernel of $\dmokr$. Moreover, let $\Pi$ denote the projection onto the orthogonal of the constant functions (in $\lxk$ or $\lyk$). Since $\dmok=\Pi\dmok\Pi$, we have $\norm[\mathrm{HS}]{T-\dmok}\geq\norm[\mathrm{HS}]{\Pi T\Pi-\dmok}$, and $\Pi T\Pi=\msS_{\Phic}\mbA^{(k)}\mbB^{(k)\top}\msS_{\Psic}^*$ is of the form in~\eqref{eq:eta}. Hence $\eta_k^2(\theta)\leq\mcE_k(h)$.
\end{proof}

\begin{proof}[Proof of \cref{prp:new_task}]

Fix $\theta$. For a task $k$ (training or new), let $P_k$ and $P'_k$ be the projections of \cref{prp:approx} in $\lxk$ and $\lyk$, and define the positive semi-definite matrices
\[
\mbE_k:=\msS_{\Phi^\star_{c,k}}^*(I-P_k)\msS_{\Phi^\star_{c,k}},\qquad \mbF_k:=\msS_{\Psi^\star_{c,k}}^*(I-P'_k)\msS_{\Psi^\star_{c,k}}\in\mdR^{d^\star\times d^\star},
\]
so that $\mba^\top\mbE_k\mba=\norm[\lxk]{(I-P_k)\,\mba^\top\Phi^\star_{c,k}}^2$ is the residual of the regression onto the learned dictionary of $\mba^\top\Phi^\star_{c,k}$, the centered version under $\mu_k$ of the shared function $\mba^\top\Phi^\star$, and similarly for $\mbF_k$. Since the centered features $\Phi^{(k)}_{\theta,c}$ are orthogonal to constants, $P_k$ coincides on centered functions with the projection onto $\spn\{\Phi_\theta,1\}$; as $\mba^\top\Phi^\star$ and $\mba^\top\Phi^\star_{c,k}$ differ by a constant,
\[
\mba^\top\mbE_k\mba=\mathrm{dist}_{\lxk}\big(\mba^\top\Phi^\star,\,\spn\{\Phi_\theta,1\}\big)^2,
\]
where neither the function nor the subspace depends on $k$; only the norm does. The same holds for $\mbF_k$.

\emph{Step 1 (training tasks).} By~\eqref{eq:shared_repr}, $\dmok=\msS_{\Phi^\star_{c,k}}\mbM_k\msS_{\Psi^\star_{c,k}}^*$, so that
\[
\norm[\mathrm{HS}]{(I-P_k)\dmok}^2=\Tr\big(\mbE_k\,\mbM_k\mbG^{(k)}_{\Psi^\star}\mbM_k^\top\big),\qquad \norm[\mathrm{HS}]{\dmok(I-P'_k)}^2=\Tr\big(\mbF_k\,\mbM_k^\top\mbG^{(k)}_{\Phi^\star}\mbM_k\big),
\]
and both are at most $\eta_k^2(\theta)$ by \cref{lmm:projection} and \cref{prp:approx}(i).
 
\emph{Step 2 (change of task).} For every $f$, the projection of $f$ in $\lxk$ belongs to $\spn\{\Phi_\theta,1\}$, so its distance to this space in $L^2_{\mu_{\mathrm{new}}}$ is at most $\sqrt{\beta_\mu}$ times its distance in $\lxk$. Hence $\mbE_{\mathrm{new}}\preceq\beta_\mu\mbE_k$ for every $k$, and similarly $\mbF_{\mathrm{new}}\preceq\beta_\nu\mbF_k$. Using the definition of $\lambda_\Phi$ and Step 1,
\[
\lambda_\Phi\Tr(\mbE_{\mathrm{new}})\leq\frac1K\sum_k\Tr\big(\mbE_{\mathrm{new}}\mbM_k\mbG^{(k)}_{\Psi^\star}\mbM_k^\top\big)\leq\frac{\beta_\mu}K\sum_k\Tr\big(\mbE_k\mbM_k\mbG^{(k)}_{\Psi^\star}\mbM_k^\top\big)\leq\frac{\beta_\mu}K\sum_k\eta_k^2(\theta),
\]
and similarly $\lambda_\Psi\Tr(\mbF_{\mathrm{new}})\leq\frac{\beta_\nu}K\sum_k\eta_k^2(\theta)$.

\emph{Step 3 (new task).} By \cref{prp:approx}(i) and \cref{lmm:projection},
$\eta^2_{\mathrm{new}}(\theta)\leq\norm[\mathrm{HS}]{(I-P_{\mathrm{new}})\dmon}^2+\norm[\mathrm{HS}]{\dmon(I-P'_{\mathrm{new}})}^2$, and, as in Step 1,
\[
\norm[\mathrm{HS}]{(I-P_{\mathrm{new}})\dmon}^2=\Tr\big(\mbE_{\mathrm{new}}\mbM_{\mathrm{new}}\mbG^{(\mathrm{new})}_{\Psi^\star}\mbM_{\mathrm{new}}^\top\big)\leq\sigma_\Phi\Tr(\mbE_{\mathrm{new}}),
\]
and $\norm[\mathrm{HS}]{\dmon(I-P'_{\mathrm{new}})}^2\leq\sigma_\Psi\Tr(\mbF_{\mathrm{new}})$. We use Step 2 to conclude.

\end{proof}

\subsection{Proof of \cref{thm:rate}}

We use two tools. The first is the chain rule of \citet{maurer2016benefit}, which separates the complexity of a shared representation from the one of the task-specific functions.

\begin{lmm}[\citealp{maurer2016benefit}, Theorem~13]\label{lmm:maurer}
Let $\mcH$ be a class of maps $\mcZ\to\mdR^D$ with $0\in\mcH$ and with values in the ball of radius $R$, and let $\mcG$ be a class of functions $\mdR^D\to\mdR$ with Lipschitz constant at most $L$ on that ball and such that $g(0)=0$ for all $g\in\mcG$. For $k\in[K]$, let $\zeta_{k1},\ldots,\zeta_{km}$ be i.i.d.\ from $P_k$, independent across $k$, and write $\bar\zeta:=(\zeta_{kl})_{k,l}$,
\[
G\big(\mcH(\bar\zeta)\big):=\mdE_\gamma\sup_{h\in\mcH}\sum_{k,l}\innerp{\gamma_{kl}}{h(\zeta_{kl})},\qquad
Q(\mcG):=\sup_{\mby\neq\mby'\in(\mdR^D)^m}\frac{\mdE\sup_{g\in\mcG}\sum_{l=1}^m\gamma_l\big(g(y_l)-g(y'_l)\big)}{\norm{\mby-\mby'}},
\]
where the $\gamma_{kl}$ are i.i.d.\ standard Gaussian vectors of $\mdR^D$ and the $\gamma_l$ are i.i.d.\ standard Gaussian variables. Then there exist absolute constants $c_1,c_2>0$ such that
\begin{align*}
&\mdE\sup_{h\in\mcH,\,g_1,\ldots,g_K\in\mcG}\frac1K\sum_{k=1}^K\Big(\mdE_{\zeta\sim P_k}\big[g_k(h(\zeta))\big]-\frac1m\sum_{l=1}^mg_k\big(h(\zeta_{kl})\big)\Big)\\
&\hspace{6cm}\leq\frac{c_1L\,\mdE\,G\big(\mcH(\bar\zeta)\big)}{mK}+\frac{c_2\,Q(\mcG)\,\sup_{h\in\mcH}\norm{h(\bar\zeta)}}{m\sqrt K},
\end{align*}
where $\norm{h(\bar\zeta)}$ denotes the Euclidean norm of the vector $\big(h(\zeta_{kl})\big)_{k,l}$.
\end{lmm}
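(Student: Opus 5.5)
The plan is to reduce the statement to Maurer's Gaussian chain rule for composite function classes (\citealp{maurer2016benefit}, Theorem~2). I would treat that chain rule as a black box, so the work here is the reduction, not the underlying Gaussian-process comparison.

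\emph{Step 1 (symmetrization).} I would introduce an independent ghost sample and symmetrize. Because Rademacher averages are bounded by $\sqrt{\pi/2}$ times Gaussian averages, this gives
\[
\mdE\sup_{h,g_1,\ldots,g_K}\frac1K\sum_{k}\Big(\mdE_{P_k}[g_k(h(\zeta))]-\frac1m\sum_l g_k(h(\zeta_{kl}))\Big)\leq\frac{\sqrt{2\pi}}{mK}\,\mdE\,\mdE_\gamma\sup_{h,\,g_1,\ldots,g_K}\sum_{k,l}\gamma_{kl}\,g_k\big(h(\zeta_{kl})\big),
\]
where the $\gamma_{kl}$ are now scalar Gaussians.

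\emph{Step 2 (composite class).} I would view the inner Gaussian average as a Gaussian average of a composite class $\mcF\circ\mcH(\bar\zeta)$. Here $\mcH(\bar\zeta)=\{(h(\zeta_{kl}))_{k,l}\}\subset(\mdR^D)^{mK}$, and $\mcF$ is the class of maps $\mby=(y_{kl})\mapsto(g_k(y_{kl}))_{k,l}$ with $g_1,\ldots,g_K\in\mcG$. The chain rule states that for any reference point $\mby_0\in\mcH(\bar\zeta)$,
\[
G\big(\mcF(\mcH(\bar\zeta))\big)\leq c_1'\,L(\mcF)\,G(\mcH(\bar\zeta))+c_2'\,\mathrm{diam}(\mcH(\bar\zeta))\,Q(\mcF)+G(\mcF(\mby_0)).
\]
I would choose $\mby_0=0$, which is allowed since $0\in\mcH$. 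Because $g(0)=0$ for every $g\in\mcG$, the last term vanishes.

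\emph{Step 3 (the three constants).} Each constant is then bounded separately.
\begin{enumerate}
\item[(a)] Each coordinate map $y_{kl}\mapsto g_k(y_{kl})$ is $L$-Lipschitz on the ball of radius $R$. Hence $\mcF$ is $L$-Lipschitz from $(\mdR^D)^{mK}$ to $\mdR^{mK}$ in Euclidean norms.
\item[(b)] The diameter satisfies $\mathrm{diam}(\mcH(\bar\zeta))\leq2\sup_h\norm{h(\bar\zeta)}$.
\item[(c)] For $Q(\mcF)$, the supremum over $(g_1,\ldots,g_K)$ decouples across tasks. By the definition of $Q(\mcG)$ applied to each block $\mby_k=(y_{kl})_l$,
\[
\mdE\sup\sum_k\sum_l\gamma_{kl}\big(g_k(y_{kl})-g_k(y'_{kl})\big)\leq Q(\mcG)\sum_k\norm{\mby_k-\mby'_k}.
\]
Cauchy--Schwarz then gives $Q(\mcF)\leq\sqrt K\,Q(\mcG)$.
\end{enumerate}

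\emph{Step 4 (assembly).} Inserting these bounds into Step~2 and dividing by $mK$ as in Step~1 gives
\[
\frac{c_1L\,\mdE\,G(\mcH(\bar\zeta))}{mK}+\frac{c_2\,Q(\mcG)\sqrt K\sup_h\norm{h(\bar\zeta)}}{mK}.
\]
The second term simplifies to the claimed $\frac{c_2\,Q(\mcG)\sup_h\norm{h(\bar\zeta)}}{m\sqrt K}$. The constants $c_1,c_2$ absorb $\sqrt{2\pi}$, $c_1'$, $c_2'$ and the factor $2$.

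The main obstacle is the chain rule in Step~2 itself. Its proof compares the process $\sum\gamma_{kl}f(h(\zeta_{kl}))$ with $L\sum\innerp{\gamma_{kl}}{h(\zeta_{kl})}$ plus a term governed by $Q$. This comparison is not a direct Slepian argument, because the increments of the composite process depend on both $f$ and $h$. Maurer handles it with Talagrand's majorizing-measure theorem, which is the source of the unspecified absolute constants. If I had to re-prove it, I would follow that generic-chaining argument rather than try to obtain explicit constants.
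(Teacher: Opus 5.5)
Your reduction is correct and follows essentially the paper's route. The paper simply invokes Maurer et al.'s Theorem~13, whose proof is exactly your argument: symmetrization, then the Gaussian chain rule (their Theorem~12, not Theorem~2) applied to the product class, using $L(\mathcal F)\le L$, diameter at most $2\sup_h\|h(\bar\zeta)\|$, $Q(\mathcal F)\le\sqrt K\,Q(\mathcal G)$, and the reference point $y_0=0$, which the paper's Remark identifies as the role of $0\in\mathcal H$ and $g(0)=0$. One small precision: after the outer expectation over $\bar\zeta$ your second term is really $\mathbb{E}\sup_h\|h(\bar\zeta)\|$ (the statement's form without expectation is slightly loose), which is harmless because it is bounded deterministically by $R\sqrt{mK}$, and that deterministic bound is what the paper uses.
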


\begin{rmk}\label{rmk:maurer_zero}
The assumptions $0\in\mcH$ and $g(0)=0$ make the term $G(\mcG(y_0))$ of the chain rule of \citet[Theorem~12]{maurer2016benefit} vanish at $y_0=0$, which is how \cref{lmm:maurer} is stated. They are not essential (see Remark~1 of \citealp{maurer2016benefit}): for general classes one simply adds $\inf_{y_0\in\mcH(\bar\zeta)}\mdE\,G(\mcG(y_0))/(mK)$ to the bound. In the proof of \cref{thm:rate} below, $g(0)=0$ holds for every head, and $0\in\mcH$ amounts to assuming that $\Phi_\theta\equiv0$ and $\Psi_\theta\equiv0$ for some $\theta\in\Theta$.
\end{rmk}

The following elementary lemma collects the two quantities that \cref{lmm:maurer} requires for the bilinear heads of \method.

\begin{lmm}\label{lmm:heads}
Let $r,\rho>0$ and $B_r:=\{\mba\in\mdR^d:\norm{\mba}\leq r\}$. For $\norm[\mathrm F]{\mbW}\leq\rho$ set $p_{\mbW}(\mba,\mbb):=\mba^\top\mbW\mbb$, so that $\abs{p_{\mbW}}\leq\rho r^2$ on $B_r\times B_r$. On $B_r\times B_r$, the classes
\[
\mcG_{\mathrm{lin}}:=\big\{p_{\mbW}:\norm[\mathrm F]{\mbW}\leq\rho\big\},\qquad
\mcG_{\mathrm{sq}}:=\big\{p_{\mbW}^2:\norm[\mathrm F]{\mbW}\leq\rho\big\}
\]
have Lipschitz constant and $Q$ (as defined in \cref{lmm:maurer}) at most $\sqrt2\rho r$ and $2\sqrt2\rho^2r^3$, respectively. Moreover, if a class $\mcG$ on $B_r^{4}$ is obtained by averaging two members of such a class over disjoint pairs of coordinates, i.e.\ $g(\mba,\mbb,\mba',\mbb')=\tfrac12\big(\phi(\mba,\mbb)+\phi(\mba',\mbb')\big)$ with $\phi$ ranging over the class, then the same bounds hold for $\mcG$.
\end{lmm}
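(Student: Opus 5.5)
The plan is to treat the two classes separately, using elementary Cauchy--Schwarz and Frobenius estimates, and then to handle the averaged class through the linearity of its definition.

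\emph{Lipschitz constants.} For $\mcG_{\mathrm{lin}}$, write
\[
p_{\mbW}(\mba,\mbb)-p_{\mbW}(\mba',\mbb')=(\mba-\mba')^\top\mbW\mbb+\mba'^\top\mbW(\mbb-\mbb').
\]
Since $\norm[\mathrm{op}]{\mbW}\leq\norm[\mathrm F]{\mbW}\leq\rho$ and all vectors lie in $B_r$, this is at most $\rho r(\norm{\mba-\mba'}+\norm{\mbb-\mbb'})\leq\sqrt2\rho r\norm{(\mba,\mbb)-(\mba',\mbb')}$. For $\mcG_{\mathrm{sq}}$, use $p^2-p'^2=(p+p')(p-p')$ with $\abs{p+p'}\leq2\rho r^2$, which gives the constant $2\sqrt2\rho^2r^3$.

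\emph{The quantity $Q$.} The key observation is that both classes are linear in a parameter ranging over a Euclidean ball. For $\mcG_{\mathrm{lin}}$,
\[
\sum_l\gamma_l\big(p_{\mbW}(y_l)-p_{\mbW}(y'_l)\big)=\innerp{\mbW}{\sum_l\gamma_l(\mba_l\mbb_l^\top-\mba'_l\mbb_l'^\top)}_{\mathrm F}.
\]
Taking the supremum over $\norm[\mathrm F]{\mbW}\leq\rho$ yields $\rho$ times the Frobenius norm of the Gaussian sum. By Jensen, its expectation is at most $\rho\big(\sum_l\norm[\mathrm F]{\mba_l\mbb_l^\top-\mba'_l\mbb_l'^\top}^2\big)^{1/2}$. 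The same two-term splitting as above bounds each Frobenius difference by $\sqrt2 r\norm{y_l-y'_l}$, giving $Q\leq\sqrt2\rho r$.

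The step I expect to be the main obstacle is $\mcG_{\mathrm{sq}}$, because $p_{\mbW}^2$ is quadratic in $\mbW$. I would linearise it by writing $p_{\mbW}(\mba,\mbb)^2=\innerp{\mbW\otimes\mbW}{(\mba\mbb^\top)\otimes(\mba\mbb^\top)}$, noting that $\norm[\mathrm F]{\mbW\otimes\mbW}\leq\rho^2$. The same Jensen argument then applies, provided we bound the tensor differences. Using
\[
x\otimes x-x'\otimes x'=(x-x')\otimes x+x'\otimes(x-x'),
\qquad \norm[\mathrm F]{x}\leq r^2,
\]
we get $\norm[\mathrm F]{x\otimes x-x'\otimes x'}\leq2r^2\norm[\mathrm F]{x-x'}\leq2\sqrt2 r^3\norm{y_l-y'_l}$. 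This gives $Q\leq2\sqrt2\rho^2r^3$, matching the Lipschitz constant.

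\emph{Averaged classes.} For $g=\tfrac12(\phi(\mba,\mbb)+\phi(\mba',\mbb'))$, the Lipschitz constant follows from $\tfrac12(\norm{u}+\norm{v})\leq\norm{(u,v)}/\sqrt2\leq\norm{(u,v)}$. For $Q$, the feature map becomes $\tfrac12(x+x')$ with the same parameter $\mbW$ (or $\mbW\otimes\mbW$), so the Jensen argument goes through with the feature differences averaged. The triangle inequality together with $\tfrac12(\norm{u}+\norm{v})\leq\norm{(u,v)}$ then preserves both bounds.
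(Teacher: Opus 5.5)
Your proposal is correct and follows the paper's proof essentially step for step. It uses the same two-term splitting for the Lipschitz constants, the same linearisation of $p_{\mbW}$ and $p_{\mbW}^2$ in $\mbW$ and $\mbW\otimes\mbW$ followed by Jensen's inequality, and the same product bound $2r^2\cdot\sqrt2 r$ on the tensor differences. The only minor deviation is in the averaging step: you re-run the linearised Jensen bound on the averaged feature $\tfrac12(x+x')$. The paper instead splits the Gaussian sum into its two blocks and uses subadditivity of the supremum, which needs only the constants $L_0,Q_0$ of the underlying class and not its linear structure.
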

\begin{proof}
\emph{Lipschitz constants.} For $\norm{\mba},\norm{\mbb},\norm{\tilde\mba},\norm{\tilde\mbb}\leq r$,
\[
\abs{p_{\mbW}(\mba,\mbb)-p_{\mbW}(\tilde\mba,\tilde\mbb)}\leq\abs{(\mba-\tilde\mba)^\top\mbW\mbb}+\abs{\tilde\mba^\top\mbW(\mbb-\tilde\mbb)}\leq\rho r\big(\norm{\mba-\tilde\mba}+\norm{\mbb-\tilde\mbb}\big)\leq\sqrt2\rho r\norm{(\mba,\mbb)-(\tilde\mba,\tilde\mbb)},
\]
using $\norm[\mathrm{op}]{\mbW}\leq\norm[\mathrm F]{\mbW}$ and Cauchy--Schwarz. Since $\abs{p_{\mbW}}\leq\rho r^2$, we get $\abs{p_{\mbW}^2-p_{\mbW}^2(\tilde\cdot)}\leq2\rho r^2\abs{p_{\mbW}-p_{\mbW}(\tilde\cdot)}\leq2\sqrt2\rho^2r^3\norm{(\mba,\mbb)-(\tilde\mba,\tilde\mbb)}$.

\emph{The quantities $Q$.} Both classes are linear in a matrix parameter:
\[
p_{\mbW}(\mba,\mbb)=\innerp{\mbW}{\mba\mbb^\top},\qquad p_{\mbW}(\mba,\mbb)^2=\innerp{\mbW\otimes\mbW}{(\mba\mbb^\top)\otimes(\mba\mbb^\top)},
\]
with $\norm[\mathrm F]{\mbW\otimes\mbW}=\norm[\mathrm F]{\mbW}^2\leq\rho^2$. For any matrices $\mbM_1,\ldots,\mbM_m$ and any $v>0$, Jensen's inequality gives
\begin{equation}\label{eq:linear_gauss}
\mdE\sup_{\norm[\mathrm F]{\mbV}\leq v}\sum_{l=1}^m\gamma_l\innerp{\mbV}{\mbM_l}\leq v\,\mdE\norm[\mathrm F]{\textstyle\sum_l\gamma_l\mbM_l}\leq v\Big(\sum_{l=1}^m\norm[\mathrm F]{\mbM_l}^2\Big)^{1/2}.
\end{equation}
Applying~\eqref{eq:linear_gauss} with $\mbM_l=\mba_l\mbb_l^\top-\tilde\mba_l\tilde\mbb_l^\top$ and $v=\rho$, and using
$\norm[\mathrm F]{\mba\mbb^\top-\tilde\mba\tilde\mbb^\top}\leq r\big(\norm{\mba-\tilde\mba}+\norm{\mbb-\tilde\mbb}\big)\leq\sqrt2r\norm{(\mba,\mbb)-(\tilde\mba,\tilde\mbb)}$,
gives $Q(\mcG_{\mathrm{lin}})\leq\sqrt2\rho r$. For $\mcG_{\mathrm{sq}}$, write $\mbu:=\mba\otimes\mbb$, so that $\norm{\mbu}\leq r^2$ and $\norm{\mbu-\tilde\mbu}\leq\sqrt2r\norm{(\mba,\mbb)-(\tilde\mba,\tilde\mbb)}$; then
\[
\norm[\mathrm F]{\mbu\mbu^\top-\tilde\mbu\tilde\mbu^\top}\leq\big(\norm{\mbu}+\norm{\tilde\mbu}\big)\norm{\mbu-\tilde\mbu}\leq2\sqrt2r^3\norm{(\mba,\mbb)-(\tilde\mba,\tilde\mbb)},
\]
and~\eqref{eq:linear_gauss} with $v=\rho^2$ gives $Q(\mcG_{\mathrm{sq}})\leq2\sqrt2\rho^2r^3$.

\emph{Averaging two blocks.} If $\phi$ has Lipschitz constant $L_0$ and the class of such $\phi$ has $Q\leq Q_0$, then, writing $\mbu=(\mbu_1,\mbu_2)$ for the two blocks, $\abs{g(\mbu)-g(\tilde\mbu)}\leq\frac{L_0}{2}(\norm{\mbu_1-\tilde\mbu_1}+\norm{\mbu_2-\tilde\mbu_2})\leq\frac{L_0}{\sqrt2}\norm{\mbu-\tilde\mbu}$, and likewise
$\mdE\sup_g\sum_l\gamma_l\big(g(\mbu_l)-g(\tilde\mbu_l)\big)\leq\frac{Q_0}{2}\big(\norm{\mbu^{(1)}-\tilde\mbu^{(1)}}+\norm{\mbu^{(2)}-\tilde\mbu^{(2)}}\big)\leq\frac{Q_0}{\sqrt2}\norm{\mbu-\tilde\mbu}$,
where $\mbu^{(j)}$ collects the $j$-th blocks. Both constants therefore only decrease.
\end{proof}

The last tool is Hoeffding's representation of a U-statistic as an average of averages of i.i.d.\ terms \citep[see, e.g.,][Chapter~4]{delapena1999decoupling}. Let $Z_1,\ldots,Z_n$ be i.i.d., let $s$ be a symmetric kernel, and let $\mdS_n$ be the set of permutations of $[n]$. Then
\begin{equation}\label{eq:hoeffding_rep}
\frac{1}{n(n-1)}\sum_{i\neq j}s\big(Z_i,Z_j\big)=\frac{1}{\abs{\mdS_n}}\sum_{\pi\in\mdS_n}\frac2n\sum_{l=1}^{n/2}s\big(Z_{\pi(2l-1)},Z_{\pi(2l)}\big),
\end{equation}
where, for each fixed $\pi$, the $n/2$ terms of the inner average are i.i.d.

\begin{proof}[Proof of \cref{thm:rate}]

\emph{Step 1 (reduction to a uniform deviation).} Let $\bar h$ be a fixed element of $\Theta\times\mcW_\rho^K$, and set
\[
\widehat{\overline{\mcL}}:=\frac1K\sum_{k=1}^K\widehat{\mcL}^{(k)},\qquad
\overline{\mcL}:=\frac1K\sum_{k=1}^K\mcL^{(k)},\qquad
Z:=\sup_{h\in\Theta\times\mcW_\rho^K}\big(\overline{\mcL}(h)-\widehat{\overline{\mcL}}(h)\big).
\]
Since $\mcE_k$ and $\mcL^{(k)}$ differ by a quantity that does not depend on $h$, \eqref{eq:erm} gives
\[
\overline{\mcE}(\widehat h)-\overline{\mcE}(\bar h)=\overline{\mcL}(\widehat h)-\overline{\mcL}(\bar h)\leq Z+\epsilon_{\mathrm{opt}}+\big(\widehat{\overline{\mcL}}(\bar h)-\overline{\mcL}(\bar h)\big).
\]
It remains to bound $Z$ and the deviation at the fixed point $\bar h$.

\emph{Step 2 (Concentration argument).} With $q=q_{h_k}$ and $q_{ij}=q\big(x_i^{(k)},y_j^{(k)}\big)$, the empirical loss of \cref{sec:method} reads
\begin{equation}\label{eq:loss_expansion}
\widehat{\mcL}^{(k)}(h)=\frac{1}{n(n-1)}\sum_{i\neq j}\big(q_{ij}^2+2q_{ij}\big)-\frac2n\sum_{i=1}^nq_{ii},
\end{equation}
whose expectation is
\[
\mdE_{\mu_k\otimes\nu_k}[q^2]+2\mdE_{\mu_k\otimes\nu_k}[q]-2\mdE_{\rho_k}[q]=\norm[\Lk]{q-g_k}^2-\norm[\Lk]{g_k}^2=\mcL^{(k)}(h),
\]
where we used $\mdE_{\rho_k}[q]=\mdE_{\mu_k\otimes\nu_k}[p_kq]$. By \cref{ass:bounded}, $\abs{q_{h_k}}\leq Q_\rho$ for every $h$ and every $(x,y)$, so replacing one sample of task $k$ modifies at most $2(n-1)$ terms of the U-statistic in~\eqref{eq:loss_expansion}, each by at most $2(Q_\rho^2+2Q_\rho)$, and one diagonal term, by at most $2Q_\rho$; hence it modifies $\widehat{\overline{\mcL}}(h)$ by at most $B_\rho/(nK)$, uniformly in $h$. Both $Z$ and $\widehat{\overline{\mcL}}(\bar h)$ are therefore functions of the $nK$ independent samples with bounded differences $B_\rho/(nK)$, and McDiarmid's inequality together with a union bound gives that, with probability at least $1-\delta$,
\[
Z\leq\mdE Z+B_\rho\sqrt{\frac{\log(2\delta^{-1})}{2nK}}
\qquad\text{and}\qquad
\widehat{\overline{\mcL}}(\bar h)-\overline{\mcL}(\bar h)\leq B_\rho\sqrt{\frac{\log(2\delta^{-1})}{2nK}}.
\]

\emph{Step 3 (splitting the empirical process).} By~\eqref{eq:loss_expansion}, $\overline{\mcL}(h)-\widehat{\overline{\mcL}}(h)=\Delta_U(h)+2\Delta_V(h)$, where
\[
\Delta_U(h):=\frac1K\sum_{k=1}^K\big(\mdE U^{(k)}_h-U^{(k)}_h\big),\qquad
\Delta_V(h):=\frac1K\sum_{k=1}^K\big(V^{(k)}_h-\mdE V^{(k)}_h\big),
\]
with $V^{(k)}_h:=\frac1n\sum_iq\big(x^{(k)}_i,y^{(k)}_i\big)$ and $U^{(k)}_h$ the U-statistic of task $k$ with symmetric kernel
\[
s_h\big((x,y),(x',y')\big):=\tfrac12\Big(q(x,y')^2+2q(x,y')+q(x',y)^2+2q(x',y)\Big).
\]
Hence $\mdE Z\leq\mdE\sup_h\Delta_U(h)+2\,\mdE\sup_h\Delta_V(h)$, and we bound the two suprema separately.

\emph{Step 4 (the U-part).} By~\eqref{eq:hoeffding_rep} applied to each task, $\Delta_U(h)$ is, for every $h$, an average over the $K$-tuples of permutations $(\pi_1,\ldots,\pi_K)$ of
\[
\frac{2}{nK}\sum_{k=1}^K\sum_{l=1}^{n/2}\Big(\mdE\big[s_h(\zeta^{\pi_k}_{kl})\big]-s_h(\zeta^{\pi_k}_{kl})\Big),\qquad
\zeta^{\pi}_{kl}:=\big(Z^{(k)}_{\pi(2l-1)},Z^{(k)}_{\pi(2l)}\big).
\]
The supremum over $h$ of an average is at most the average of the suprema, and each term of the latter has, by exchangeability within tasks and independence across tasks, the same distribution as the term obtained with the identity permutations. Hence $\mdE\sup_h\Delta_U(h)$ is at most the expected supremum of the same process over the $nK/2$ independent blocks $\zeta_{kl}:=\big(Z^{(k)}_{2l-1},Z^{(k)}_{2l}\big)$, $l\in[n/2]$, to which we apply \cref{lmm:maurer} with $m=n/2$, the shared representation
\[
h_\theta(\zeta):=\big(\Phi_\theta(x),\Psi_\theta(y'),\Phi_\theta(x'),\Psi_\theta(y)\big)\in\mdR^{4d},\qquad \zeta=\big((x,y),(x',y')\big),
\]
and the task-specific functions
\[
g_{\mbW}(\mba,\mbb,\mba',\mbb'):=\tfrac12\Big(\varphi(\mba^\top\mbW\mbb)+\varphi(\mba'^\top\mbW\mbb')\Big),\qquad \varphi(t):=t^2+2t,\qquad \norm[\mathrm F]{\mbW}\leq\rho,
\]
so that $s_h=g_{\mbW_k}\circ h_\theta$ with $\mbW_k=\mbA^{(k)}\mbB^{(k)\top}$ and $g_{\mbW}(0)=0$. It remains to bound the three quantities entering \cref{lmm:maurer}.

\emph{(a) Range.} By \cref{ass:bounded}, $h_\theta$ takes values in the ball of radius $R=2c\sqrt d$, and each of the four blocks of $h_\theta(\zeta)$ lies in the ball of radius $r:=c\sqrt d$ of $\mdR^d$. Consequently $\sup_h\norm{h(\bar\zeta)}\leq R\sqrt{nK/2}$.

\emph{(b) Lipschitz constant and $Q$.} Since $\varphi(t)=t^2+2t$, \cref{lmm:heads} with $r=c\sqrt d$ gives, for the Lipschitz constant of $g_{\mbW}$ and for $Q(\mcG)$, the common bound
\[
2\sqrt2\rho^2c^3d^{3/2}+2\cdot\sqrt2\rho c\sqrt d=2\sqrt2\,\rho c\sqrt d\,(1+Q_\rho)=2\sqrt2\,L_\rho .
\]

\emph{(c) Gaussian average.} Splitting the Gaussian vector $\gamma_{kl}\in\mdR^{4d}$ into its four blocks, $G(\mcH(\bar\zeta))$ is at most the sum of four Gaussian averages, two of them over the $nK/2$ inputs $x$ drawn from the $\mu_k$ and two over the $nK/2$ outputs $y$ drawn from the $\nu_k$, $n/2$ per task. By the definition~\eqref{eq:gauss},
\[
\mdE\,G\big(\mcH(\bar\zeta)\big)\leq2\cdot\frac{nK}{2}\Big(\mathfrak G_{n/2,K}(\mcF_\Phi)+\mathfrak G_{n/2,K}(\mcF_\Psi)\Big).
\]

Substituting (a), (b) and (c) into \cref{lmm:maurer} with $m=n/2$ yields
\[
\mdE\sup_h\Delta_U(h)\leq4\sqrt2\,c_1L_\rho\Big(\mathfrak G_{n/2,K}(\mcF_\Phi)+\mathfrak G_{n/2,K}(\mcF_\Psi)\Big)+8c_2L_\rho\,c\sqrt{\frac dn}.
\]

\emph{Step 5 (the V-part).} Here the summands are already i.i.d.\ within each task, so we apply \cref{lmm:maurer} directly with $m=n$, the representation $h_\theta(x,y)=\big(\Phi_\theta(x),\Psi_\theta(y)\big)\in\mdR^{2d}$, which by \cref{ass:bounded} takes values in the ball of radius $R=\sqrt2c\sqrt d$, and the task functions $g_{\mbW}(\mba,\mbb)=-\mba^\top\mbW\mbb$, for which $g_{\mbW}(0)=0$ and, by \cref{lmm:heads} with $r=c\sqrt d$, both the Lipschitz constant and $Q$ are at most $\sqrt2\rho c\sqrt d\leq\sqrt2L_\rho$. As in Step 4(c), $\mdE\,G(\mcH(\bar\zeta))\leq nK\big(\mathfrak G_{n,K}(\mcF_\Phi)+\mathfrak G_{n,K}(\mcF_\Psi)\big)$ and $\sup_h\norm{h(\bar\zeta)}\leq R\sqrt{nK}$. Since a Gaussian average over a sample of size $n$ per task is at most the one over either half of it, $\mathfrak G_{n,K}\leq\mathfrak G_{n/2,K}$, and we obtain
\[
\mdE\sup_h\Delta_V(h)\leq\sqrt2\,c_1L_\rho\Big(\mathfrak G_{n/2,K}(\mcF_\Phi)+\mathfrak G_{n/2,K}(\mcF_\Psi)\Big)+2c_2L_\rho\,c\sqrt{\frac dn}.
\]

\emph{Step 6 (conclusion).} Steps 3--5 give $\mdE Z\leq C L_\rho\big[\mathfrak G_{n/2,K}(\mcF_\Phi)+\mathfrak G_{n/2,K}(\mcF_\Psi)+c\sqrt{d/n}\big]$ for an absolute constant $C$. Combining with Steps 1 and 2, and taking $\bar h$ such that $\overline{\mcE}(\bar h)$ is arbitrarily close to $\mcA_K$ (the remaining arbitrarily small term being absorbed in $\epsilon_{\mathrm{opt}}$), we obtain~\eqref{eq:rate}.

\end{proof}